\newtheorem{theorem}{\bf Theorem}
\newtheorem{lemma}{\bf Lemma}
\newtheorem{assumption}{\bf Assumption}
\DeclareMathOperator*{\argmax}{argmax}
\newcommand{\norm}[1]{\left\lVert#1\right\rVert}
\newcommand{\inner}[1]{\langle#1\rangle}
\newcommand{\cS}{\mathcal{S}}
\newcommand{\R}{\mathbb{R}}
\newcommand{\E}{\mathbb{E}}
\newcommand{\I}{\mathbb{I}}
\newcommand{\ba}{\boldsymbol{a}}
\newcommand{\bM}{\boldsymbol{M}}
\newcommand{\bw}{\boldsymbol{w}}
\newcommand{\bx}{\boldsymbol{x}}
\newcommand{\btheta}{\boldsymbol{\theta}}
\newcommand{\bSigma}{\boldsymbol{\Sigma}}
\newcommand{\cA}{\mathcal{A}}
\newcommand{\cH}{\mathcal{H}}
\newcommand{\cI}{\mathcal{I}}
\newcommand{\cU}{\mathcal{U}}
\newcommand{\cV}{\mathcal{V}}
\newcommand{\gtclusters}{\textit{ground-truth clusters}}
\title{Federated Contextual Cascading Bandits with Asynchronous Communication and Heterogeneous Users}
\author {
    Hantao Yang\textsuperscript{\rm 1},
    Xutong Liu\textsuperscript{\rm 2}\thanks{Corresponding Author},
    Zhiyong Wang\textsuperscript{\rm 2},
	Hong Xie\textsuperscript{\rm 1},
	John C. S. Lui\textsuperscript{\rm 2},
	Defu Lian\textsuperscript{\rm 1},
	Enhong Chen\textsuperscript{\rm 1}
}
\begin{document}

\maketitle

\begin{abstract}
We study the problem of federated contextual combinatorial cascading bandits, where $|\mathcal{U}|$ agents collaborate under the coordination of a central server to provide tailored recommendations to the $|\mathcal{U}|$ corresponding users. Existing works consider either a synchronous framework, necessitating full agent participation and global synchronization, or assume user homogeneity with identical behaviors. We overcome these limitations by considering (1) federated agents operating in an asynchronous communication paradigm, where no mandatory synchronization is required and all agents communicate independently with the server, (2) heterogeneous user behaviors, where users can be stratified into $J \le |\mathcal{U}|$ latent user clusters, each exhibiting distinct preferences. For this setting, we propose a UCB-type algorithm with delicate communication protocols. Through theoretical analysis, we give sub-linear regret bounds on par with those achieved in the synchronous framework, while incurring only logarithmic communication costs. Empirical evaluation on synthetic and real-world datasets validates our algorithm's superior performance in terms of regrets and communication costs.

\end{abstract}

\section{Introduction}

Contemporary recommendation systems commonly employ ordered lists to present recommended items, which cover applications in diverse domains like news, restaurants, and movies recommendations, as well as search engines \cite{lian2023training,wu2023influence}.  
Recent attention has been directed towards the analysis of user interaction patterns and feedback in such ordered lists, 
leading to the development of a \textit{cascade model} by \citet{craswell2008experimental}. 
In this model, users engage with a list of items sequentially, clicking on the first satisfactory item encountered.  
Following a click, users cease further examination of the list. Learning agent receives feedback which indicates that the items before the clicked one were examined and deemed unsatisfactory, while the user's preference of items \textit{after} the clicked entry remains unknown.  
Despite its apparent simplicity, cascade model has demonstrated efficacy in capturing user behaviors, as highlighted by \citet{chuklin2015click}.

Our study focuses on an online learning variant of the cascade model, 
referred to as \textit{cascading bandits} (CB) \cite{kveton2015cascading, kveton2015combinatorial}. In the CB framework, the learning agent employs 
exploration-exploitation techniques to comprehend user preferences 
over items through interactions.  
Specifically, in each round, the agent recommends an item list, observes click feedback, and receives a reward of 1 if the user clicks on any item (otherwise, a reward of $0$ is received).  
The agent's goal is to maximize cumulative rewards over $T$ rounds (or equivalently minimize the $T$-round regret).

Motivated by large-scale applications with a huge number of items, previous studies \cite{li2016contextual,zong2016cascading} have proposed the \textit{contextual combinatorial cascading bandits} (C$^3$B) to integrate contextual information. This integration is achieved by
adding a linear structure assumption, enhancing the scalability of CB. Later on, \citet{li2018online} extends C$^3$B to accommodate the heterogeneity of users' preferences for items. By assuming that users can be partitioned into $J$ unknown user groups (i.e., clusters), they introduce the CLUB-cascade algorithm, where agents adaptively cluster users into groups, and harness the collaborative influence of these user groups to enhance the performance. 

While the successes of C$^3$B in handling contextual information and user heterogeneity are noteworthy, prior investigations have primarily focused on either the single-agent paradigm \cite{li2016contextual,zong2016cascading} or the synchronous setting which requires simultaneous data synchronization among agents \cite{li2018online}. In real-world scenarios, however, there could be a large number of participating agents, ranging from powerful servers to resource-constrained mobile devices. Furthermore, users can be highly heterogeneous in terms of preferences and arrival times. In this case, the feasibility of managing synchronous communication from all agents becomes questionable. This motivates us to study C$^3$B within an \textit{asynchronous communication framework}, which can accommodate a large number of agents without requiring them to communicate at the same time, and allow agents to serve users with varying time of arrival and heterogenous preferences.

\begin{table}[t]
\small
\centering
	\begin{threeparttable}
    \begin{tabular}{c|c|c|c}
    \hline
        \textbf{Algorithm}&\textbf{Agent}& \textbf{User} & \textbf{Arm}\\
    \hline
     OFUL \shortcite{abbasi2011improved}& Single   & Hom & Single \\
     C$^3$-UCB \shortcite{li2016contextual} & Single & Hom & Cascade \\     
	\hline
     DisLinUCB \shortcite{2019Distributed}  &   Syn & Hom & Single \\
     CLUB-cascade \shortcite{li2018online}  &   Syn & Heter & Cascade\\ 
	\hline
     Async-LinUCB \shortcite{li2021asynchronous} &   Par-Asy & Hom & Single\\
     Async-LinUCB-AM \shortcite{li2021asynchronous}   &   Par-Asy & Heter & Single\\
     Phase-based FCLUB \shortcite{liu2022federated}  &  Par-Asy & Heter & Single\\ 
	\hline
	FedLinUCB \shortcite{he2022simple} &   Asy & Hom & Single\\
    FedC$^3$UCB-H  (Ours) & Asy    & Heter & Cascade\\
	\hline
    \end{tabular}
	\begin{tablenotes}[para, online,flushleft]
	\footnotesize
	\end{tablenotes}
	\end{threeparttable}
\caption{Compared with existing works regarding agent communication (`Syn' is synchronous, `Asy' is asynchronous and `Par' is partially), user model (`Hom' is homogeneous and `Heter' is heterogeneous), and arm selection.
\label{tab:fed_res1}
}	
\end{table}

Our work introduces the first federated contextual combinatorial bandits (FedC$^3$B), a framework wherein a finite set of users
$\mathcal{U}$ are served by $|\mathcal{U}|$ learning agents.  
Each agent is responsible for offering personalized recommendations to a specific user, and collaboration is facilitated through a central server.
Due to the stochastic nature of user arrivals, communication between agents occurs asynchronously, eliminating the need for costly synchronizations and allowing independent communication with the central server.
To account for user heterogeneity, we assume users can be categorized into $J \le |\mathcal{U}|$ unknown clusters, each representing a group of users with similar user behaviors. 
The key challenge, therefore, is to  
accurately and swiftly determine these user clusters to facilitate effective cooperation among agents with similar users, while minimizing cross-contamination of information arising from users with different behaviors. To make the problem even more challenging, asynchronous communication introduces the additional complexity of potential delays and data inconsistencies, which shall be carefully handled to give meaningful regret/communication guarantees.
\subsection{Our Contributions}
To address the aforementioned challenges, this paper makes three key contributions as follows.

\noindent\textbf{Algorithmic Framework.}
We propose the Federated Contextual Combinatorial 
Cascading Upper Confidence Bound Algorithm for Heterogeneous Users (FedC$^3$UCB-H). FedC$^3$UCB-H contains three components: agent-side action selection based on the information received asynchronously, server-side graph-based heterogeneity testing for precise user clustering, and agent-server asynchronous communication strategies to manage potential data inconsistencies. The key challenge of the algorithm design is to manage the data inconsistency during user clustering. Existing matrix determinant-based protocols ~\cite{li2021asynchronous,liu2022federated,he2022simple} fail to achieve this goal due to insufficient communication, so we propose a novel communication protocol that employs a $p_t$-auxiliary protocol in conjunction with the matrix determinant-based protocol. This new protocol effectively controls data inconsistency, ensuring the correct operation of heterogeneity testing and action selection. 

\noindent\textbf{Theoretical Analysis.} 
We prove that FedC$^3$UCB-H achieves a $O(d\sqrt{JKT}\log T)$ regret bound with a communication cost of $O(d|\cU|\log T + \log^2 T)$ communication cost, showing that we use only logarithmic communication cost to achieve regret results on par with the synchronous paradigm that incurs $O(T)$ communication cost \cite{li2018online}. Moreover, our result generalizes the federated linear contextual bandits \cite{he2022simple} with homogeneous users by allowing $J>1$, while matching their regret and communication cost when $J=1$. Our analysis tackles several technical challenges, such as showing that within a short time span and at a low cost of asynchronous communication, each agent will collaborate with the correct counterparts, and managing information gaps for asynchronous collaboration involving multiple arms (cascading arms). 
We believe our proof techniques are novel and may be of independent interest for analyzing related works that involve asynchronous communication, heterogeneous users, and cascading arms.

\noindent\textbf{Empirical Evaluation.}
Finally, we conduct experiments on both synthetic and real data and show the effectiveness of the user clustering procedure and the $p_t$-auxiliary communication, where our FedC$^3$UCB-H achieves superior performance regarding regrets and communication cost.

\begin{table}[t]
\small
\centering
    \begin{tabular}{c|c}
    \hline
         \textbf{Regret} &\textbf{Communication}\\
    \hline
      $O(d\sqrt{T}\log{T})$ & N/A \\
       $O(d\sqrt{KT}\log{T})$ & N/A \\     
	\hline
       $O(d\sqrt{T}\log^2{T})$ & $O(d|\mathcal{U}|^{1.5})$ \\
      $O(d\sqrt{JKT}\log{T})$ & $O(T)$\\ 
	\hline
      $O(d\sqrt{T}\log{T})$ & $O(d|\mathcal{U}|^2\log{T})$\\
     \ $O(d\sqrt{|\mathcal{U}|T}\log{T})$ & $O(d|\mathcal{U}|^2\log{T})$\\
      $O(d|\mathcal{U}|\sqrt{JT}\log^{1.5}{T})$ & $O(d|\mathcal{U}|J\log{T})$\\ 
	\hline
	 $O(d\sqrt{T}\log{T})$ & $O(d|\mathcal{U}|^2\log{T})$\\
     $O(d\sqrt{JKT}\log{T})$ & $O(d|\mathcal{U}|^2\log{T}+\log^2{T})$\\
	\hline
    \end{tabular}
\caption{Each row of \cref{tab:fed_res2} is related to that of \cref{tab:fed_res1}. Our framework generalizes existing works and achieves near-optimal regret with low communication cost.}\label{tab:fed_res2}	
\end{table}

\section{Related Work}

We review and compare with existing algorithms in the area of  contextual linear bandits. We consider three primary dimensions (1) cascading bandit frameworks, (2) user heterogeneity considerations, and (3) distributed/federated agents. Detailed comparisons are summarized in \Cref{tab:fed_res1,tab:fed_res2}. 

\noindent\textbf{Cascading Bandit.} Cascading bandits (CB) and their variants belong to the field of online learning to rank, which has a vast literature and covers a wide range of applications \cite{chuklin2015click,kveton2015cascading,kveton2015combinatorial,li2016contextual,lian2020personalized,vial2022minimax,liu2022batch,liu2023contextual,liu2023variance,choi2023cascading}. CB is first proposed by \citet{kveton2015cascading} and then generalized by \citet{kveton2015combinatorial} to accommodate combinatorial action spaces. To handle large-scale applications, \citet{zong2016cascading} integrates the contextual information and introduces the contextual cascading bandits. Based on the seminal work of linear bandits \cite{abbasi2011improved} (row 1 in \Cref{tab:fed_res1}), which leverages the optimism in the face of uncertainty principle (OFUL), CascadeLinUCB is introduced, yielding a regret bound of $O(d\sqrt{T})$. Their results are generalized by \citet{li2016contextual} to accommodate combinatorial actions,  yielding a regret bound of $O(d\sqrt{KT})$ (row 2 in \Cref{tab:fed_res2}), and is improved to $O(d\sqrt{T})$ quite recently by  \cite{liu2023contextual} and \cite{choi2023cascading}. All these works deal with single-agent scenarios (where no communication is needed) and assume  homogeneous users, leading to techniques that are different and simpler than ours.

\noindent\textbf{Online Clustering of Bandits.} Online clustering of bandits (CLUB) have emerged as a prominent line of works to address user heterogeneity in contextual bandits. The first work is proposed by \citet{gentile2014online}, followed by an extension by \citet{li2018online}, wherein users pull cascading arms (row 4 in \Cref{tab:fed_res1}). Both works, including ours, adopt the key idea of maintaining a user heterogeneity graph to adaptively refine user clusters, resulting in regret bounds that depend on total user groups $J$ instead of individual users $|\cU|$. However, the above two works consider either single-agent setting, or synchronous setting that requires $O(T)$ commmunication to obtain up-to-date user information for decision-making. Our work, our work deviates by focusing on asynchronous multi-agent settings, where managing potential information delays and data inconsistencies are pivotal considerations.

\noindent\textbf{Federated/Distributed Contextual Linear Bandits.} There has been growing interest in bandit learning with multiple agents. Our work belongs to the field of cooperative distributed/federated bandits, in which multiple agents collaborate to solve a linear contextual bandit problem over certain communication networks, e.g., peer-to-peer networks~\cite{korda2016distributed,zhu2021federated,xu2023decentralized} or star-shaped networks~\cite{dubey2020differentially,li2021asynchronous}. Our work aligns with the latter category, where a central server engages with multiple agents. For this setting, \citet{2019Distributed} proposes DisLinUCB algorithm and achieves near-optimal $O(d\sqrt{T})$ in the synchronous setting, where users/agents are homogeneous and mandatory synchronization is required (row 2 in \Cref{tab:fed_res1,tab:fed_res2}). \citet{li2021asynchronous} explores the partially asynchronous setting, where no synchronization is mandated, but agents still do not communicate with the server independently and one agent's upload will trigger other agents' download (row 5 in \Cref{tab:fed_res1}). Notably, their work also considers heterogeneous users but treats them as totally different entities, and thus achieves sub-optimal regret that has a $\sqrt{|\cU|}$ factor (row 6 in \Cref{tab:fed_res2}). Ours improves this to $\sqrt{J}$ regret factor. Later on, \citet{liu2022federated} considers the similarity between users by considering user groups, similar to our approach, yet their communication model remains partially asynchronous, and the regret bound has a $|\cU|$ factor (row 7 in \Cref{tab:fed_res2}). The recent FedLinUCB algorithm by \citet{he2022simple} for homogeneous users in a fully asynchronous setting  (row 8 in \Cref{tab:fed_res1}) provides inspiration for our work.  We generalize their approach to encompass heterogeneous users and cascading arms, yielding matching results when $J=1$ and $K=1$.

\section{Problem Setup}\label{sec:problem_setting}
\textbf{Notations.} Let $n\in \mathbb{N}_+$ be a positive integer. $[n]$ denotes the set $\{1,...,n\}$. For any set $\mathcal{S}$, $|\mathcal{S}|$ denotes the number of elements in $\cS$. For any event $\mathcal{E}$, we use $\I\{\mathcal{E}\}$ to denote the indicator function, where $\I\{\mathcal{E}\}=1$ if $\mathcal{E}$ holds and $\I\{\mathcal{E}\}=0$ otherwise. We use boldface lowercase letters and boldface capitalized letters to represent column vectors and matrices, respectively. For vector norms, $\norm{\bx}_p$ denotes the $\ell_p$ norm of vector $\bx$. For any symmetric positive semi-definite (PSD) matrix $\bM$ (i.e., $\bx^{\top} \bM \bx \ge 0, \forall \bx$), $\norm{\bx}_{\bM}=\sqrt{\bx^{\top} \bM \bx}$ denotes the matrix norm of $\bx$ regarding matrix $\bM$.

\noindent\textbf{Federated Contextual Cascading Bandits.}
In this section, we formulate the setting of ``Federated Contextual Combinatorial Cascading Bandits'' (FedC$^3$B). 
We consider a finite set 
$\mathcal{I} \triangleq \{1,...,I\}$ of 
 \textit{ground items} to be selected with $I\in \mathbb{N}_+$, referred to as \textit{base arms}.
Let $\Pi(\mathcal{I})=\{(a_1, ..., a_k): k \ge 1, a_1,..., a_k \in \mathcal{I}, a_i\neq a_j \text{ for any }i \neq j\}$ be the set of all possible tuples of distinct items selected from $\mathcal{I}$, where we refer each of such tuples as an \textit{action}. For any action $\ba \in \cA$, it involves at most $K$ items, i.e., $\text{len}(\ba)\le K$, where $K\le I$.

In FedC$^3$B, there is a finite set $\mathcal{U}$ 
of users, to be served by $|\mathcal{U}|$ (local) learning agents individually. 
Each learning agent serves one user $u$ in order to provide 
personalized service to that user.\footnote{We also use $u$ to identify the 
agent who serves user $u$. } 
Each user $u\in \mathcal{U}$ is associated with an \textit{unknown} preference vector $\btheta_u\in\mathbb{R}^d$, with $\norm{\btheta_u}_2\leq 1$, 
where $d\in \mathbb{N}_+$. 
To characterize the heterogeneity inherent in the user population, 
we assume users are partitioned into disjoint clusters, and two users share the 
same preference vector, if and only if they belong to the same cluster. 
Specifically, let 
$\mathcal{V}_1,\ldots \mathcal{V}_J$ 
denote the partitioned clusters, 
where $J \in \mathbb{N}_+, J \le |\mathcal{U}|$, 
$\cup_{j \in [J]} \mathcal{V}_j=\mathcal{U}$ 
and $\mathcal{V}_j \cap \mathcal{V}_{j'}=\emptyset,$ 
for $j\neq j'$.  
Moreover, 
$\bm{\theta}_u = \bm{\theta}_{u'}$ if and only if 
there exists $j\in[J]$ such that  
$\{u, u'\} \subseteq \mathcal{V}_j$. 
For the ease of presentation, 
let $\btheta^j$ denote the shared preference vector for cluster $\mathcal{V}_j$.  
These clusters, termed \gtclusters{}, remain \textit{unknown} to the agent. 
Importantly, in the scenario where $J=1$, it is implied that $\mathcal{U}$ constitutes homogeneous users.

We consider a total number of $T\in \mathbb{N}_+$ 
learning rounds.  
In each round $t \in [T]$, a single user identified as $u_t$ arrives
to receive service. The user $u_t$ is presented with a finite set of items $\mathcal{I}_t \subseteq 
\mathcal{I}$ and feasible action set $A_t \subseteq \Pi(\mathcal{I}_t)$. 
Let $\bm{x}_{t,i} \in \mathbb{R}^d$ 
denote the feature vector of 
item $i \in \mathcal{I}_t$, 
where $\norm{\bx_{t,i}}_2\le 1$, which are 
revealed to the agent $u_t$ responsible for user $u_t$.  
The agent serving $u_t$ takes a feasible action  
$\bm{a}_t=(a_{1,t},...,a_{\texttt{len}(\bm{a_t}),t}) \in \mathcal{A}_t$, i.e., recommends 
a list of items $\bm{a}_t$ to the user.\footnote{We omit  superscript $u_t$ for $\bm{a}_t^{(u_t)}$, etc., when contexts are clear.} 
The user $u_t$ checks these items $\ba_t$ in sequence, beginning with the first, clicking on the first attractive item, and stopping checking items after the clicked item. 
We use the Bernoulli random variable $w_{t}(i)\in \{0,1\}$ to indicate whether item $i \in \mathcal{I}_t$
would be clicked or not once checked. The learning agent observes the index of the first clicked item or $+\infty$ (if no item is clicked), defined as
\begin{align}\label{eq:feedback_model}
O_t =\min\{1\le k \le \texttt{len}(\bm{a}_t): 
w_{t}(a_{k,t})=1\}
\end{align}
Here $O_t$ fully determines $w_{t}(a_{k,t})=1-\I\{k < O_t \}$ for $k= 1, ..., \min\{O_t, 
\texttt{len}(\bm{a}_t)\}$. 
Accordingly, we say that item $i$ is observed in round $t$, 
if $O_t < \infty$ and $a_{t,O_t}=i$. 
Since the user clicks are shaped by their personal inclinations, the random vector  
$( w_{t}(i) )_{i \in \mathcal{I}_t}$ is presumed to be independent of users other than $u_t$. 
We assume $w_t(i)$'s across different items are conditionally independent with expectation
\begin{align}
\bar{w}_{t}(i)\triangleq\E[w_t(i)\mid\cH_t^{(u_t)}]=\inner{\btheta_{u_t}, \bm{x}_{t,i}}.
\end{align}
where $\cH^{(u_t)}_{t} $ is the history 
associated with user $u_t$: $\cH^{(u_t)}_{t} \triangleq \{\bx_{t,i}\}_{i \in \mathcal{I}_t}\cup \{\{\bx_{s,i}\}_{i \in \mathcal{I}_t}, \bm{a}_s, O_s, w_s(a_{k,s})\}_{k \le O_s, s<t, u_s=u_t}$.
We make the following assumptions.
\begin{assumption}[Minimum heterogeneity gap]
\label{assumption1}
The gap between any two preference vectors for different \gtclusters{} is at least $\gamma$, $\|\btheta^{j}-\btheta^{j^{\prime}}\|_2\geq \gamma, \forall{j,j^{\prime}\in [J]\,, j\neq j^{\prime}}\,,$
where $\gamma>0$ is an \textit{unknown} positive constant.
\end{assumption}

\begin{assumption}[Random arrival of users]
\label{assumption2}
At each round $t$, a user $u_t$ comes uniformly at random from $\mathcal{U}$ with probability $1/|\mathcal{U}|$, independent of the past rounds.
\end{assumption}

\begin{assumption}[Item regularity]
\label{assumption3}
At each time step $t$, the feature vector $\bx_{t,a}$ of each arm $a\in E$ is drawn independently from a fixed but unknown distribution $\rho$ over $\{\bx\in\R^d:\norm{\bx}_2\leq1\}$, where $\E_{\bx\sim \rho}[\bx \bx^{\top}]$ is full rank with minimal eigenvalue $\lambda_x > 0$. Additionally, at any time $t$, for any fixed unit vector $\btheta \in \R^d$, $(\btheta^{\top}\bx)^2$ has sub-Gaussian tail with variance upper bounded by $\sigma^2$.
\end{assumption} 

\noindent\textbf{Remark. } All these assumptions align with previous works on CB  \cite{gentile2014online,
li2018online,
liu2022federated,wang2023online}.  Notably, Assumption \ref{assumption3} stands as a more practical and less restrictive alternative compared to previous CB works that impose restrictions on the variance upper bound $\sigma^2$. Assumption \ref{assumption2}, our results can easily generalize to arbitrary distributions with minimum arrival probability greater than $p_{\min}>0$.

\noindent\textbf{Learning Objective.}
At round $t$, the reward of each action 
$\bm{a} \in \mathcal{A}_t$ with random vector 
$\bw_t \triangleq (w_t(i))_{i \in \mathcal{I}_t}$ is 
defined as: 
\begin{align}\label{eq:reward_func}\textstyle
f(\bm{a}, \bw_t) 
\triangleq 
1 - \prod_{k=1}^{\texttt{len}(\bm{a})} (1-w_t(a_k)). 
\end{align}
Let $\cH_t=\bigcup_{u \in \mathcal{U}} \cH_t^{u}$ be the total historical information up to time $t$, then by independence assumption, it is easy to verify that the expected reward is 
    $\E[f(\bm{a}, \bw_t)|\cH_t]
    =
    f(\bm{a}, \bar{\bw}_t).$
Let $\bm{a}^\ast_t=\argmax_{\bm{a} \in \mathcal{A}_t} 
f(\bm{a}, \bar{\bw}_t)$ be the optimal action at $t$.
Given the complexity of computing the optimal action $\bm{a}^\ast_t$ for a general feasible action set $\mathcal{A}_t$, even when $\btheta_{u_t}$ is known, we introduce an offline $\alpha$-approximation oracle. This oracle, provided to the agent, generates an action $\widetilde{\bm{a}}$ for any weight vector $\bw \in \R^{I_t}$ such that $f(\widetilde{\bm{a}},\bw)\ge \alpha \max_{\bm{a} \in \mathcal{A}_t} f(\bm{a},\bw)$.
The goal of the agents is to collaboratively minimize the cumulative $\alpha$-approximate regret defined as
\begin{align}\textstyle
    \text{Reg}(T) 
    =
    \E\left[\sum_{t=1}^T 
    \left(\alpha f(\bm{a}^*_t, \bw_t)
    -f(\bm{a}_t, \bw_t)\right)\right],
\end{align}
where the expectation is taken over the user arrival, the contexts, the weights, and the algorithm itself. 

\noindent\textbf{Communication model.} We adopt the server-client communication protocol with $|\mathcal{U}|$ local agents and one central server, where each agent can communicate with the server by uploading and downloading data. Unlike peer-to-peer communication~\cite{korda2016distributed,zhu2021federated}, local agents do not communicate with each other directly. Moreover, we adopt the \textit{asynchronous communication} paradigm: 
(1) there is no mandatory synchronization, (2) the communication between an agent and the server operates independently of other agents, without triggering additional communication. 
Finally, we define the communication cost as 
$\text{Com}(T)=\E[\sum_{t=1}^T\I \{\text{agent } u_t \text{ communications with the server}\}]$.

\begin{table}[t]
\resizebox{0.97\columnwidth}{!}{
\begin{minipage}{\columnwidth}
\center
\begin{tabular}{clclclcl}
\hline\hline
\multicolumn{2}{c|}{Notation} & \multicolumn{6}{c}{Meaning}              \\ \hline
\multicolumn{2}{c|}{$\hat{\btheta}_{u,t},\bm{\Sigma}_{u,t}$} & \multicolumn{6}{c}{Data for agent $u$'s decision}            \\ \hline
\multicolumn{2}{c|}{$\bm{\Sigma}_{u,t}^{loc}, \bm{b}_{u,t}^{loc}, T_{u,t}^{loc}$} & \multicolumn{6}{c}{Local data of agent $u$}            \\ \hline
\multicolumn{2}{c|}{$\bm{\Sigma}_{u,t}^{ser}, \bm{b}_{u,t}^{ser}, T_{u,t}^{ser}$} & \multicolumn{6}{c}{Data stored at the server for agent $u$ }            \\ \hline
\multicolumn{2}{c|}{$\hat{\btheta}_{u,t}^{clu}, \bm{\Sigma}_{u,t}^{clu}, \bm{b}_{u,t}^{clu}$} & \multicolumn{6}{c}{Data of the cluster that contains $u$}             \\ \hline\hline
\end{tabular}
\caption{Notations used in Algorithm \ref{alg:main}, \ref{alg:local} 
 and \ref{alg:server}.}\label{tab:notation}
 \end{minipage}}
\end{table}

\begin{algorithm}[htb] 
\resizebox{0.97\columnwidth}{!}{
\begin{minipage}{\columnwidth}
    \caption{FedC$^3$UCB-H}\label{alg:main} 
    \label{alg:Framwork} 
    \begin{algorithmic}[1] 
    \STATE \textbf{Input:} Communication and deletion thresholds $\alpha_c, \alpha_d>0$, probability $p_t = {3\log{t}}/
    {t}$, regularizer $\lambda$ \\
    \STATE \textbf{Initialize server:} 
    complete graph $G_0=(\mathcal{U}, E_0)$ over agents. $T_{u,0}^{ser}=0$, $\bm{b}_{u,0}^{ser}=\bm{b}_{u,0}^{clu}=\bm{0}_{d\times 1}$, $\hat{\bm{\theta}}_{u,0}^{ser}=\hat{\bm{\theta}}_{u,0}^{clu}=\bm{0}_{d\times 1}$, $\bm{\Sigma}_{u,0}^{ser}=\bm{\Sigma}_{u,0}^{clu}=\bm{0}_{d \times d}$\\
    \STATE \textbf{Initialize agents}: $T_{u,0}^{loc}=0$, $\bm{b}_{u,0}=\bm{b}_{u,0}^{loc}=\bm{0}_{d\times 1}$, $\hat{\bm{\theta}}_{u,0}=\bm{0}_{d\times 1}$, $\bm{\Sigma}_{u,0}=\bm{\Sigma}_{u,0}^{loc}=\bm{0}_{d \times d}$\\
    \FOR{round $t=1,...,T$}
    \STATE User $u_{t}$ arrives to be served \label{line:main_arrive}
	\STATE ComInd $\gets$\texttt{LocalAgent}($t, u_t, p_t, \alpha_c$)\label{line:main_call_sub}
    \IF{ComInd$==$1}
    \STATE Agent $u_t$ sends $\bm{\Sigma}_{u_t,t}^{loc}$\label{line:main_send}, $T_{u_t,t}^{loc}$ and $\bm{b}_{u_t,t}^{loc}$ to server 
	\STATE Run \texttt{Server}($t, u_t, \alpha_d$)
	\STATE Server sends $\bm{\Sigma}^{clu}_{u_t,t+1}$ and $\hat{\bm{\theta}}^{clu}_{u_t,t+1}$ back to agent $u_t$\label{line:server_send}
 \STATE Agent $u_t$ update:  $\bm{\Sigma}^{loc}_{u_t,t+1}=\bm{0}_{d \times d}$, $\bm{b}^{loc}_{u_t,t+1}=\bm{0}_{d \times 1}$, $T_{u_t,t}^{loc}=0$, 
 $\bm{\Sigma}_{u_t,t+1}=\bm{\Sigma}^{clu}_{u_t,t+1}$, $\hat{\bm{\theta}}_{u_t,t+1}=\hat{\bm{\theta}}^{clu}_{u_t,t+1}$
    \ENDIF
    \ENDFOR 
    \end{algorithmic}
    \end{minipage}
}
    \end{algorithm}

\begin{algorithm}[htb] 
\resizebox{0.97\columnwidth}{!}{
\begin{minipage}{\columnwidth}
    \caption{\texttt{LocalAgent}($t, u_t, p_t, \alpha_c$)} 
    \label{alg:local} 
    \begin{algorithmic}[1] 
	\STATE  Receives context $\bm{x}_{t,i}, \forall i\in \mathcal{I}_t$\label{line:local_context}
	\STATE 
    $U_{t}(i) \gets\min\left\{\hat{\bm{\theta}}_{{u_t},t}^{T}\bm{x}_{t,i}+\beta\left \| \bm{x}_{t,i} \right \|_{\bm{\Sigma}_{u_{t},t}^{-1}}, 1\right\}, \forall i\in \mathcal{I}_t$\label{line:local_ucb}
    \STATE $\bm{a}_{t} \gets oracle((U_{t}(i))_{i \in \mathcal{I}_t})$\label{line:local_rank}
    \STATE Play $\bm{a}_{t}$ and observe $O_{t}, w_{t}(a_{k,t})$ for $k \le O_{t}$, and receive reward $f(\bm{a}_{t},\bm{w}_{t})$
    \STATE $T_{u_t,t}^{loc}=T_{u_t,t-1}^{loc}+O_{t}$ \label{line:local_update_start}
    \STATE 
    $\bm{\Sigma}_{u_{t},t}^{loc} = \bm{\Sigma}_{u_{t},t-1}^{loc}+\sum_{k=1}^{O_{t}}\bm{x}_{t,a_{k,t}}\bm{x}_{t,a_{k,t}}^{\top}$, $\bm{b}_{u_{t},t}^{loc} = \bm{b}_{u_{t},t-1}^{loc}+\sum_{k=1}^{O_{t}}w_{t}(a_{k,t})\bm{x}_{t,a_{k,t}}$\label{line:local_update_end}
    \FOR{$u \ne u_t$}
    \STATE $T_{u,t}^{loc}=T_{u,t-1}^{loc}$ 
    \STATE $\bm{\Sigma}^{loc}_{u,t}=\bm{\Sigma}^{loc}_{u,t-1}$, $\bm{b}^{loc}_{u,t}=\bm{b}^{loc}_{u,t-1}$
    \STATE $\bm{\Sigma}_{u,t+1}=\bm{\Sigma}_{u,t}$, $\bm{b}_{u,t+1}=\bm{b}_{u,t}$, $\hat{\bm{\theta}}_{u,t+1}=\hat{\bm{\theta}}_{u,t}$
    \ENDFOR
    \STATE Generate a uniform random number $p\in [0,1]$
    \IF{det($\bm{\Sigma}_{u_{t},t}+\bm{\Sigma}_{u_t,t}^{loc}$) $>(1+\alpha_c)$det($\bm{\Sigma}_{u_{t},t}$) \textbf{or} $p<p_t$} \label{line:local_com_det}
    \STATE {\bf Return:} 1
    \ELSE
    \STATE $\bm{\Sigma}_{u_t,t+1}=\bm{\Sigma}_{u_t,t}$, $\bm{b}_{u_t,t+1}=\bm{b}_{u_t,t}$, $\hat{\bm{\theta}}_{u_t,t+1}=\hat{\bm{\theta}}_{u_t,t}$
    \STATE $G_{t+1}=G_t$
    \STATE {\bf Return:} 0
    \ENDIF
    \end{algorithmic}
    \end{minipage}
}
    \end{algorithm}

\begin{algorithm}[htb] 
\resizebox{0.97\columnwidth}{!}{
\begin{minipage}{\columnwidth}
    \caption{\texttt{Server}($t,u_t,\alpha_d$)} 
    \label{alg:server} 
    \begin{algorithmic}[1]
    \STATE From graph $G_{t}=(\mathcal{U}, E_{t})$,  server identifies the connected component $\mathcal{V}_t$ that user $u_t$ belongs to
    \STATE $\bm{\Sigma}^{clu}_{u_t,t+1}=\lambda I+\sum_{u\in \mathcal{V}_t}\bm{\Sigma}_{u,t}^{ser}$, $\ \bm{b}^{clu}_{u_t,t+1}=\sum_{u\in \mathcal{V}_t}\bm{b}_{u,t}^{ser}$, $\ \hat{\bm{\theta}}^{clu}_{u_t,t+1}=(\bm{\Sigma}_{u_t,t+1}^{clu})^{-1}\bm{b}^{clu}_{u_t,t+1}$\label{line:global_update_start}
    \STATE $T_{u_t,t}^{ser}=T_{u_t,t-1}^{ser}+T_{u_t,t}^{loc}$
    \STATE $\bm{\Sigma}^{ser}_{u_t,t}=\bm{\Sigma}^{ser}_{u_t,t-1}+\bm{\Sigma}^{loc}_{u_t,t}$, $\bm{b}^{ser}_{u_t,t}=\bm{b}^{ser}_{u_t,t-1}+\bm{b}^{loc}_{u_t,t}$, $\hat{\bm{\theta}}^{ser}_{u_t,t}=(\lambda I+\bm{\Sigma}^{ser}_{u_t,t})^{-1}\bm{b}^{ser}_{u_t,t}$\label{line:global_update_end}

     \STATE Reset deletion set $\tilde{E}=\emptyset$
    \FOR{$u \ne u_t$ and $(u,u_t) \in E_t$}
    \STATE $T_{u,t}^{ser}=T_{u,t-1}^{ser}$
    \STATE $\bm{\Sigma}^{ser}_{u,t}=\bm{\Sigma}^{ser}_{u,t-1}$, $\bm{b}^{ser}_{u,t}=\bm{b}^{ser}_{u,t-1}$, $\hat{\bm{\theta}}^{ser}_{u,t}=(\lambda I+\bm{\Sigma}^{ser}_{u,t})^{-1}\bm{b}^{ser}_{u,t}$
    \IF{$\left \|\hat{\bm{\theta}}_{u_t,t}^{ser}-\hat{\bm{\theta}}_{u,t}^{ser} \right\|>\alpha_d(\sqrt{\frac{1+\ln(1+T_{u_t,t}^{ser})}{1+T_{u_t,t}^{ser}}}+\sqrt{\frac{1+\ln(1+T_{u,t}^{ser})}{1+T_{u,t}^{ser}}})$}\label{line:server_check} 
    \STATE  Server puts edge $(u_t,u)$ into deletion set $\tilde{E}$ \label{line:server_delete}
    \ENDIF
	\ENDFOR
	\STATE Update $E_{t+1}=E_t \backslash \tilde{E}$ and obtain new graph $G_{t+1}=(\mathcal{U}, E_{t+1})$\label{line:graph_update} 
    \end{algorithmic}
    \end{minipage}
}
    \end{algorithm}

\section{The Proposed Algorithm}
This section presents our algorithm, named ``Federated Contextual Combinatorial 
Cascading Upper Confidence Bound Algorithm for Heterogeneous Users'' (FedC$^3$UCB-H), 
which is outlined in \cref{alg:main}. 
High levelly, the main task of FedC$^3$UCB-H is to identify which agents 
can collaborate based on user heterogeneity. 
Agents who can collaborate aggregate their data asynchronously to address the agent-side cascading bandit problems. FedC$^3$UCB-H consists of three main components as follows. For simplicity, we summarize the related notations in \cref{tab:notation}.

\noindent\textbf{Agent-side combinatorial action selection.} This component aims to select combinatorial action based on data received from the server. In each round $t \in [T]$, a random user $u_t$ arrives, and the corresponding agent $u_t$ will interact with this user (line \ref{line:main_arrive} in \cref{alg:main}).  
The agent $u_t$ then calls the subroutine \cref{alg:local} (line \ref{line:main_call_sub} in \cref{alg:main}). During this process, $u_t$ receives the feature vector $\bx_{t,i}$ for each item $i \in \cI_t$ (line~\ref{line:local_context} in \cref{alg:local}). Utilizing the estimated preference vector $\hat{\btheta}_{u_t, t}$ and the gram matrix $\bSigma_{u_t,t}$, the agent calculates an optimistic UCB value $U_t^{(u_t)}(i)$ for the true weight $\bar{w}_t(i)$ of each item  $i$ (line \ref{line:local_ucb} in \cref{alg:local}). This UCB value balances exploration and exploitation, with $\beta\norm{\bx_{t,i}}_{\bSigma_{u_t,t}^{-1}}$ accounting for the uncertainty of the estimated weight $\inner{\hat{\btheta}_{u_t, t}, \bx_{t,i}}$ and encouraging more exploration when it is large. 
It is crucial to note that $\hat{\btheta}_{u_t, t}$ and $\bSigma_{u_t,t}$ not only consist of data from user $u_t$ but also from users within the same estimated cluster $\mathcal{V}$ that includes $u_t$, which is identified by the heterogeneity testing at the server to be introduced later. 
Due to the data inconsistency caused by the asynchronous communication, the choice of $\beta$ is carefully designed and theoretically supported by \cref{lem:concentration}.  

After computing UCB values for each arm, the agent employs the computation oracle to generate a ranked list ${\bm a}_t \in \cA_t$ (line \ref{line:local_rank} in \cref{alg:local}).  
The user then scans through this list ${\bm a}_t$, receives the reward 
$f({\bm a}_t,\bw_t)$ (\cref{eq:reward_func}) according to the random weight $\bw_t$, and observes the partial feedback $O_t$ as defined in \cref{eq:feedback_model}. After receiving the feedback, agent $u_t$ updates its local data and stores them in the local buffer (lines \ref{line:local_update_start}-\ref{line:local_update_end} in \cref{alg:local}). Finally, based on certain conditions, $u_t$ evaluates whether it should communicate with the server (line \ref{line:local_com_det} in \cref{alg:local}), which will be covered later in our asynchronous communication protocol.

\noindent\textbf{Server-side user heterogeneity testing.} Different from \citet{2019Distributed,he2022simple} where users are assumed to be homogeneous, the server in our setting must address the user heterogeneity.  
Specifically, the server's task is to partition users into distinct clusters. Users from different clusters are viewed as heterogeneous and should not collaborate with each other. To address this task, 
the server maintains an undirected graph $G_t=(\mathcal{U}, E_t)$ over agents. This graph connects agents via edges if they are estimated to belong to the same cluster.
Initially, $G_0$ is initialized as a complete graph, and it will be updated adaptively based on the uploaded information after each communication. 
At round $t$, if the communication condition in line \ref{line:local_com_det} of \cref{alg:local} is satisfied with communication indicator ComInd $=1$, $u_t$ uploads its local data (line~\ref{line:main_send} in \cref{alg:main}). The server then updates the agent $u_t$'s information (lines~\ref{line:global_update_start}-\ref{line:global_update_end} in \cref{alg:server}). 
Next, the server verifies the estimated heterogeneity between user $u_t$ and other users based on the updated estimation.  
In particular, for each user $u$ connected to $u_t$ via edge $(u_t,u) \in E_{t-1}$, if the gap between her estimated preference vectors $\hat{\btheta}^{ser}_{u,t}$ and $\hat{\btheta}^{ser}_{u_t,t}$ exceeds a certain threshold (line \ref{line:server_check} in \cref{alg:server}), 
the server removes edge $(u_t, u)$ to separate them (line \ref{line:server_delete} in \cref{alg:server}).  This threshold is carefully set to ensure that, with high probability, edges between heterogeneous users (those not in the same \gtclusters{}) are deleted after a short span of time steps. Conversely, edges connecting users within the same \gtclusters{} are retained. After user clusters are correctly identified, our algorithm will leverage collaborative information between homogeneous users while avoiding misuse of information from heterogeneous users to reduce the overall uncertainty.
In line \ref{line:graph_update} of \cref{alg:server}, the server uses graph $G_t$ to identify the connected component $\cV_t$ in which $u_t$ resides, and sends back to $u_t$ the updated collaborative information of $\cV_t$ for future decisions.

\noindent\textbf{Asynchronous communication protocol.}  This protocol addresses the key question of when $u_t$ should communicate with the server to share her information and to keep her updated with the newest information.  Striking the right balance is the main challenge because excessive communication incurs high communication costs, while insufficient communication results in inaccurate estimation for the local agents and prevents the server from updating graph $G_t$ effectively.

The communication protocol involves a two-step verification process to decide whether the agent needs to activate a communication. Inspired by \citet{he2022simple}, the first step is to verify a matrix determinant-based condition (line \ref{line:local_com_det} in \cref{alg:local}). This condition gauges the gap between information accumulated locally and collaborative information shared during the previous communication. If satisfied, it signifies that enough local data have been collected to significantly reduce global model uncertainty. Thus, agent $u_t$ triggers a communication with the server, leading to global model updates (line \ref{line:global_update_start} in \cref{alg:server}). Subsequently, the server provides the latest collaborative information to agent $u_t$ (line \ref{line:server_send} in \cref{alg:main}). Notably, this communication indirectly aids the update of graph $G_t$ (line \ref{line:graph_update} in \cref{alg:server}).

Our communication protocol involves a two-step verification process to decide whether a communication needs to be activated. Inspired by \citet{he2022simple}, the first step is to verify a matrix determinant-based condition (line \ref{line:local_com_det} in \cref{alg:local}). This condition measures the gap between the information accumulated locally and the collaborative information shared during the previous communication.  
If satisfied, it means that enough local data have been collected to significantly reduce global model uncertainty. 
Thus, agent $u_t$ will trigger a communication with the server, leading to the update of the global model at the server (line \ref{line:global_update_start} in \cref{alg:server}). 
Subsequently, the server returns the lastest collaborative information to agent $u_t$ (line \ref{line:server_send} in \cref{alg:main}). Note that as a by-product, this communication will also help to update the graph $G_t$ (line \ref{line:graph_update} in \cref{alg:server}).
However, this by-product communication isn't enough to ensure that the server's heterogeneity testing of all agents is completed in sublinear time, 
which motivates our second condition to increase the number of communications. Specifically, we introduce a certain probability $p_t$ that user $u_t$ 
activates an auxiliary communication at round $t$, as a complement to the matrix determinant-based condition. We refer to this protocol as \textit{$p_t$-auxiliary communication protocol}. The key challenge is to determine suitable $p_t$, and through careful analysis, we set $p_t={3\log t}/{t}$. A deterministic version of this protocol, the force-communication protocol, is also offered for uncertainty-averse applications. Note that we also provide a deterministic version of the $\epsilon$-auxiliary condition, i.e., the force-communication protocol, for uncertainty-averse applications. However, our theory and experiments show that the force-communication may yield worse performance both in regret and communication, whose details and analysis are postponed to the Appendix. In cases where neither the matrix determinant-based condition nor the $p_t$-auxiliary condition is met, communication between the agent and the server is not activated, and local data remain local. For inactive agents, their data also remains unchanged.

\section{Theoretical Analysis}
\begin{theorem}\label{thm:main}
Under Assumptions 1-3 stated in the section of Problem Setup, if we set $\beta$ as in \cref{lem:concentration} and $\tilde{\lambda}_x\triangleq\int_{0}^{\lambda_x} (1-e^{-{(\lambda_x-x)^2}/{2\sigma^2}})^{I} dx$, then the expected
cumulative regret of \cref{alg:main} is upper bounded by
\begin{align}
 \text{Reg}(T) &\le O\big(d\sqrt{(J+|\cU|\alpha_c)(1+|\cU|^2\alpha_c)KT}\log T\notag\\
 &+ |\cU|({d}\gamma^{-2}\tilde{\lambda}_x^{-1}+\tilde{\lambda}^{-2}_x)\log{T}\big).
\end{align} 
The communication cost is bounded by 
$
    \text{Com}(T) \le O\left(d(\left | \mathcal{U} \right |+1/\alpha_c)\log T+ \log^2 T\right)
$
\end{theorem}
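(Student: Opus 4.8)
The plan is to decompose $\text{Reg}(T)$ into a \emph{burn-in} cost paid before the heterogeneity graph stabilizes and a \emph{steady-state} exploration cost once every agent collaborates with exactly its \gtcluster{}, and to bound $\text{Com}(T)$ separately by trigger type. First I would condition on the high-probability event of \cref{lem:concentration}, under which the chosen $\beta$ makes each cluster estimate $\hat{\btheta}^{clu}_{u,t}$ lie in the $\beta$-ellipsoid of the true $\btheta_u$ despite the staleness from asynchronous uploads; this gives $U_t(i)\ge\bar w_t(i)$ for all observed items. Combined with the $\alpha$-oracle guarantee and the monotonicity/Lipschitzness of $f$, the chain $f(\bm a_t,U_t)\ge\alpha f(\bm a^*_t,U_t)\ge\alpha f(\bm a^*_t,\bar{\bm w}_t)$ reduces the instantaneous $\alpha$-regret to $f(\bm a_t,U_t)-f(\bm a_t,\bar{\bm w}_t)$.

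Next I would show $G_t$ is correct after a burn-in. Two directions are needed: (i) edges inside a \gtcluster{} are never cut, because on the concentration event $\|\hat{\btheta}^{ser}_{u,t}-\hat{\btheta}^{ser}_{u_t,t}\|$ stays below the line~\ref{line:server_check} threshold; and (ii) edges between users in different clusters (gap $\ge\gamma$ by \cref{assumption1}) are cut once both endpoints' server-side confidence radii fall below $\gamma/4$. The second requires each user to accumulate $\widetilde{O}(\gamma^{-2}\tilde\lambda_x^{-1}\log T)$ observations: by \cref{assumption3} a matrix-Chernoff argument on the i.i.d.\ contexts shows the per-user gram matrix has minimal eigenvalue growing like $\tilde\lambda_x T^{ser}_{u,t}$ after a warm-up of $\widetilde{O}(\tilde\lambda_x^{-2}\log T)$ rounds. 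The uniform-arrival \cref{assumption2} plus the $p_t$-auxiliary uploads then guarantee that every user both arrives and uploads enough within $\widetilde{O}(|\cU|(\gamma^{-2}\tilde\lambda_x^{-1}+\tilde\lambda_x^{-2})\log T)$ rounds; bounding the per-round regret trivially over this window yields the second term of the bound.

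For the steady-state term I would (a) control the asynchronous inconsistency and (b) run the cascading UCB decomposition. For (a), the determinant condition in line~\ref{line:local_com_det} keeps $\det(\bSigma_{u_t,t}+\bSigma^{loc}_{u_t,t})\le(1+\alpha_c)\det(\bSigma_{u_t,t})$ between uploads, so the matrix $\bSigma_{u_t,t}$ actually used for action selection is within a multiplicative $(1+\alpha_c)$-type factor of the fully aggregated cluster matrix; summing this slack across a cluster of size at most $|\cU|$ yields the $(1+|\cU|^2\alpha_c)$ factor, while charging potentials to $J$ clusters rather than $|\cU|$ agents yields the $(J+|\cU|\alpha_c)$ factor. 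For (b), the standard cascade lemma bounds the instantaneous regret by $\sum_{k=1}^{O_t}\beta\|\bx_{t,a_{k,t}}\|_{\bSigma_{u_t,t}^{-1}}$ in expectation over the click feedback, and an elliptic-potential (log-determinant) lemma summed over $t$ and charged to the correct cluster gives $\widetilde{O}(d\sqrt{(J+|\cU|\alpha_c)(1+|\cU|^2\alpha_c)KT})$, where the factor $K$ arises from the at-most-$K$ observed items per cascade.

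Finally I would bound $\text{Com}(T)$ by trigger type. The determinant triggers are controlled by a log-determinant potential argument: the global potential increases by a total of $O(d\log T)$, each such upload forces a multiplicative $(1+\alpha_c)$ gain, and a baseline budget of one upload per agent gives $O(d(|\cU|+1/\alpha_c)\log T)$ triggers; the $p_t$-auxiliary triggers contribute $\sum_{t=1}^T p_t=\sum_{t=1}^T 3\log t/t=O(\log^2 T)$ in expectation. I expect the main obstacle to be the coupling in step~(a) with the burn-in analysis: the asynchronous staleness must be controlled tightly enough to \emph{simultaneously} preserve optimism for \cref{lem:concentration}, keep the heterogeneity test in line~\ref{line:server_check} accurate, and inflate the potential sum only multiplicatively in $\alpha_c$ — all while the cascading feedback makes the number of fresh observations per round the random variable $O_t$, which complicates both the matrix-concentration warm-up and the elliptic-potential telescoping, and while the $p_t$-auxiliary protocol must inject exactly enough extra communication to finish clustering in sublinear time without breaking the logarithmic communication budget.
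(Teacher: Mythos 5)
Your proposal follows essentially the same architecture as the paper's proof: the burn-in/steady-state decomposition, the optimism chain $f(\bm{a}_t,U_t)\ge\alpha f(\bm{a}^*_t,U_t)\ge\alpha f(\bm{a}^*_t,\bar{\bm{w}}_t)$, the covariance-comparison control of asynchronous staleness via the determinant condition in line~\ref{line:local_com_det} (the paper's \cref{lemma 2.1}), the cascade decomposition with the random observation depth $O_t$, the cluster-level elliptic potential with a Cauchy--Schwarz step over clusters producing the $\sqrt{(J+|\cU|\alpha_c)T}$ factor, and the trigger-by-trigger communication count. All of that matches the paper.

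The genuine gap is your burn-in claim that \cref{assumption2} plus the $p_t$-auxiliary uploads guarantee that every agent \emph{uploads} its accumulated data within a window of length $\widetilde{O}(|\cU|(\gamma^{-2}\tilde{\lambda}_x^{-1}+\tilde{\lambda}_x^{-2})\log T)$, so that the whole burn-in cost equals the second term of the regret bound. This step fails quantitatively. Consider any window $[T_0,T_0+W]$ with $T_0$ and $W$ both polylogarithmic in $T$. Since $p_t=3\log t/t$ is decreasing, even summing over \emph{every} round of the window (not only the rounds where a given agent is active), one gets $\sum_{t=T_0+1}^{T_0+W}p_t\le 3\log(T_0+W)\sum_{t=T_0+1}^{T_0+W}\tfrac{1}{t}\le 3\log(T_0+W)\log\tfrac{T_0+W}{T_0}=O\big((\log\log T)^2\big)$, so the probability that a fixed agent never fires an auxiliary upload in the window is at least $\prod_t(1-p_t)\ge\exp\big(-2\sum_t p_t\big)=\exp\big(-O((\log\log T)^2)\big)=T^{-o(1)}$. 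The determinant trigger cannot rescue this (that is precisely why the paper introduces the auxiliary protocol in the first place), so the mis-clustering event survives with probability $T^{-o(1)}$, and charging those rounds a per-round regret of $1$ contributes $T^{1-o(1)}$ --- asymptotically larger than the claimed bound. What the paper actually does (its \cref{lemma 1.1}) is take $\delta=|\cU|^2/T$ and wait $T_1=O(|\cU|\log T+\sqrt{T})$ \emph{additional} rounds, during which each agent is activated $\Omega(\sqrt{1/\delta})$ times, driving the per-agent failure probability down to $\delta^{3/2}$; the resulting additive $O(\sqrt{T})$ regret from this phase does not appear as a separate term in \cref{thm:main} only because it is absorbed into the leading $O\big(d\sqrt{(J+|\cU|\alpha_c)(1+|\cU|^2\alpha_c)KT}\log T\big)$ term. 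Your proof needs the same (or an equivalent) polynomially long upload phase; with a purely polylogarithmic window, the clustering-correctness event cannot be established with the confidence that the rest of your argument presumes.
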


\noindent \textbf{Discussion and Comparison.}
Looking at the above theorem, by setting $\alpha_c=1/\left | \mathcal{U} \right |^2$, we can bound the regret by $\tilde{O}(d\sqrt{JKT})$, while maintaining a near-constant $\tilde{O}(d|\cU|^2)$ communication cost, which is exactly the result presented in row 8 of \cref{tab:fed_res2}. Compared with the state-of-the-art CLUB-cascade algorithm \cite{li2018online} which incurs a $O(T)$ communication cost (row 4 of \cref{tab:fed_res2}), our algorithm significantly reduces the communication cost to logarithmic terms regarding $T$, while obtaining the same regret bounds.  In the scenario where users are homogeneous ($J=1$) and only a single arm is selected per round ($K=1$), our regret matches \citet{he2022simple}, incurring only an additional $O(\log^2T)$ communication cost due to the $p_t$-auxiliary communication. As for the lower bound, \citet{liu2022federated} establishes a lower bound of $\Omega(\sqrt{dJT})$ with unlimited communication, where our result is near-optimal and matches this lower bound up to a factor $O(\sqrt{dK})$. For the communication cost, \citet{he2022simple} shows $\Omega(|\cU|)$ communication is needed to achieve near-optimal regret, and removing the $O(d|\cU|)$ communication gap is still a challenging open problem.

\begin{proof}[Proof for \cref{thm:main}] Due to space limit, we give a sketched proof here. Our regret analysis mainly contains two parts. The first part upper bounds the number of exploration rounds required to achieve accurate user partitioning at the server. In particular, leveraging assumptions of item regularity and user arrivals, we prove that the user groups would be correctly partitioned if all user data are synchronized and uploaded to the server exactly at (or after) $T_0=O(|\cU|({d}\gamma^{-2}\tilde{\lambda}_x^{-1}+\tilde{\lambda}^{-2}_x)\log{T})$ timesteps. However, since $\gamma$ and $\tilde{\lambda}_x$ are unknown (which makes $T_0$ unknown) to the agents or the server, we devise a $p_t$-auxiliary communication that is agnostic to $T_0$ and ensures that after at most $T_1=O(|\cU|\log T + \sqrt{T})$ extra rounds, the server obtains the correct user partition. For the second part of our analysis after $T_0+T_1$, agents sharing the same $\btheta$ begin to collaborate with each other under server coordination and we prove the following concentration bound:
\begin{lemma}\label{lem:concentration}
    Let $\beta=\sqrt{\lambda}+(\sqrt{1+\left | \mathcal{U} \right |\alpha_c}+\left | \mathcal{U} \right |\sqrt{2\alpha_c})\sqrt{d\log(1+\frac{KT}{\alpha_c \lambda d})+2\log({1}/{\delta})+4\log(T\left | \mathcal{U} \right |)}$, for $t > T_0+T_1$ and every $u \in \mathcal{U}$ in true user cluster $j(u)$, it holds with probability at least $1-\delta$ that $\norm{\hat{\bm{\theta}}_{u,t}-\bm{\theta}^{j(u)}}_{\bm{\Sigma}_{u,t}}\leq \beta$.
\end{lemma}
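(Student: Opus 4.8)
The plan is to reduce the bound to a self-normalized martingale concentration and to absorb the effect of the asynchronous delays through the determinant-based communication rule. First I would use that $t>T_0+T_1$: by the clustering guarantee from the first part of the analysis the graph $G_t$ has exactly the true connected components, so the data feeding $\bSigma_{u,t}$ and $\hat{\btheta}_{u,t}$ come \emph{only} from agents in $\mathcal{V}_{j(u)}$, all sharing $\btheta^{j(u)}$. This removes cross-cluster bias and lets me treat the pooled observations as a single linear model with parameter $\btheta^{j(u)}$ and conditionally mean-zero, bounded (hence sub-Gaussian) noise $\eta_i=w_i-\langle\btheta^{j(u)},\bx_i\rangle$. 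Writing $\bSigma_{u,t}=\lambda\bI+\sum_{i\in A}\bx_i\bx_i^{\top}$ over the uploaded observation set $A$ and $\hat{\btheta}_{u,t}=\bSigma_{u,t}^{-1}\sum_{i\in A}w_i\bx_i$, a direct expansion gives $\hat{\btheta}_{u,t}-\btheta^{j(u)}=\bSigma_{u,t}^{-1}(\sum_{i\in A}\bx_i\eta_i-\lambda\btheta^{j(u)})$, so $\norm{\hat{\btheta}_{u,t}-\btheta^{j(u)}}_{\bSigma_{u,t}}\le\norm{\sum_{i\in A}\bx_i\eta_i}_{\bSigma_{u,t}^{-1}}+\sqrt{\lambda}$.

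The difficulty is that $A$ is the set of observations \emph{uploaded} by $u$'s last synchronization $\tau$, which lags the observation-time filtration, so the standard bound does not apply to $A$ directly. I would split the noise as $\sum_{i\in A}\bx_i\eta_i=\tilde{\boldsymbol{\xi}}-\boldsymbol{\xi}_{\mathrm{miss}}$, where $\tilde{\boldsymbol{\xi}}$ sums over \emph{all} cluster observations gathered up to round $\tau$ (a genuine prefix in observation time, with complete matrix $\tilde{\bSigma}$) and $\boldsymbol{\xi}_{\mathrm{miss}}=\sum_{u'\ne u}\boldsymbol{\xi}^{(u')}$ collects the local buffers $\boldsymbol{Q}_{u'}$ still held at $\tau$. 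For $\tilde{\boldsymbol{\xi}}$ the Abbasi-Yadkori inequality bounds $\norm{\tilde{\boldsymbol{\xi}}}_{\tilde{\bSigma}^{-1}}$ by the square-root log-determinant term $L$; to transfer this to the decision norm I use $\tilde{\bSigma}\preceq(1+|\mathcal{U}|\alpha_c)\bSigma_{u,t}$, obtained as follows. Each buffer obeys the un-triggered determinant condition $\det(\bP_{u'}+\boldsymbol{Q}_{u'})\le(1+\alpha_c)\det(\bP_{u'})$ against a matrix $\bP_{u'}\preceq\bSigma_{u,t}$---the latter by monotonicity of the (correctly clustered) server matrix, since $\bP_{u'}$ is the server's cluster matrix at a sync time $\le\tau$---hence $\trace(\bSigma_{u,t}^{-1}\boldsymbol{Q}_{u'})\le\trace(\bP_{u'}^{-1}\boldsymbol{Q}_{u'})\le\alpha_c$; summing over the $|\mathcal{U}|$ agents bounds $\trace(\bSigma_{u,t}^{-1}(\tilde{\bSigma}-\bSigma_{u,t}))\le|\mathcal{U}|\alpha_c$, and because every eigenvalue of $\bSigma_{u,t}^{-1}\tilde{\bSigma}$ is at least one, the largest is at most $1+|\mathcal{U}|\alpha_c$. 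This yields the factor $\sqrt{1+|\mathcal{U}|\alpha_c}$ on $L$.

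For the missing term I bound each buffer's noise in the common norm. The same inequality gives $\sum_i\norm{\bx_i}_{\bSigma_{u,t}^{-1}}^2=\trace(\bSigma_{u,t}^{-1}\boldsymbol{Q}_{u'})\le\alpha_c$ over the buffer of $u'$, i.e.\ the buffer carries at most $\alpha_c$ units of self-normalized information; a Bernstein-type self-normalized tail bound then gives $\norm{\boldsymbol{\xi}^{(u')}}_{\bSigma_{u,t}^{-1}}\le\sqrt{2\alpha_c}\,L$, and summing over the at most $|\mathcal{U}|$ agents produces the factor $|\mathcal{U}|\sqrt{2\alpha_c}$. Adding the three contributions and taking a union bound over all users and rounds---which supplies the $2\log(1/\delta)$ and $4\log(T|\mathcal{U}|)$ terms, while the cascade yields at most $K$ observations per round and hence the $KT$ inside the log-determinant---reproduces exactly the stated $\beta$. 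I expect the main obstacle to be precisely this control of the delayed buffers in the \emph{shared} norm $\bSigma_{u,t}^{-1}$: each agent checks its determinant condition against its own, possibly stale, matrix, so the technical heart is showing these individual matrices are dominated by $\bSigma_{u,t}$ (using monotonicity after correct clustering) and then converting the resulting trace bound into the self-normalized estimate $\sqrt{2\alpha_c}\,L$.
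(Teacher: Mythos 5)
Your overall route is the same as the paper's: after correct clustering you expand $\hat{\btheta}_{u,t}-\btheta^{j(u)}$ into a $\sqrt{\lambda}$ bias term plus the noise of the data actually inside $\bSigma_{u,t}$, split that noise as (all cluster observations up to the last sync $\tau$) minus (the local buffers not yet uploaded), bound the first piece by the Abbasi--Yadkori inequality in its own norm and transfer it with a $\sqrt{1+|\mathcal{U}|\alpha_c}$ covariance comparison, and charge each missing buffer a $\sqrt{2\alpha_c}$ factor. This is exactly the paper's decomposition into its terms (A) and (B), with its covariance-comparison and local-concentration lemmas. Your trace-based derivation of the comparison (un-triggered determinant condition $\Rightarrow \mathrm{trace}(\boldsymbol{P}_{u'}^{-1}\boldsymbol{Q}_{u'})\le\alpha_c$, and trace dominates the largest eigenvalue for PSD matrices) is a clean repackaging of the paper's per-agent argument and yields the same constants.

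The one step that does not survive scrutiny as written is the missing-buffer bound: ``a Bernstein-type self-normalized tail bound gives $\norm{\boldsymbol{\xi}^{(u')}}_{\bSigma_{u,t}^{-1}}\le\sqrt{2\alpha_c}\,L$'' directly from $\mathrm{trace}(\bSigma_{u,t}^{-1}\boldsymbol{Q}_{u'})\le\alpha_c$. No off-the-shelf inequality does this: $\bSigma_{u,t}$ is not the Gram matrix of the summed buffer features, it is random, and it is correlated with the buffer noises --- the cascade feedback $O_s$ (a function of the clicks, i.e.\ of the noises) determines how many rank-one terms enter each local Gram matrix, hence when determinant triggers fire, hence which observations end up inside $\bSigma_{u,t}$ and where the buffer window $(N_{u'}(\tau),\tau]$ ends; so neither the normalizer nor the window can be treated as fixed. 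The repair is exactly what the paper does, and your trace bound already supplies the key ingredient: since the trace dominates the top eigenvalue, $\boldsymbol{Q}_{u'}\preceq\alpha_c\bSigma_{u,t}$, which together with $\bSigma_{u,t}\succeq\lambda I$ gives $\alpha_c\lambda I+\boldsymbol{Q}_{u'}\preceq 2\alpha_c\bSigma_{u,t}$ and hence $\norm{\boldsymbol{\xi}^{(u')}}_{\bSigma_{u,t}^{-1}}\le\sqrt{2\alpha_c}\,\norm{\boldsymbol{\xi}^{(u')}}_{(\alpha_c\lambda I+\boldsymbol{Q}_{u'})^{-1}}$. The latter is a genuinely self-normalized quantity to which Abbasi--Yadkori applies; but because both endpoints of the window are data-dependent, the bound must be taken uniformly over all $O(T^2|\mathcal{U}|)$ candidate windows (users times \emph{pairs} of rounds), not merely users times rounds as you state --- this is precisely the origin of the paper's $2\log(T^2|\mathcal{U}|)$ term, which the $4\log(T|\mathcal{U}|)$ in $\beta$ absorbs. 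With that reordering (switch norms first, concentrate second, union over window pairs), your argument coincides with the paper's proof.
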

The key challenge of proving \cref{lem:concentration} is to handle the gap between the delayed $\bm{\Sigma}_{u,t}$ and up-to-date information had agents been synchronized. We address the challenge by bounding this gap by a factor of $O(J+|\cU|\alpha_c)$ thanks to the matrix determinant condition in line \ref{line:local_com_det} of \cref{alg:local}. Based on this concentration bound, we conduct a contextual combinatorial cascading bandit regret analysis to bound the final regret. For the communication cost, the matrix-determinant condition and the $p_t$-auxiliary communication contribute $O(d(|\cU|+1/\alpha_c)\log T)$ and $O(\log^2 T)$, respectively. For detailed proofs, please refer to our Appendix.
\end{proof}

\begin{figure}[t]
\begin{subfigure}{.23\textwidth}
  \centering
  \includegraphics[width=\linewidth]{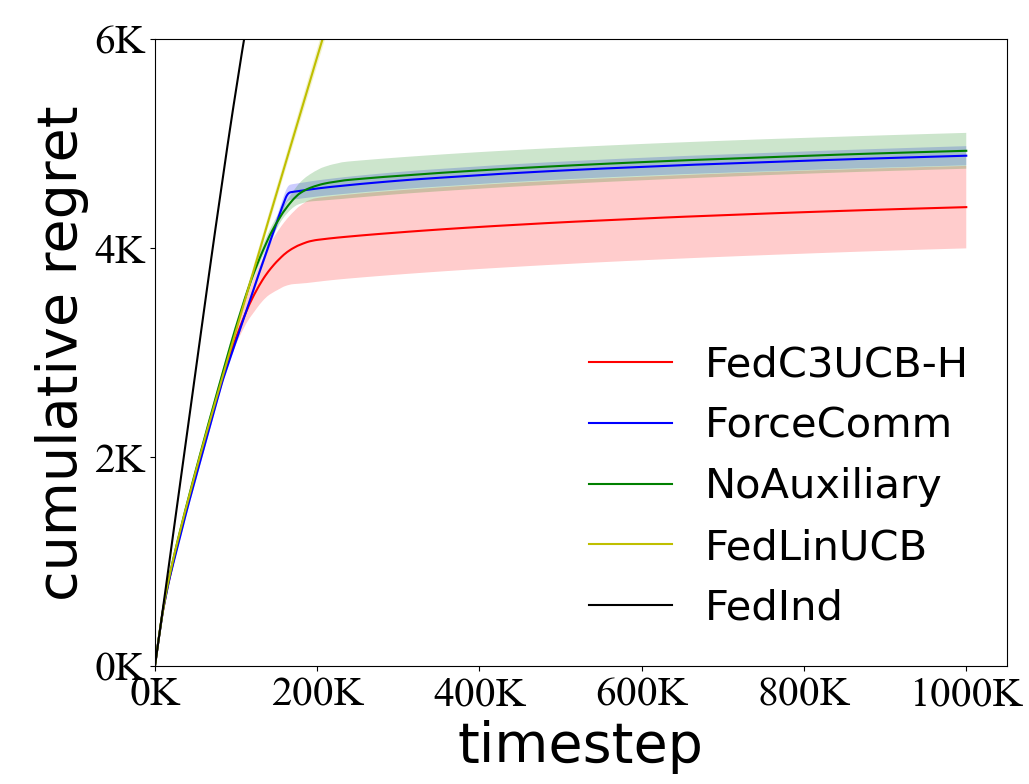}  
  \caption{Synthetic, $J=5$}
  \label{fig:J=5}
\end{subfigure}
\begin{subfigure}{.23\textwidth}
  \centering
  \includegraphics[width=\linewidth]{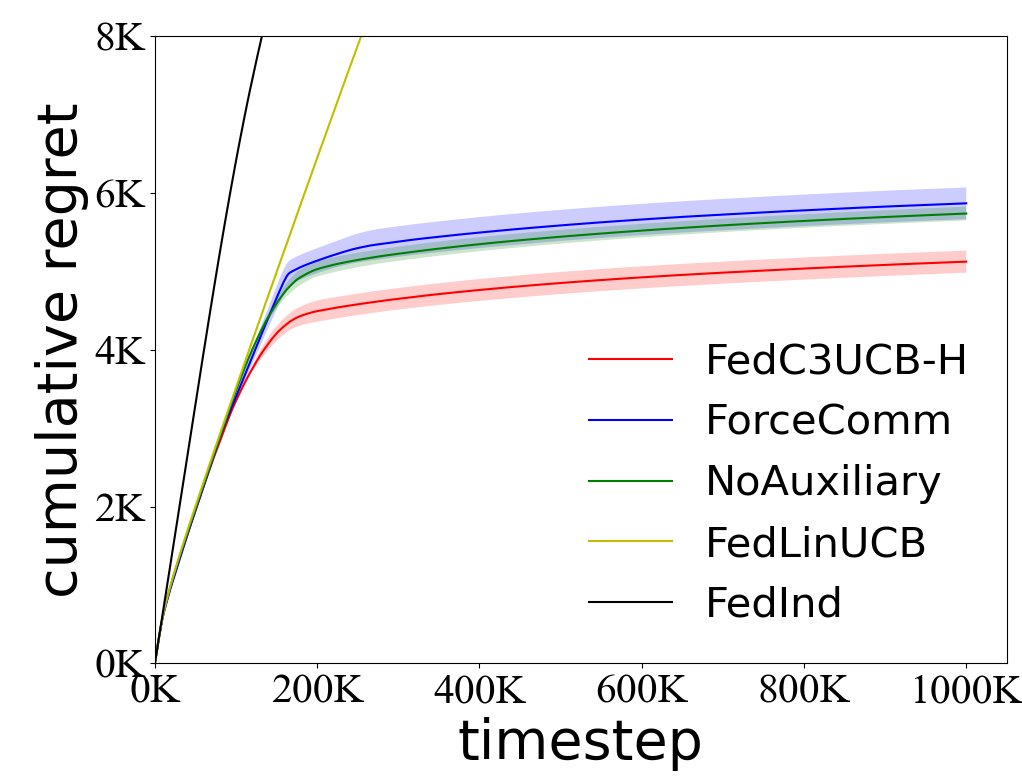}  
  \caption{Synthetic, $J=10$}
  \label{fig:J=10}
\end{subfigure}


\begin{subfigure}{.23\textwidth}
  \centering
  \includegraphics[width=\linewidth]{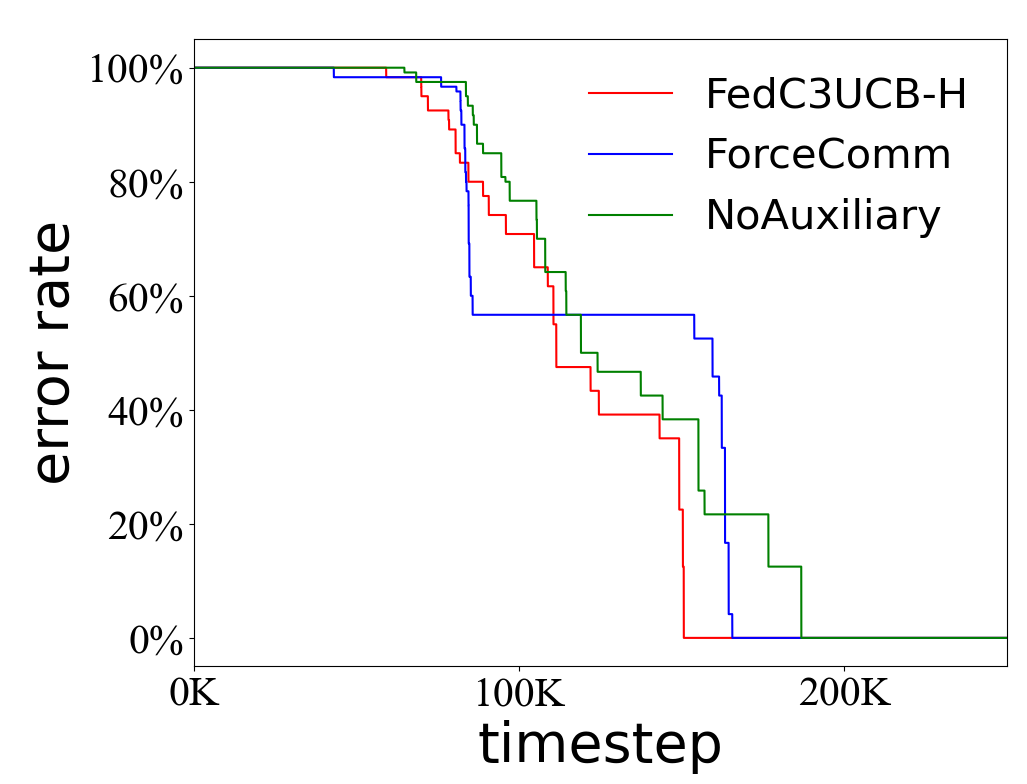}  
  \caption{Synthetic, $J=10$}
  \label{fig:cluster}
\end{subfigure}
\begin{subfigure}{.23\textwidth}
  \centering
  \includegraphics[width=\linewidth]{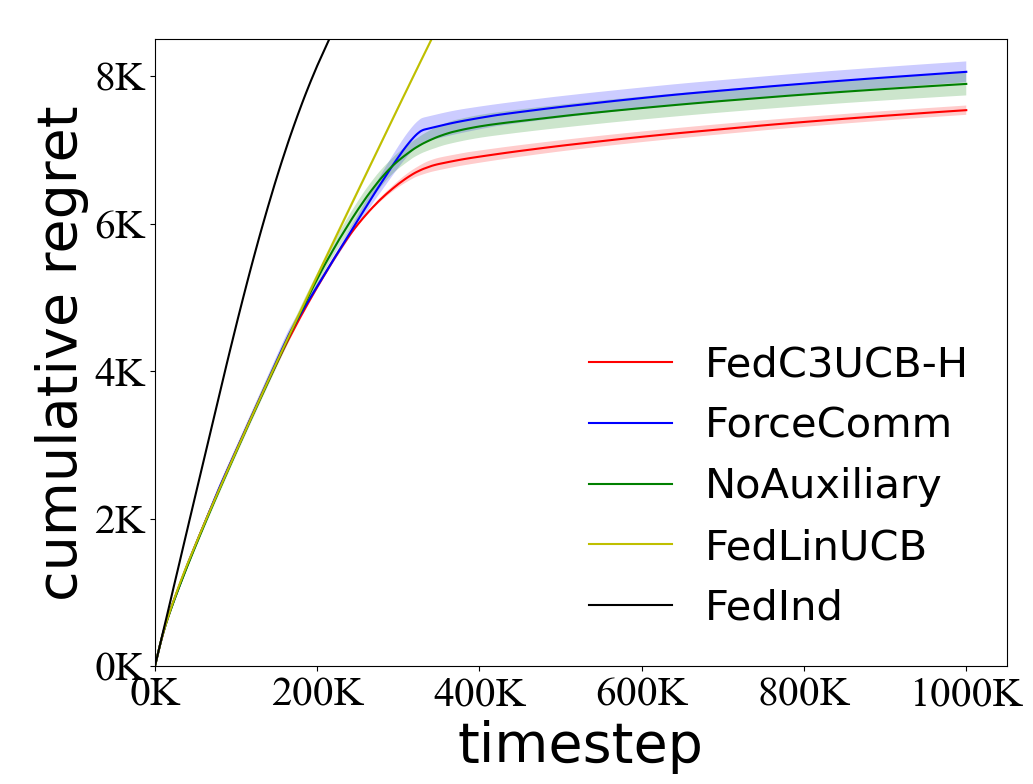}  
  \caption{Yelp, $J=10$}
  \label{fig:Yelp}
\end{subfigure}
\caption{Cumulative regret and error rates of user clustering}
\label{fig:exp}

\end{figure}

\section{Empirical Evaluation}
In this section, we compare FedC$^3$UCB-H against four baseline algorithms: (1) FedLinUCB (\citet{he2022simple}), adapted to accommodate cascading arms while maintaining a single estimated vector for all agents, (2)  FedInd, where agents operate independent cascading bandit algorithms locally without any communication, (3) NoAuxiliary, a variant of FedC$^3$UCB-H that omits the $p_t$-auxiliary communication condition, and (4) ForceComm, which replaces the $p_t$-auxiliary communication with a deterministic force-communication protocol. In the ForceComm protocol, each agent communicates with the server upon its arrival at time steps $\{1, 2, 4, ..., 2^n\}$. The baselines (1) and (2) aim to evaluate the significance of user cluster identification, while the baselines (3) and (4) show the importance of our $p_t$-auxiliary communication.
We report the averaged regret, communication cost, and error rates of user clustering (the number of incorrect users divided by $|\cU|$; users in a correctly partitioned cluster are considered correct, while others are incorrect) over three independent runs with standard deviations.

\noindent\textbf{Synthetic Experiments.}
We conduct experiments in a synthetic environment with $|\cU|=40$ users, $J=5$ or $J=10$ clusters with orthogonal $\btheta$, and $T=1000,000$ rounds. The preference and feature vectors are of dimension $d=20$, with each entry drawn from a standard Gaussian distribution, and are normalized to vectors with $\norm{.}_2=1$~\cite{li2019improved}. At round $t$, a user $u_t$ randomly chosen from $\cU$ and and $I_t=200$ items are generated, where each item is associated with a random $\bx_{t,i} \in \R^{d}$ generated as above. The agent needs to recommend $K=4$ items as cascading arms $\ba_t$ to the user, leading to a random reward $f(\ba_t,\bw_t)$ as defined in \cref{eq:reward_func}. The agents then observed feedback according to \cref{eq:feedback_model}. 
Results for the $J=5$ scenario are shown in \cref{fig:J=5}, while \cref{fig:J=10} depicts the results for $J=10$. In both cases, FedC$^3$UCB-H outperformes the baseline algorithms. For the $J=10$ setting, FedC$^3$UCB-H achieves an 82\% reduction in regret compared to FedLinUCB  and a 64\% reduction compared to FedInd. This shows the effectiveness of our user clustering procedure. Compared with NoAuxiliary, FedC$^3$UCB-H achieves an 10\% reduction in regret, indicating the importance of our $p_t$-auxiliary communication. This is confirmed by \cref{fig:cluster}, where NoAuxiliary exhibited slower convergence in correctly identifying user clusters.
When compared to ForceComm, FedC$^3$UCB-H achieved 12\% less regret and 5\% less communication. Although ForceComm initially identifies user clusters more rapidly (\cref{fig:cluster}), it incurs unnecessary communication after users are correctly partitioned, causing it to perform worse over time. FedC$^3$UCB-H, instead, strikes a better balance. The $J=10$ scenario produces consistent results, reaffirming the observations in the $J=5$ scenario.

\begin{table}[t]
\resizebox{1\columnwidth}{!}{
\begin{minipage}{\columnwidth}
\center
\begin{tabular}{clclclcl}
\hline
\multicolumn{2}{c}{} & \multicolumn{4}{c}{Synthetic} & \multicolumn{2}{c}{Yelp}              \\
\multicolumn{2}{c}{}     & \multicolumn{2}{c}{$J=5$} & \multicolumn{2}{c}{$J=10$} &  \multicolumn{2}{c}{$J=10$}\\ \hline
\multicolumn{2}{c}{FedC$^3$UCB-H} & \multicolumn{2}{c}{818.3}        & \multicolumn{2}{c}{1070.7}       & \multicolumn{2}{c}{951.3}     \\ \hline
\multicolumn{2}{c}{ForceComm}              & \multicolumn{2}{c}{909.3}        & \multicolumn{2}{c}{1125.3}     & \multicolumn{2}{c}{1034.0}      \\ \hline
\multicolumn{2}{c}{NoAuxiliary}              & \multicolumn{2}{c}{666.7}        & \multicolumn{2}{c}{899.3}    & \multicolumn{2}{c}{789.3}       \\ \hline
\multicolumn{2}{c}{FedLinUCB}              & \multicolumn{2}{c}{479.7}        & \multicolumn{2}{c}{482.3}        & \multicolumn{2}{c}{486.0}   \\ \hline
\end{tabular}
\caption{Communication cost for different datasets.}\label{tab:com}
\end{minipage}
}

\end{table}

\noindent\textbf{Experiments on Real Dataset.}
We conduct experiments on the Yelp dataset, which contains $4.7$ million ratings of $1.57\times 10^5$ restaurants from $1.18$ million users.\footnote{http://www.yelp.com/dataset challenge} Since Yelp dataset is very sparse, we extract $1000$ items with the most ratings and $1000$ users who rate most. This subset forms a new rating matrix $\bM^{1000\times 1000}$. We use matrix $\bM$ to generate feature vectors and preference vectors of $d=20$ dimension for all items and users by singular-value decomposition (SVD) \cite{li2018online,li2019improved,liu2022federated}, respectively. Then we randomly sample $|\cU|=40$ users and use k-means clustering \cite{ahmed2020k} to generate $J=10$ user clusters. Each cluster's center vector represents the true preference vector for users within that cluster. The experiment's remaining parameters match those of the synthetic experiments. \cref{fig:Yelp} shows the regret comparison and \cref{tab:com} shows the communication cost. The results are consistent with the synthetic experiments.  FedC$^3$UCB-H achievs a 67\% reduction in regret compared to FedLinUCB, 43\% compared to FedInd, 5\% compared to NoAuxiliary, and 6\% compared to ForceComm. As for the communication cost, FedC$^3$UCB-H has higher communication costs than NoAuxiliary and FedLinUCB due to auxiliary communication, but manages to reduce communication by 8\% compared to ForceComm.

\section{Future Directions}
It would be interesting to eliminate the $O(\sqrt{dK})$ gap in the regret and the $O(d|\cU|)$ gap in the communication. It would also be valuable to generalize the linear structure by 
considering non-linear structures or model mis-specifications.

\section*{Acknowledgments}
We thank the anonymous reviewers for their helpful comments. The work of John C.S. Lui was supported in part by the RGC GRF 14202923.

\bibliography{arixiv_v1}

\begin{thebibliography}{30}
\providecommand{\natexlab}[1]{#1}

\bibitem[{Abbasi-Yadkori, P{\'a}l, and
  Szepesv{\'a}ri(2011)}]{abbasi2011improved}
Abbasi-Yadkori, Y.; P{\'a}l, D.; and Szepesv{\'a}ri, C. 2011.
\newblock Improved algorithms for linear stochastic bandits.
\newblock \emph{Advances in neural information processing systems}, 24.

\bibitem[{Abbasi-Yadkori, P\'{a}l, and
  Szepesv\'{a}ri(2011)}]{10.5555/2986459.2986717}
Abbasi-Yadkori, Y.; P\'{a}l, D.; and Szepesv\'{a}ri, C. 2011.
\newblock Improved Algorithms for Linear Stochastic Bandits.
\newblock In \emph{Proceedings of the 24th International Conference on Neural
  Information Processing Systems}, NIPS'11, 2312–2320. Red Hook, NY, USA:
  Curran Associates Inc.
\newblock ISBN 9781618395993.

\bibitem[{Ahmed, Seraj, and Islam(2020)}]{ahmed2020k}
Ahmed, M.; Seraj, R.; and Islam, S. M.~S. 2020.
\newblock The k-means algorithm: A comprehensive survey and performance
  evaluation.
\newblock \emph{Electronics}, 9(8): 1295.

\bibitem[{Choi, Udwani, and Oh(2023)}]{choi2023cascading}
Choi, H.; Udwani, R.; and Oh, M.-h. 2023.
\newblock Cascading Contextual Assortment Bandits.
\newblock In \emph{Thirty-seventh Conference on Neural Information Processing
  Systems}.

\bibitem[{Chuklin, Markov, and Rijke(2015)}]{chuklin2015click}
Chuklin, A.; Markov, I.; and Rijke, M.~d. 2015.
\newblock Click models for web search.
\newblock \emph{Synthesis lectures on information concepts, retrieval, and
  services}, 7(3): 1--115.

\bibitem[{Craswell et~al.(2008)Craswell, Zoeter, Taylor, and
  Ramsey}]{craswell2008experimental}
Craswell, N.; Zoeter, O.; Taylor, M.; and Ramsey, B. 2008.
\newblock An experimental comparison of click position-bias models.
\newblock In \emph{Proceedings of the 2008 international conference on web
  search and data mining}, 87--94.

\bibitem[{Dubey and Pentland(2020)}]{dubey2020differentially}
Dubey, A.; and Pentland, A. 2020.
\newblock Differentially-private federated linear bandits.
\newblock \emph{Advances in Neural Information Processing Systems}, 33:
  6003--6014.

\bibitem[{Gentile, Li, and Zappella(2014)}]{gentile2014online}
Gentile, C.; Li, S.; and Zappella, G. 2014.
\newblock Online clustering of bandits.
\newblock In \emph{International Conference on Machine Learning}, 757--765.
  PMLR.

\bibitem[{He et~al.(2022)He, Wang, Min, and Gu}]{he2022simple}
He, J.; Wang, T.; Min, Y.; and Gu, Q. 2022.
\newblock A Simple and Provably Efficient Algorithm for Asynchronous Federated
  Contextual Linear Bandits.
\newblock \emph{Advances in neural information processing systems}.

\bibitem[{Korda, Szorenyi, and Li(2016)}]{korda2016distributed}
Korda, N.; Szorenyi, B.; and Li, S. 2016.
\newblock Distributed clustering of linear bandits in peer to peer networks.
\newblock In \emph{International conference on machine learning}, 1301--1309.
  PMLR.

\bibitem[{Kveton et~al.(2015{\natexlab{a}})Kveton, Szepesvari, Wen, and
  Ashkan}]{kveton2015cascading}
Kveton, B.; Szepesvari, C.; Wen, Z.; and Ashkan, A. 2015{\natexlab{a}}.
\newblock Cascading bandits: Learning to rank in the cascade model.
\newblock In \emph{International Conference on Machine Learning}, 767--776.
  PMLR.

\bibitem[{Kveton et~al.(2015{\natexlab{b}})Kveton, Wen, Ashkan, and
  Szepesv{\'a}ri}]{kveton2015combinatorial}
Kveton, B.; Wen, Z.; Ashkan, A.; and Szepesv{\'a}ri, C. 2015{\natexlab{b}}.
\newblock Combinatorial cascading bandits.
\newblock In \emph{Proceedings of the 28th International Conference on Neural
  Information Processing Systems-Volume 1}, 1450--1458.

\bibitem[{Li and Wang(2021)}]{li2021asynchronous}
Li, C.; and Wang, H. 2021.
\newblock Asynchronous Upper Confidence Bound Algorithms for Federated Linear
  Bandits.
\newblock \emph{arXiv preprint arXiv:2110.01463}.

\bibitem[{Li et~al.(2019)Li, Chen, Li, and Leung}]{li2019improved}
Li, S.; Chen, W.; Li, S.; and Leung, K.-S. 2019.
\newblock Improved Algorithm on Online Clustering of Bandits.
\newblock In \emph{Proceedings of the 28th International Joint Conference on
  Artificial Intelligence}, IJCAI'19, 2923–2929. AAAI Press.
\newblock ISBN 9780999241141.

\bibitem[{Li et~al.(2016)Li, Wang, Zhang, and Chen}]{li2016contextual}
Li, S.; Wang, B.; Zhang, S.; and Chen, W. 2016.
\newblock Contextual combinatorial cascading bandits.
\newblock In \emph{International conference on machine learning}, 1245--1253.
  PMLR.

\bibitem[{Li and Zhang(2018)}]{li2018online}
Li, S.; and Zhang, S. 2018.
\newblock Online clustering of contextual cascading bandits.
\newblock In \emph{Proceedings of the AAAI Conference on Artificial
  Intelligence}, volume~32.

\bibitem[{Lian et~al.(2023)Lian, Gao, Song, Li, Liu, and
  Chen}]{lian2023training}
Lian, D.; Gao, Z.; Song, X.; Li, Y.; Liu, Q.; and Chen, E. 2023.
\newblock Training Recommenders Over Large Item Corpus With Importance
  Sampling.
\newblock \emph{IEEE Transactions on Knowledge and Data Engineering}.

\bibitem[{Lian, Liu, and Chen(2020)}]{lian2020personalized}
Lian, D.; Liu, Q.; and Chen, E. 2020.
\newblock Personalized Ranking with Importance Sampling.
\newblock In \emph{Proceedings of The Web Conference 2020}, 1093--1103.

\bibitem[{Liu et~al.(2022{\natexlab{a}})Liu, Zhao, Yu, Li, and
  Lui}]{liu2022federated}
Liu, X.; Zhao, H.; Yu, T.; Li, S.; and Lui, J. 2022{\natexlab{a}}.
\newblock Federated Online Clustering of Bandits.
\newblock In \emph{The 38th Conference on Uncertainty in Artificial
  Intelligence}.

\bibitem[{Liu et~al.(2022{\natexlab{b}})Liu, Zuo, Wang, Joe-Wong, Lui, and
  Chen}]{liu2022batch}
Liu, X.; Zuo, J.; Wang, S.; Joe-Wong, C.; Lui, J.; and Chen, W.
  2022{\natexlab{b}}.
\newblock Batch-size independent regret bounds for combinatorial semi-bandits
  with probabilistically triggered arms or independent arms.
\newblock \emph{Advances in Neural Information Processing Systems}, 35:
  14904--14916.

\bibitem[{Liu et~al.(2023{\natexlab{a}})Liu, Zuo, Wang, Lui, Hajiesmaili,
  Wierman, and Chen}]{liu2023contextual}
Liu, X.; Zuo, J.; Wang, S.; Lui, J.~C.; Hajiesmaili, M.; Wierman, A.; and Chen,
  W. 2023{\natexlab{a}}.
\newblock Contextual combinatorial bandits with probabilistically triggered
  arms.
\newblock In \emph{International Conference on Machine Learning}, 22559--22593.
  PMLR.

\bibitem[{Liu et~al.(2023{\natexlab{b}})Liu, Zuo, Wang, Lui, Hajiesmaili,
  Wierman, and Chen}]{Liu2023ContextualCB}
Liu, X.; Zuo, J.; Wang, S.; Lui, J.~C.; Hajiesmaili, M.~H.; Wierman, A.; and
  Chen, W. 2023{\natexlab{b}}.
\newblock Contextual Combinatorial Bandits with Probabilistically Triggered
  Arms.
\newblock \emph{ArXiv}, abs/2303.17110.

\bibitem[{Liu et~al.(2023{\natexlab{c}})Liu, Zuo, Xie, Joe-Wong, and
  Lui}]{liu2023variance}
Liu, X.; Zuo, J.; Xie, H.; Joe-Wong, C.; and Lui, J.~C. 2023{\natexlab{c}}.
\newblock Variance-adaptive algorithm for probabilistic maximum coverage
  bandits with general feedback.
\newblock In \emph{IEEE INFOCOM 2023-IEEE Conference on Computer
  Communications}, 1--10. IEEE.

\bibitem[{Vial et~al.(2022)Vial, Sanghavi, Shakkottai, and
  Srikant}]{vial2022minimax}
Vial, D.; Sanghavi, S.; Shakkottai, S.; and Srikant, R. 2022.
\newblock Minimax Regret for Cascading Bandits.
\newblock \emph{arXiv preprint arXiv:2203.12577}.

\bibitem[{Wang et~al.(2019)Wang, Hu, Chen, and Wang}]{2019Distributed}
Wang, Y.; Hu, J.; Chen, X.; and Wang, L. 2019.
\newblock Distributed Bandit Learning: Near-Optimal Regret with Efficient
  Communication.
\newblock \emph{arXiv e-prints}.

\bibitem[{Wang et~al.(2023)Wang, Xie, Liu, Li, and Lui}]{wang2023online}
Wang, Z.; Xie, J.; Liu, X.; Li, S.; and Lui, J. 2023.
\newblock Online clustering of bandits with misspecified user models.
\newblock \emph{arXiv preprint arXiv:2310.02717}.

\bibitem[{Wu et~al.(2023)Wu, Lian, Ge, Zhu, and Chen}]{wu2023influence}
Wu, C.; Lian, D.; Ge, Y.; Zhu, Z.; and Chen, E. 2023.
\newblock Influence-Driven Data Poisoning for Robust Recommender Systems.
\newblock \emph{IEEE Transactions on Pattern Analysis and Machine
  Intelligence}.

\bibitem[{Xu and Klabjan(2023)}]{xu2023decentralized}
Xu, M.; and Klabjan, D. 2023.
\newblock Decentralized Randomly Distributed Multi-agent Multi-armed Bandit
  with Heterogeneous Rewards.
\newblock In \emph{Advances on Neural Information Processing Systems}.

\bibitem[{Zhu et~al.(2021)Zhu, Zhu, Liu, and Liu}]{zhu2021federated}
Zhu, Z.; Zhu, J.; Liu, J.; and Liu, Y. 2021.
\newblock Federated bandit: A gossiping approach.
\newblock In \emph{Abstract Proceedings of the 2021 ACM
  SIGMETRICS/International Conference on Measurement and Modeling of Computer
  Systems}, 3--4.

\bibitem[{Zong et~al.(2016)Zong, Ni, Sung, Ke, Wen, and
  Kveton}]{zong2016cascading}
Zong, S.; Ni, H.; Sung, K.; Ke, N.~R.; Wen, Z.; and Kveton, B. 2016.
\newblock Cascading bandits for large-scale recommendation problems.
\newblock \emph{arXiv preprint arXiv:1603.05359}.

\end{thebibliography}
\newpage
\appendix
\onecolumn
\section*{Appendix}
\section{Regret Analysis for the Regret and Communication Cost}
\subsection{Regret Analysis}
Our regret analysis mainly consists of two parts. The first part upper bounds the number of exploration rounds needed so that with high probability, the users are partitioned correctly at the server. This guarantees that the information will not be shared incorrectly among heterogeneous users with different preference vectors. The second part analyses the regret for asynchronous combinatorial cascading bandits after users are partitioned correctly. To conduct our analysis, we first introduce some definitions as follows.
We assume that there is an omnipotent analyst who can see the whole progress of the running algorithm, and thus we define the following auxiliary matrices and vectors that contain all the information up to timestep $t$ with respect to cluster $\mathcal{V}_j$:
\begin{equation}\label{eq.1a}
\bm{\Sigma}_{\mathcal{V}_j,t}^{all}=\lambda I+\sum_{i=1}^t \mathbb{I}\{u_i \in \mathcal{V}_j\}\sum_{k=1}^{O_i} \bm{x}_{i, a_{k, i}} \bm{x}_{i, a_{k, i}}^T,
\end{equation}
\begin{equation}\label{eq.1b}
\bm{b}_{\mathcal{V}_j,t}^{all}=\sum_{i=1}^t\mathbb{I}\{u_i \in \mathcal{V}_j\} \sum_{k=1}^{O_i} w_i\left(a_{k, i}\right) \bm{x}_{i, a_{k, i}},
\end{equation}
\begin{equation}\label{eq.1c}
\bm{g}_{\mathcal{V}_j,t}^{all}=\sum_{i=1}^t\mathbb{I}\{u_i \in \mathcal{V}_j\} \sum_{k=1}^{O_i} \eta_{i,a_{k,i}}^{(u_i)}\bm{x}_{i, a_{k, i}},
\end{equation}
\par
where $O_t$ is the cascading feedback defined in the section of Problem Setup, and $\eta_{t,i}^{(u_t)}=w_t(i)-\left \langle   \bm{\theta}_{u_t}, \bm{x}_{t,i} \right \rangle$. Because our setting for $w_t(i)$ for each item $i$ is sample from Bernoulli distribution, this means that $\eta_{t,i}^{(u_t)}$ is a $R$-sub-Gaussian noise (i.e., $R=1/2$) for each item $i$ and the current activated agent $u_t$, i.e.for all $b \in \mathbb{R}$ it holds that $\mathbb{E}\left [ \text{exp}(b\eta_{t,i}^{(u_t)})|\mathcal{H}_t \right ] \leq \text{exp}(b^2 R^2/2)$. For simplicity, because only one agent can be activated in each timestep, the agent index is determined by timestep, we omit the superscript $u_t$ when context is clear.\par
In this case, the omnipotent estimate for cluster $\mathcal{V}_j$ is:
\begin{equation}\label{eq.1d}
\hat{\bm{\theta}}_{\mathcal{V}_j,t}^{all}=\left(\bm{\Sigma}_{\mathcal{V}_j,t}^{all}\right)^{-1} \bm{b}_{\mathcal{V}_j,t}^{all}
\end{equation}
Let $N_m(t)$ represent the last timestep when agent $m$ communicated with the server at the end of timestep $t$. Under this definition, for all timestep $t$, the data uploaded and accumulated at the server for agent $u$ \textit{at the end of timestep $t$} are:
\begin{equation}\label{eq.2a}
    \bm{\Sigma}_{u, t}^{ser}=\sum_{i=1}^{N_u(t)}\mathbb{I}\{u_i=u\} \sum_{k=1}^{O_i}\bm{x}_{i, a_{k, i}} \bm{x}_{i, a_{k, i}}^T,
\end{equation}
\begin{equation}\label{eq.2b}
    \bm{b}_{u,t}^{ser}=\sum_{i=1}^{N_u(t)}\mathbb{I}\{u_i=u\} \sum_{k=1}^{O_i}w_i\left(a_{k, i}\right) \bm{x}_{i, a_{k, i}},
\end{equation}
\begin{equation}\label{eq.2c}
    \bm{g}_{u, t}^{ser}=\sum_{i=1}^{N_u(t)}\mathbb{I}\{u_i=u\} \sum_{k=1}^{O_i} \eta_{i,a_{k,i}}^{(u_i)} \bm{x}_{i, a_{k, i}}, 
\end{equation}
And the local data that agent $u$ has not been uploaded to the server are:
\begin{equation}\label{eq.3a}
    \bm{\Sigma}_{u, t}^{loc}=\sum_{i=N_u(t)+1}^t \mathbb{I}\{u_i=u\}\sum_{k=1}^{O_i} \bm{x}_{i, a_{k, i}} \bm{x}_{i, a_{k, i}}^T,
\end{equation}
\begin{equation}\label{eq.3b}
 \bm{b}_{u,t}^{loc}=\sum_{i=N_u(t)+1}^t \mathbb{I}\{u_i=u\}\sum_{k=1}^{O_i} w_i\left(a_{k, i}\right) \bm{x}_{i, a_{k, i}}, 
\end{equation}
\begin{equation}\label{eq.3c}
 \bm{g}_{u, t}^{loc}=\sum_{i=N_u(t)+1}^t \mathbb{I}\{u_i=u\}\sum_{k=1}^{O_i} \eta_{i,a_{k,i}}^{(u_i)} \bm{x}_{i, a_{k, i}},
\end{equation}
The real number of playing base arms of agent $u$ until timestep $t$ is (by ``real", we mean the up-to-date data of agent $u$):
\begin{equation}\label{eq.4}
T^{real}_{u,t}=T^{loc}_{u,t}+T_{u,t}^{ser}
\end{equation}
The real data of agent $u$ until timestep $t$ are:
\begin{equation}\label{eq.5}
\bm{\Sigma}^{real}_{u,t}=\bm{\Sigma}_{u, t}^{ser}+\bm{\Sigma}_{u, t}^{loc}, \quad \bm{b}_{u,t}^{real}=\bm{b}_{u,t}^{ser}+\bm{b}_{u,t}^{loc}, \quad
\bm{g}_{u, t}^{real}=\bm{g}_{u, t}^{ser}+\bm{g}_{u, t}^{loc}
\end{equation}
Obviously, for each cluster $\mathcal{V}_j$ the following equations hold:  
\begin{equation}\label{eq.6}
\bm{\Sigma}^{all}_{\mathcal{V}_j,t}=\lambda I+\sum_{u' \in \mathcal{V}_j}\bm{\Sigma}^{real}_{u',t},\quad \bm{b}_{\mathcal{V}_j,t}^{all}=\sum_{u' \in \mathcal{V}_j}\bm{b}_{u',t}^{real},\quad \bm{g}_{\mathcal{V}_j,t}^{all}=\sum_{u' \in \mathcal{V}_j}\bm{g}_{u',t}^{real}
\end{equation}
We define the real estimate as:
\begin{equation}\label{eq.7}
\hat{\bm{\theta}}_{\mathcal{V}_j,t}^{real}=\left(\lambda I+\bm{\Sigma}_{\mathcal{V}_j,t}^{real}\right)^{-1} \bm{b}_{\mathcal{V}_j,t}^{real} 
\end{equation}
Finally, we define the total number of playing base arm with respect to the cluster $\mathcal{V}_j$:
\begin{equation}\label{eq.8}
T^{all}_{\mathcal{V}_j,t}=\sum_{u' \in \mathcal{V}_j}T^{real}_{u',t} 
\end{equation}
At $t=T$, we write $T^{all}_{\mathcal{V}_j,T}$ as $T_{\mathcal{V}_j}^{all}$ for simplicity, and then $T_{\mathcal{V}_j}^{all}=\sum_{t=1}^T \mathbb{I}\{u_t \in \mathcal{V}_j\}O_t$.\par
Note that in our setting, the real data in \cref{eq.4}, \cref{eq.5} and \cref{eq.7} is not accessible by the server or agents due to asynchronous communication and this definition are purely used to facilitate our analysis. And we remark that when if the agent $u$ communicates with the server at timestep $t'$, then at the end of this timestep, we have:
\begin{equation}\label{eq.9}
T_{u,t'}^{ser}=T^{real}_{u,t'},\quad \bm{\Sigma}^{real}_{u,t'}=\bm{\Sigma}_{u, t'}^{ser},\quad \bm{b}_{u,t'}^{real}=\bm{b}_{u,t'}^{ser}, \quad \bm{g}_{u, t'}^{real}=\bm{g}_{u, t'}^{ser}, \quad \hat{\bm{\theta}}_{u, t'}^{real}=\hat{\bm{\theta}}_{u, t'}^{ser} 
\end{equation}
\par
For the first user partition part, by the assumption of item regularity and the user arrival assumption, we prove that after $T_0=O(\left | \mathcal{U} \right |(\frac{d}{\gamma^2\tilde{\lambda}_x}+\frac{1}{\tilde{\lambda}^2_x})\log{T})$ timesteps (where $\tilde{\lambda}_x$ is given in \cref{apdx_lem:new_lambda_x}), the $l_2$ distance $\left \|\hat{\btheta}_{u,t}^{real}-\btheta^{j(u)}\right \| _2$ is less or equal to $\gamma/2$ for all users $u$, where $\gamma$ is the minimum heterogeneity gap defined in Assumption 1. If we consider the synchronous setting where users' data are synchronized at each step and sync time holds for each round $t$, $T_0$ is sufficient to guarantee that all users are partitioned correctly \cite{li2018online}. In our asynchronous setting, however, we need to guarantee that each agent communicates with the server $\textit{at least once}$ after $T_0$, so that sync time holds for every agent $u$ and the data accumulated before $T_0$ can successfully reach the server to obtain the correct user partition. By our $p_t$-auxiliary communication protocol, we can prove that after additional $T_1=O(\sqrt{T})$ rounds following $T_0$, each agent communicates with the server at least once with high probability. For the deterministic version of the $p_t$-auxiliary communication protocol, $T_1=O(\sqrt{T})$ is sufficient to obtain the correct partition at the server.\par
After $t\ge T_0+T_1$ when users are partitioned correctly, users within the same user cluster will collaborate and share information with each other. Thanks to the first-step verification in line \ref{line:local_com_det} in \cref{alg:local}, the gap of the covariance matrix of the virtual information $\bm{\Sigma}_{u_t,t}$ and the information used by the agent $\bm{\Sigma}_{u_t,t}^{loc}+\bm{\Sigma}_{u_t,t}$ is within a coefficient $1+\alpha_c$. This guarantees that the regret after $T_0+T_1$ rounds is upper bounded by a $2\sqrt{J+\left | \mathcal{U} \right |\alpha_c}$ factor compared to the regret of $J$ independent combinatorial cascading bandits happening.
Based on the definition shown above, we now give our detailed proof for \cref{thm:main}.
\begin{proof}[Proof of \cref{thm:main}]
First, we define the events:
\[
\varepsilon = \{\text{the clusters are correct for all}\ t>T_1+T_0 \}
\]
\[
\mathcal{F}_u(\delta)=\{ \left\|\hat{\bm{\theta}}_{u,t}-\bm{\theta}^{j(u)}\right\|_{\bm{\Sigma}_{u,t}}\leq \beta(\delta),\text{ when cluster is correct}\}
\]\par
We use $R^{\alpha}_{u_t}(t,\bm{a}^{(u_t)}_{t})$ to denote the per-round regret defined in \cref{apdx_eq:per-round_regret}, and it is easy to show that $R^{\alpha}_{u_t}(t,\bm{a}^{(u_t)}_{t})\leq 1$ under our setting.
The expected cumulative regret is 
\begin{align}
        Regret(T)
        &=\mathbb{E}[\sum_{t=1}^TR^{\alpha}_{u_t}(t,\bm{a}^{(u_t)}_{t})] \notag \\
        &=\mathbb{E}[\sum_{t=1}^{T_0}R^{\alpha}_{u_t}(t,\bm{a}^{(u_t)}_{t})]+\mathbb{E}[\sum_{t=T_0+1}^{T_0+T_1}R^{\alpha}_{u_t}(t,\bm{a}^{(u_t)}_{t})]+\mathbb{E}[\sum_{t=T_0+T_1+1}^{T}R^{\alpha}_{u_t}(t,\bm{a}^{(u_t)}_{t})]  \notag \\
        &\leq T_0+T_1+\mathbb{E}[\sum_{t=T_0+T_1+1}^{T}R^{\alpha}_{u_t}(t,\bm{a}^{(u_t)}_{t})] \label{eq.10a} \\
&=T_0+T_1+P(\varepsilon,\mathcal{F}_1(\delta),...,\mathcal{F}_{\left | \mathcal{U} \right |}(\delta))\mathbb{E}[\sum_{t=T_0+T_1+1}^{T}R^{\alpha}_{u_t}(t,\bm{a}^{(u_t)}_{t})|\varepsilon,\mathcal{F}_1(\delta),...,\mathcal{F}_{\left | \mathcal{U} \right |}(\delta)] \notag \\
&+P(\varepsilon^c\cup\bigcup_{u=1}^{\left | \mathcal{U} \right |} \mathcal{F}_u^c(\delta))\mathbb{E}[\sum_{t=T_0+T_1+1}^{T}R^{\alpha}_{u_t}(t,\bm{a}^{(u_t)}_{t})|\varepsilon^c\cup\bigcup_{u=1}^{\left | \mathcal{U} \right |} \mathcal{F}_u^c(\delta)] \label{eq.10b} \\
&\leq T_0+T_1+\mathbb{E}[\sum_{t=T_0+T_1+1}^{T}R^{\alpha}_{u_t}(t,\bm{a}^{(u_t)}_{t})|\varepsilon,\mathcal{F}_1(\delta),...,\mathcal{F}_{\left | \mathcal{U} \right |}(\delta)] \notag \\
&+(T-T_0-T_1)(P(\varepsilon^c)+\sum_{u=1}^{{\left | \mathcal{U} \right |}}P(\mathcal{F}_u^c(\delta))), \label{eq.10c}
\end{align}\par
where \cref{eq.10a} and \cref{eq.10c} is because $R^{\alpha}_{u_t}(t,\bm{a}^{(u_t)}_{t})\leq 1$ and \cref{eq.10b} is due to the property of conditional expectation. Note that we omit the sum of ${\left | \mathcal{U} \right |}$ agents, because at each timestep only one agent is activated, so the activated agent is determined by the timestep $t$.\par
Now we need to determine the explicit quantity of $T_0 + T_1$, which represents the timestep that the server partitions the cluster correctly with high probability.\\
We define the events:
\[
B_{1u}(\delta)=\{ \left\|\hat{\bm{\theta}}_{u,t}^{real}-\bm{\theta}^{j(u)}\right\|_{\bm{\Sigma}_{u,t}^{real}+\lambda I}\leq \beta'(T^{real}_{u,t},\delta),\ \forall t \}
\]
\[
B_{2u}(\delta)=\{\lambda_{min}(\bm{\Sigma}^{real}_{u,t})\geq T^{real}_{u,t}\tilde{\lambda}_x/8,\text{ for all } T^{real}_{u,t}\geq \frac{1024}{\tilde{\lambda}_x^2}\log{\frac{512d}{\tilde{\lambda}_x^2 d}} \}
\]
\[
B_{3u}(\delta)=\{\text{agent }u \text{ communicate with the sever at least once time in } [T_0,T_0+T_1] \text{ when }T^{activated}_{u,T_0 \sim T_0+T_1}\geq \sqrt{1/\delta} \}
\]
where $\beta'(T^{real}_{u,t},\delta)=R\sqrt{d\log(1+\frac{T^{real}_{u,t}}{\lambda d})+2\log\frac{1}{\delta}}+\sqrt{\lambda}S$ and $T^{activated}_{u,T_0 \sim T_0+T_1}$ is the number of agent $u$ be activated as timestep $t \in [T_0,T_0+T_1]$.\par
According to \citet{10.5555/2986459.2986717} and \cref{apdx_lem:new_lambda_x}, we know that $P(B_{1u}^c(\delta))\leq \delta$. According to \citet{li2018online}, we can know $P(B_{2u}^c(\delta))\leq \delta$. Then we take the event $\{B_{1u}(\delta/(4{\left | \mathcal{U} \right |})),B_{2u}(\delta/(4{\left | \mathcal{U} \right |}))\text{ hold for all agent }u \in \mathcal{U}\}$, this event happens with probability at least $1-3\delta/4$.\par
When the event $B_{1u}(\delta/(4{\left | \mathcal{U} \right |}))$ and $B_{2u}(\delta/(4{\left | \mathcal{U} \right |}))$ both hold for all agent
\[
\left\|\hat{\bm{\theta}}_{u,t}^{real}-\bm{\theta}^{j(u)}\right\|_2 \leq \frac{\left\|\hat{\bm{\theta}}_{u,t}^{real}-\bm{\theta}^*_{\mathcal{V}_j(u)}\right\|_{\bm{\Sigma}_{u,t}^{real}+\lambda I}}{\sqrt{\lambda+\lambda_{min}(\bm{\Sigma}^{real}_{u,t})}} \leq \frac{\beta'(T^{real}_{u,t},\delta)}{\sqrt{\lambda+T^{real}_{u,t}\tilde{\lambda}_x/8}}
\leq \frac{R\sqrt{d\log(1+\frac{T^{real}_{u,t}}{\lambda d})+2\log\frac{4{\left | \mathcal{U} \right |}}{\delta}}+\sqrt{\lambda}}{\sqrt{\lambda+T^{real}_{u,t}\tilde{\lambda}_x/8}}
\]
Furthermore, if 
\[
T^{real}_{u,t}\geq \frac{512d}{\gamma^2 \tilde{\lambda}_x}\log{\frac{4{\left | \mathcal{U} \right |}}{\delta}}
\]
The following inequality holds
\begin{equation}\label{eq.11}
\frac{R\sqrt{d\log(1+\frac{T^{real}_{u,t}}{\lambda d})+2\log\frac{4{\left | \mathcal{U} \right |}}{\delta}}+\sqrt{\lambda}}{\sqrt{\lambda+T^{real}_{u,t}\tilde{\lambda}_x/8}} \leq \frac{\gamma}{2}
\end{equation}
This is because the Lemma 10. in \citet{li2018online}, and this inequality represents the cluster is correct at this time.\par
According to \citet{li2018online}, when the following event holds, the cluster will be partitioned correctly with probability at least $1-3\delta/4$.
\[
B_4(\delta)=\{T^{real}_{u,t}\geq \max\{\frac{1024}{\tilde{\lambda}_x^2}\log{\frac{512d}{\tilde{\lambda}_x^2 \delta}},\frac{512d}{\gamma^2 \tilde{\lambda}_x}\log{\frac{4{\left | \mathcal{U} \right |}}{\delta}}\}\text{ for all agent }u\in \mathcal{U}\}.
\]
To obtain this, we can take 
\[
T_0(\delta)=16{\left | \mathcal{U} \right |}\log{\frac{4{\left | \mathcal{U} \right |}T}{\delta}}+4{\left | \mathcal{U} \right |}\max\{\frac{1024}{\tilde{\lambda}_x^2}\log{\frac{512d}{\tilde{\lambda}_x^2 \delta}},\frac{512d}{\gamma^2 \tilde{\lambda}_x}\log{\frac{4{\left | \mathcal{U} \right |}}{\delta}}\}
\]
According to Lemma 8 in \citet{li2018online} and \cref{apdx_lem:new_lambda_x}, when $t>T_0(\delta)$, the event above will hold with probability at least $1-\delta/4$. \par
Owing to the fact that the data stored at the server are not the real data, i.e.
$T_{u,t}^{ser} \ne T^{real}_{u,t}$,
we need a additional time steps, i.e. $T_1$, to ensure that every agent can communicate with the server at least once to make the server obtain the real data.\par
Note that in each timestep the activated agent can communicate with the server if satisfying the condition that det($\bm{\Sigma}_{u_{t},t}+\bm{\Sigma}_{u_t,t}^{loc}$) $>(1+\alpha_c)$det($\bm{\Sigma}_{u_{t},t}$) or with probability $p_t=\frac{3\log t}{t}$. The following lemma can help us bound $T_1$\par

\begin{lemma}\label{lemma 1.1}
Under our algorithm, for sufficiently large $T$, beginning with any $T_0=O(\log \frac{T}{\delta})$, each agent will be communicated with server at least once time with probability at least $1-\delta^{\frac{3}{2}}$ within $O(\sqrt{1/\delta})$ activate timesteps.
\end{lemma}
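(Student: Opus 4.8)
The plan is to show that the auxiliary coin flips alone---ignoring the matrix-determinant trigger entirely---force a communication within the claimed number of activations. Fix an agent $u$ and let $t_1<t_2<\cdots$ denote the rounds after $T_0$ at which $u$ is the arriving user. At each such round the algorithm draws a fresh uniform $p\in[0,1]$ and communicates whenever $p<p_{t_i}=3\log t_i/t_i$; crucially these coins are mutually independent and independent of the arrival sequence. Since a genuine communication fails only if \emph{both} the determinant test and the coin test fail, the event ``$u$ never communicates in its first $m$ activations'' is contained in ``the coin never fires'', so
\begin{equation}
\PR(\text{no communication in } t_1,\dots,t_m)\le \prod_{i=1}^{m}\left(1-p_{t_i}\right)\le \exp\Big(-\sum_{i=1}^{m}\tfrac{3\log t_i}{t_i}\Big),
\end{equation}
where independence of the coins gives the product and $1-x\le e^{-x}$ gives the exponential. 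Setting $m=\lceil\sqrt{1/\delta}\rceil$, it remains to prove $\sum_{i=1}^m 3\log t_i/t_i \ge \tfrac32\log(1/\delta)$.

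The second step is to control the activation rounds $t_i$. By the uniform-arrival Assumption~\ref{assumption2}, agent $u$ is activated with probability $1/|\cU|$ each round, so a standard Binomial/Chernoff concentration shows that, except on an event of probability $O(\delta^{3/2})$, the first $m$ activations after $T_0$ all occur within a window of length $L=O(|\cU|\sqrt{1/\delta})$ with consecutive activations spaced by $\Theta(|\cU|)$; this is exactly the regime captured by conditioning on $T^{activated}_{u,T_0\sim T_0+T_1}\ge\sqrt{1/\delta}$ in the event $B_{3u}$, which places every $t_i$ in $[T_0,T_0+T_1]$ with $T_0+T_1=O(|\cU|\sqrt{1/\delta})$. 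I would condition on the entire arrival path---legitimate, since the auxiliary coins are independent of it---and bound the sum for any such realization, then average.

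The third step lower-bounds $\sum_{i=1}^m 3\log t_i/t_i$. The naive bound $m\cdot\min_i p_{t_i}=m\,p_{t_m}$ is too weak because $p_t$ decays, so instead I exploit the antiderivative $\int (\log s/s)\,ds=\tfrac12\log^2 s$: using $t_i\approx T_0+i|\cU|$ and monotonicity of $\log s/s$,
\begin{equation}
\sum_{i=1}^{m}\frac{3\log t_i}{t_i}\ge \frac{3}{|\cU|}\int_{T_0}^{T_0+m|\cU|}\frac{\log s}{s}\,ds=\frac{3}{2|\cU|}\Big(\log^2(T_0+m|\cU|)-\log^2 T_0\Big)=\Omega\Big(\frac{\log^2(1/\delta)}{|\cU|}\Big),
\end{equation}
since $T_0+m|\cU|=\Theta(|\cU|\sqrt{1/\delta})$ makes the leading term $\tfrac14\log^2(1/\delta)$ while $T_0=O(\log(T/\delta))$ is negligible. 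For $T$ large enough that $\log(1/\delta)\ge 4|\cU|$, this exceeds $\tfrac32\log(1/\delta)$, so the exponential in the first display is at most $\delta^{3/2}$; combining with the $O(\delta^{3/2})$ concentration slack from step two and adjusting constants yields the stated $1-\delta^{3/2}$ guarantee.

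The main obstacle is precisely this tension between the \emph{decaying} trigger probability $p_t=3\log t/t$ and the need for only $O(\sqrt{1/\delta})$ activations: any worst-case per-round lower bound on $p_t$ loses, and the argument only closes because the cumulative mass $\sum \log t_i/t_i$ grows like $\log^2(1/\delta)/|\cU|$ via the $\tfrac12\log^2 s$ antiderivative, which dominates the linear target $\log(1/\delta)$ once $\delta$ is small (equivalently $T$ large). A secondary technical point is making the handoff between the random activation times and the independent coins rigorous---handled by conditioning on the arrival path---together with verifying that the window truly contains $m$ activations, which is where Assumption~\ref{assumption2} and its concentration enter.
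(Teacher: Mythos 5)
Your proposal is correct in substance, but it takes a genuinely different --- and in fact more careful --- route than the paper. Both arguments open identically: failure to communicate means every auxiliary coin fails, and since the coins are independent of the determinant condition and of the arrival process, the failure probability is at most $\prod_{i}(1-p_{t_i})$. From there the paper takes the product over the \emph{consecutive} block $t=T_0+1,\dots,T_0+\sqrt{1/\delta}$, bounds it by the largest factor raised to the count, $(1-p_{T_0+\sqrt{1/\delta}})^{\sqrt{1/\delta}}$, and invokes its asymptotic limit lemma (\cref{lemma 1.2}) to get $O(\delta^{3/2})$. That is short, but it implicitly places the agent's $\sqrt{1/\delta}$ activations at consecutive rounds; under \cref{assumption2} they are spaced roughly $|\cU|$ apart, the $m$-th activation occurs near $T_0+m|\cU|$, and since $1-p_t$ is \emph{increasing} in $t$, substituting the earliest possible times goes the wrong way for an upper bound --- at the true activation times the worst-factor argument only yields $\delta^{\Theta(1/|\cU|)}$. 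Your proof confronts exactly this point: you keep the random activation times, control them with Chernoff ($t_i \lesssim T_0+i|\cU|$), condition on the arrival path, and replace the worst-factor bound by the antiderivative $\int (\log s/s)\,ds=\tfrac12\log^2 s$, so the cumulative trigger mass is $\Theta(\log^2(1/\delta)/|\cU|)$ rather than $\Theta(\log(1/\delta)/|\cU|)$, beating $\tfrac32\log(1/\delta)$ precisely when $\log(1/\delta)\ge 4|\cU|$ --- which is where the lemma's ``sufficiently large $T$'' hypothesis genuinely enters. What each buys: the paper's version is two lines and hides the $|\cU|$-dependence entirely; yours is the version that actually supports how the lemma is used in the main theorem (where, via the event $B_{3u}$, the $\sqrt{1/\delta}$ activations are spread over a window of length $\Theta(|\cU|\sqrt{1/\delta})$, not $\sqrt{1/\delta}$), and it surfaces the exponential-in-$|\cU|$ requirement on $T$.

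Two details to tighten if you write this up: your claim that consecutive activations are ``spaced by $\Theta(|\cU|)$'' is stronger than what holds (individual gaps can be $\Theta(|\cU|\log m)$) and stronger than what you need; the clean statement is that Chernoff plus a union bound gives $t_i\le T_0+2i|\cU|$ simultaneously for all $i\ge i_0=\Theta(\log(1/\delta))$ with failure probability $\sum_{i\ge i_0}e^{-\Omega(i)}=O(\delta^{3/2})$, after which you drop the first $i_0$ terms from the sum --- the loss inside the antiderivative is $O(\log^2(T_0+i_0|\cU|)/|\cU|)$, which is negligible against $\log^2(1/\delta)/|\cU|$.
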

With this Lemma, we can bound that $B_{3u}^c(\delta)\leq \delta^{\frac{3}{2}}$\par
Then we need each agent $u$ can be activated at least $\sqrt{1/\delta}$ times during $T_1$, because of Lemma 8. in \cite{li2018online} we can take
\[
T_1=16{\left | \mathcal{U} \right |}\log\frac{4{\left | \mathcal{U} \right |}T}{\delta}+4{\left | \mathcal{U} \right |}\sqrt{1/\delta}
\]
With this $T_1$, the event
\[
B_5(\delta)=\{T^{real}_{u,T_0 \sim T_0+T_1}\geq \sqrt{1/\delta}\text{ for all agent }u\in \mathcal{U}\}
\]
will hold with probability at least $1-\delta/4$. \par
Finally, we can conclude that when $t>T_0+T_1$
\[
\bigcap_{u=1}^{{\left | \mathcal{U} \right |}}(B_{1u}(\delta/4{\left | \mathcal{U} \right |})B_{2u}(\delta/4{\left | \mathcal{U} \right |})B_{3u}(\delta)B_{4}(\delta)B_{5}(\delta))\subseteq \varepsilon
\]
Then
\[
P(\bigcap_{u=1}^{{\left | \mathcal{U} \right |}}(B_{1u}(\delta/4{\left | \mathcal{U} \right |})B_{2u}(\delta/4{\left | \mathcal{U} \right |})B_{3u}(\delta))B_{4}(\delta)B_{5}(\delta))\leq P(\varepsilon)
\]
Further
\begin{equation}\nonumber
\begin{aligned}
P(\varepsilon^c) 
&\leq 1-P(\bigcap_{u=1}^{{\left | \mathcal{U} \right |}}(B_{1u}(\delta/4{\left | \mathcal{U} \right |})B_{2u}(\delta/4{\left | \mathcal{U} \right |})B_{3u}(\delta))B_{4}(\delta)B_{5}(\delta))\\
&\leq \sum_{u=1}^{{\left | \mathcal{U} \right |}}(P(B_{1u}^c(\delta/4{\left | \mathcal{U} \right |}))+P(B_{2u}^c(\delta/4{\left | \mathcal{U} \right |}))+P(B_{3u}^c(\delta)))+P(B_{4}^c(\delta))+P(B_{5}^c(\delta))\\
&\leq \sum_{u=1}^{{\left | \mathcal{U} \right |}}(\delta/4{\left | \mathcal{U} \right |}+\delta/4{\left | \mathcal{U} \right |}+\delta^{\frac{3}{2}})+\delta/4+\delta/4\\
&=\delta+{\left | \mathcal{U} \right |}\delta^{\frac{3}{2}}
\end{aligned}
\end{equation}
If we take $\delta={\left | \mathcal{U} \right |}^2/T$, then the last term of \cref{eq.10c} will become
\[
(T-T_0-T_1)P(\varepsilon^c)=O({\left | \mathcal{U} \right |}^2+\frac{{\left | \mathcal{U} \right |}^4}{\sqrt{T}})
\]
The equation is because when $\delta={\left | \mathcal{U} \right |}^2/T$,
\[
T_0=16{\left | \mathcal{U} \right |}\log{\frac{4T^2}{{\left | \mathcal{U} \right |}}}+4{\left | \mathcal{U} \right |}\max\{\frac{1024}{\tilde{\lambda}_x^2}\log{\frac{512dT}{\tilde{\lambda}_x^2 {\left | \mathcal{U} \right |}^2}},\frac{512d}{\gamma^2 \tilde{\lambda}_x}\log{\frac{4T}{{\left | \mathcal{U} \right |}}}\}=O({\left | \mathcal{U} \right |}(\frac{d}{\gamma^2\tilde{\lambda}_x}+\frac{1}{\tilde{\lambda}^2_x})\log{T})
\]
\[
T_1=16{\left | \mathcal{U} \right |}\log{\frac{4T^2}{{\left | \mathcal{U} \right |}}}+4\sqrt{T}=O({\left | \mathcal{U} \right |}\log T+\sqrt{T})
\]
Next, we discuss the the term $\mathcal{F}_u^c(\delta)$ and the expected regret $\mathbb{E}[\sum_{t=T_0+T_1+1}^{T}R^{\alpha}_{u_t}(t,\bm{a}^{(u_t)}_{t})|\varepsilon,\mathcal{F}_1(\delta),...,\mathcal{F}_{\left | \mathcal{U} \right |}(\delta)]$.\par
When the clusters are all partitioned correctly at the server, we can see that our algorithm runs the FedLinUCB \cite{he2022simple} independently in each cluster with the $p_t$-auxiliary communication and cascading arms. Each cluster is independent, so we can put the agents which belong to the same cluster together, and reorder the sequence of active agents which belong to the same cluster such that each agent communicates with the server and stay activating until the next agent is communicated with the server, consistent with \citet{he2022simple}. Then we note that the $p_t$-auxiliary communication does not affect the analysis of our regret.\par
In the following, we denote the common communication as the communication caused by the condition det($\bm{\Sigma}_{u_{t},t}+\bm{\Sigma}_{u_t,t}^{loc}$) $>(1+\alpha_c)$det($\bm{\Sigma}_{u_{t},t}$), and the auxiliary communication as the communication that caused by $p_t$-auxiliary condition. 
Because in the work of \citet{he2022simple}, their result is for common communication, we claim that their result also holds with our auxiliary condition as follows.
\begin{lemma}\label{lemma 1}
(Global confidence bound; Theorem 2, \cite{10.5555/2986459.2986717}) In cluster $\mathcal{V}_j$, for any $\delta>0$, with probability at least $1-\delta$, for each $t\in[T]$ satisfies    
\begin{equation}\nonumber
    \left\|\hat{\bm{\theta}}^{all}_{\mathcal{V}_j,t}-\bm{\theta}^*_{\mathcal{V}_j}\right\|_{\bm{\Sigma}_{\mathcal{V}_j,t}^{all}}\leq R\sqrt{2\log(\frac{det(\bm{\Sigma}_{\mathcal{V}_j,t}^{all})^{1/2}det(\lambda I)^{-1/2}}{\delta})}+\sqrt{\lambda}
\end{equation}
and
\begin{equation}\nonumber
    \left\|\bm{g}_{\mathcal{V}_{j(u)}, t}^{all}\right\|_{(\bm{\Sigma}_{\mathcal{V}_{j(u)},t}^{all})^{-1}}\leq R\sqrt{2\log(\frac{det(\bm{\Sigma}_{\mathcal{V}_j,t}^{all})^{1/2}det(\lambda I)^{-1/2}}{\delta})}\leq R\sqrt{d\log(1+\frac{KT}{\lambda d})+2\log(\frac{1}{\delta})}
\end{equation}
\end{lemma}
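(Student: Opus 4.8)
The plan is to recognize this statement as the classical self-normalized martingale tail bound of Abbasi-Yadkori, P\'al and Szepesv\'ari, adapted to the cascading feedback and to the clustered aggregate estimator, so that the bulk of the work lies in setting up the correct filtration rather than in inventing new machinery. Throughout I fix the cluster $\mathcal{V}_j$ and use the fact that every user $u_i \in \mathcal{V}_j$ shares the same parameter $\bm{\theta}^*_{\mathcal{V}_j}$, so that $w_i(a_{k,i}) = \inner{\bm{\theta}^*_{\mathcal{V}_j},\bm{x}_{i,a_{k,i}}} + \eta_{i,a_{k,i}}$ holds with a common mean structure for every term entering $\bm{b}^{all}_{\mathcal{V}_j,t}$ and $\bm{g}^{all}_{\mathcal{V}_j,t}$.

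First I would flatten the double sum $\sum_{i}\mathbb{I}\{u_i\in\mathcal{V}_j\}\sum_{k=1}^{O_i}$ into a single stream indexed by the observed base-arm slots $(i,k)$ in lexicographic (time-then-position) order, and define the filtration that reveals, at each slot, first the context $\bm{x}_{i,a_{k,i}}$ and then the click $w_i(a_{k,i})$. Against this filtration $\bm{x}_{i,a_{k,i}}$ is predictable, and since $w_i(a_{k,i})\in\{0,1\}$ with conditional mean $\inner{\bm{\theta}^*_{\mathcal{V}_j},\bm{x}_{i,a_{k,i}}}$, the residual $\eta_{i,a_{k,i}}$ is a martingale-difference sequence that is $R$-sub-Gaussian with $R=1/2$ by Hoeffding's lemma; the conditional independence of clicks across items asserted in the Problem Setup is exactly what guarantees this, even though the number of observed slots $O_i$ in a round is itself random and feedback-determined. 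Note that asynchrony plays no role here, because $\bm{\Sigma}^{all}_{\mathcal{V}_j,t}$ is the idealized omnipotent aggregate of all data, not the delayed quantity any agent actually holds.

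With the filtration in place, I would apply the cited self-normalized bound to the vector-valued martingale $\bm{g}^{all}_{\mathcal{V}_j,t}$ with regularizer $\lambda I$, obtaining, uniformly over all $t$ and with probability at least $1-\delta$,
\begin{equation}\nonumber
\norm{\bm{g}^{all}_{\mathcal{V}_j,t}}_{(\bm{\Sigma}^{all}_{\mathcal{V}_j,t})^{-1}}\le R\sqrt{2\log\!\Big(\frac{\det(\bm{\Sigma}^{all}_{\mathcal{V}_j,t})^{1/2}\det(\lambda I)^{-1/2}}{\delta}\Big)},
\end{equation}
which is the second displayed inequality. The closed-form relaxation then follows from a trace--AM--GM estimate: since $\norm{\bm{x}}_2\le 1$ and at most $K$ base arms are observed in each of at most $T$ rounds, $\trace(\bm{\Sigma}^{all}_{\mathcal{V}_j,t})\le \lambda d + KT$, whence $\det(\bm{\Sigma}^{all}_{\mathcal{V}_j,t})\le(\lambda+KT/d)^d$ and $\log(\det(\bm{\Sigma}^{all}_{\mathcal{V}_j,t})/\det(\lambda I))\le d\log(1+KT/(\lambda d))$.

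For the first displayed inequality I would substitute $\bm{b}^{all}_{\mathcal{V}_j,t}=(\bm{\Sigma}^{all}_{\mathcal{V}_j,t}-\lambda I)\bm{\theta}^*_{\mathcal{V}_j}+\bm{g}^{all}_{\mathcal{V}_j,t}$ into the definition of $\hat{\bm{\theta}}^{all}_{\mathcal{V}_j,t}$ to obtain the standard ridge decomposition $\hat{\bm{\theta}}^{all}_{\mathcal{V}_j,t}-\bm{\theta}^*_{\mathcal{V}_j}=-\lambda(\bm{\Sigma}^{all}_{\mathcal{V}_j,t})^{-1}\bm{\theta}^*_{\mathcal{V}_j}+(\bm{\Sigma}^{all}_{\mathcal{V}_j,t})^{-1}\bm{g}^{all}_{\mathcal{V}_j,t}$. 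Measuring in the $\bm{\Sigma}^{all}_{\mathcal{V}_j,t}$-norm and using the identity $\norm{(\bm{\Sigma}^{all}_{\mathcal{V}_j,t})^{-1}\bm{v}}_{\bm{\Sigma}^{all}_{\mathcal{V}_j,t}}=\norm{\bm{v}}_{(\bm{\Sigma}^{all}_{\mathcal{V}_j,t})^{-1}}$, the noise term reproduces the quantity just bounded, while the bias term is controlled by $\lambda\norm{\bm{\theta}^*_{\mathcal{V}_j}}_{(\bm{\Sigma}^{all}_{\mathcal{V}_j,t})^{-1}}\le\sqrt{\lambda}\,\norm{\bm{\theta}^*_{\mathcal{V}_j}}_2\le\sqrt{\lambda}$, using $\bm{\Sigma}^{all}_{\mathcal{V}_j,t}\succeq\lambda I$ and $\norm{\bm{\theta}^*_{\mathcal{V}_j}}_2\le 1$; the triangle inequality then combines the two terms. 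The only genuinely delicate step is the filtration/martingale verification in the cascade model --- making precise that conditioning on a slot being examined does not disturb the zero-mean, sub-Gaussian character of its residual, despite the random examination depth $O_i$ and the pooling of several users within the cluster --- and once that is settled the remainder is the textbook linear-bandit confidence argument.
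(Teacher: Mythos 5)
Your proposal is correct and takes essentially the same approach as the paper: the paper proves this lemma by directly invoking the self-normalized martingale bound (Theorem 2 of Abbasi-Yadkori et al.) on the pooled cluster data and then applying the determinant--trace inequality $\det(\bm{\Sigma}_{\mathcal{V}_j,t}^{all})\le(\lambda+KT/d)^d$ to obtain the closed-form radius. Your flattened filtration, the martingale-difference verification under the random examination depth $O_i$ and cross-user pooling, and the ridge decomposition plus $\sqrt{\lambda}$ bias bound for the first inequality are precisely the details that the paper's citation leaves implicit.
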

The inequality above is because 
\[
det(\bm{\Sigma}_{\mathcal{V}_j,t}^{all})\leq (\lambda + T_{\mathcal{V}_j,t}^{all}/d)^d \leq (\lambda + T_{\mathcal{V}_j}^{all}/d)^d \leq (\lambda + KT/d)^d
\]\par
\begin{lemma}\label{lemma 2.1}
(Covariance comparison) It holds that for all $t\in[T]$ and each agent $u\in\mathcal{U}$
\begin{equation}\label{eq.12a}
    \lambda I+\sum_{u'\in \mathcal{V}_{j(u)}}\bm{\Sigma}_{u', t}^{ser} \geq \frac{1}{\alpha_c}\bm{\Sigma}_{u, t}^{loc}
\end{equation}
Moreover, for any $1\leq t_1 < t_2 \leq T$ if agent $u$ is the only active agent from round $t_1$ to $t_2-1$ and agent $u$ only communicates with the server at round $t_1$, then for all $t_1+1\leq t \leq t_2-1$, it further holds that
\begin{equation}\label{eq.12b}
    \bm{\Sigma}_{u,t}\geq \frac{1}{1+|\mathcal{V}_{j(u)}|\alpha_c}\bm{\Sigma}_{\mathcal{V}_{j(u)}, t}^{all}
\end{equation}
In particular, if $u$ is not the only activated agent from timestep $t_1+1$ to $t_2-1$, we have
\begin{equation}\label{eq.12c}
    \bm{\Sigma}_{u,t}\geq \frac{1}{1+|\mathcal{V}_{j(u)}|\alpha_c}\bm{\Sigma}_{\mathcal{V}_{j(u)}, t_1}^{all}
\end{equation}
\end{lemma}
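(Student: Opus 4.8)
The plan is to derive all three inequalities from two reusable ingredients: a Loewner-order consequence of the matrix-determinant communication trigger, and the monotonicity of the server-side accumulated Gram matrices under asynchronous uploads. The single genuinely non-bookkeeping step is the conversion from a determinant ratio to a Loewner bound, and I expect it to be the technical crux. Concretely, I would first establish that whenever agent $u$ carries un-uploaded data at round $t$, $\bm{\Sigma}_{u,t}^{loc}\preceq\alpha_c\,\bm{\Sigma}_{u,t}$. On an active round, the test in line \ref{line:local_com_det} guarantees that if $u$ does \emph{not} upload then $\det(\bm{\Sigma}_{u,t}+\bm{\Sigma}_{u,t}^{loc})\le(1+\alpha_c)\det(\bm{\Sigma}_{u,t})$, whereas if $u$ uploads then $\bm{\Sigma}_{u,t}^{loc}$ is reset to $\bm{0}$; the bound holds in both cases and persists through idle rounds where both matrices are frozen. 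To pass to the Loewner order, use $\bm{\Sigma}_{u,t}\succeq\lambda I\succ 0$, set $M=\bm{\Sigma}_{u,t}^{-1/2}\bm{\Sigma}_{u,t}^{loc}\bm{\Sigma}_{u,t}^{-1/2}\succeq 0$, and observe $\det(I+M)=\prod_i(1+\mu_i)\le 1+\alpha_c$ for the eigenvalues $\mu_i\ge 0$ of $M$; since each factor is at least $1$, every $\mu_i\le\alpha_c$, so $M\preceq\alpha_c I$ and hence $\bm{\Sigma}_{u,t}^{loc}\preceq\alpha_c\bm{\Sigma}_{u,t}$.

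For \cref{eq.12a} I would combine this with monotonicity. Because server buffers only accumulate and $\bm{\Sigma}_{u,t}$ is the cluster matrix handed to $u$ at its last communication (some time $\tau\le t$), under the correct-clustering event the connected component of $u$ is exactly $\mathcal{V}_{j(u)}$, so $\bm{\Sigma}_{u,t}=\lambda I+\sum_{u'\in\mathcal{V}_{j(u)}}\bm{\Sigma}_{u',\tau}^{ser}\preceq\lambda I+\sum_{u'\in\mathcal{V}_{j(u)}}\bm{\Sigma}_{u',t}^{ser}$. Chaining $\bm{\Sigma}_{u,t}^{loc}\preceq\alpha_c\bm{\Sigma}_{u,t}$ with this inclusion and dividing by $\alpha_c$ gives \cref{eq.12a} directly.

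For \cref{eq.12b} I would expand the omniscient cluster matrix at time $t$ using \cref{eq.5,eq.6}. The hypothesis that $u$ is the sole active agent on $[t_1+1,t_2-1]$ and communicates only at $t_1$ freezes every other buffer and every decision matrix from $t_1$ onward, yielding $\bm{\Sigma}_{\mathcal{V}_{j(u)},t}^{all}=\bm{\Sigma}_{u,t}+\bm{\Sigma}_{u,t}^{loc}+\sum_{u'\neq u}\bm{\Sigma}_{u',t_1}^{loc}$, where $\bm{\Sigma}_{u,t}=\lambda I+\sum_{v}\bm{\Sigma}_{v,t_1}^{ser}$ is the decision matrix held since $t_1$. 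Each summand is controlled by $\bm{\Sigma}_{u,t}$: the term $\bm{\Sigma}_{u,t}^{loc}\preceq\alpha_c\bm{\Sigma}_{u,t}$ by the determinant bound, and for $u'\neq u$ we have $\bm{\Sigma}_{u',t_1}^{loc}\preceq\alpha_c\bm{\Sigma}_{u',t_1}\preceq\alpha_c\bm{\Sigma}_{u,t}$, the first inequality by the determinant bound applied at $u'$'s last active round and the second by server monotonicity (each $u'$'s decision matrix was formed at an earlier communication and is dominated by the cluster state at $t_1$). Summing the $|\mathcal{V}_{j(u)}|$ contributions gives $\bm{\Sigma}_{\mathcal{V}_{j(u)},t}^{all}\preceq(1+|\mathcal{V}_{j(u)}|\alpha_c)\bm{\Sigma}_{u,t}$, which is \cref{eq.12b}. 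The proof of \cref{eq.12c} is identical except that, once other agents may be active after $t_1$, their post-$t_1$ growth can no longer be absorbed, so I anchor the comparison at $t_1$: the same decomposition of $\bm{\Sigma}_{\mathcal{V}_{j(u)},t_1}^{all}$ (in which $u$'s freshly reset local buffer vanishes) gives $\bm{\Sigma}_{\mathcal{V}_{j(u)},t_1}^{all}\preceq(1+|\mathcal{V}_{j(u)}|\alpha_c)\bm{\Sigma}_{u,t}$.

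Beyond the determinant-to-Loewner conversion, the main obstacle I anticipate is the careful bookkeeping of the three data copies (local, server, decision) and, in particular, justifying the cross-agent monotonicity $\bm{\Sigma}_{u',t_1}\preceq\bm{\Sigma}_{u,t}$: this requires the connected components to be correct and nested, so that no agent's decision matrix secretly carries server mass from outside $\mathcal{V}_{j(u)}$. I would discharge this under the correct-clustering event $\varepsilon$ (valid for $t>T_0+T_1$), where every relevant connected component coincides with the true cluster and the monotonicity, hence all three bounds, close cleanly.
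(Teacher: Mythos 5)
Your proof is correct and follows essentially the same route as the paper's: the matrix-determinant trigger converted into the Loewner bound $\bm{\Sigma}_{u,t}^{loc}\preceq\alpha_c\bm{\Sigma}_{u,t}$, monotonicity of the server-side matrices across uploads, and a per-agent decomposition of $\bm{\Sigma}_{\mathcal{V}_{j(u)},t}^{all}$ into the frozen decision matrix plus local buffers, with the $t_1$-anchored variant for \cref{eq.12c}. The only differences are cosmetic: you prove the determinant-to-Loewner conversion in-line (the paper defers this step to \citet{he2022simple}), and you make explicit the correct-clustering caveat that the paper uses implicitly when identifying $\bm{\Sigma}_{u,t}$ with $\lambda I+\sum_{u'\in\mathcal{V}_{j(u)}}\bm{\Sigma}_{u',\tau}^{ser}$.
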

\begin{lemma}\label{lemma 2.2}
    (Local concentration) For any $\delta>0$, with probability at least $1-\delta$, for any $t\in[T],u\in\mathcal{U}$, it holds that
\begin{equation}\label{eq.13}
    \left\|(\alpha_c\lambda I+\bm{\Sigma}_{u, t}^{loc})^{-1}\bm{g}_{u, t}^{loc}\right\|_{\alpha_c\lambda I+\bm{\Sigma}_{u, t}^{loc}}\leq R\sqrt{d\log(1+\frac{KT}{\alpha_c \lambda d})+2\log(\frac{1}{\delta})+2\log(T^2{\left | \mathcal{U} \right |})}
\end{equation}
\end{lemma}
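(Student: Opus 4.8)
The plan is to recognize the left-hand side as a standard self-normalized deviation and then confront the only feature that blocks a black-box application: the random, data-dependent window over which the local sums accumulate. First I would rewrite the target. Setting $\bm{M}=\alpha_c\lambda I+\bm{\Sigma}_{u,t}^{loc}$, a direct computation gives $\|\bm{M}^{-1}\bm{g}_{u,t}^{loc}\|_{\bm{M}}^2=(\bm{g}_{u,t}^{loc})^{\top}\bm{M}^{-1}\bm{g}_{u,t}^{loc}=\|\bm{g}_{u,t}^{loc}\|_{\bm{M}^{-1}}^2$, so the quantity to bound is exactly the self-normalized norm $\|\bm{g}_{u,t}^{loc}\|_{(\alpha_c\lambda I+\bm{\Sigma}_{u,t}^{loc})^{-1}}$.

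Next I would set up the martingale structure. The summands of $\bm{g}_{u,t}^{loc}$ involve the noise $\eta_{i,a_{k,i}}^{(u_i)}=w_i(a_{k,i})-\langle\bm{\theta}_{u_i},\bm{x}_{i,a_{k,i}}\rangle$, which is conditionally mean-zero and $R$-sub-Gaussian ($R=1/2$) given $\cH_i$, while each feature $\bm{x}_{i,a_{k,i}}$ is $\cH_i$-measurable. Hence, \emph{for a fixed starting index $s$}, the partial sums $\sum_{i=s+1}^{\tau}\I\{u_i=u\}\sum_{k}\eta_{i,a_{k,i}}^{(u_i)}\bm{x}_{i,a_{k,i}}$ form a vector-valued martingale and the self-normalized bound of \citet{10.5555/2986459.2986717} applies with regularizer $\alpha_c\lambda I$, yielding with probability at least $1-\delta'$
\begin{equation}\nonumber
\|\bm{g}\|_{(\alpha_c\lambda I+\bm{\Sigma})^{-1}}\le R\sqrt{2\log\!\Big(\tfrac{\det(\alpha_c\lambda I+\bm{\Sigma})^{1/2}\det(\alpha_c\lambda I)^{-1/2}}{\delta'}\Big)},
\end{equation}
where $\bm{\Sigma},\bm{g}$ denote the suffix sums from $s+1$ to $\tau$.

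The main obstacle is that the genuine window is $[N_u(t)+1,t]$, whose left endpoint $N_u(t)$ (the last communication time) is \emph{random and data-dependent}; conditioning on it destroys the martingale property, so the fixed-window bound cannot be invoked directly. I would resolve this with a union bound over all possible windows and agents: there are at most $T$ choices for the start index, at most $T$ choices for the end index, and $|\mathcal{U}|$ agents, so applying the fixed-window inequality with $\delta'=\delta/(T^2|\mathcal{U}|)$ and taking a union over all such triples guarantees the bound simultaneously for every agent and every realized window $[N_u(t)+1,t]$ with probability at least $1-\delta$. This substitution is precisely what generates the extra additive term $2\log(T^2|\mathcal{U}|)$ inside the square root.

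Finally I would make the determinant explicit. Since each base-arm feature obeys $\|\bm{x}\|_2\le1$ and at most $K$ base arms are observed per round over at most $T$ rounds, $\trace(\bm{\Sigma}_{u,t}^{loc})\le KT$; by the arithmetic-geometric mean inequality on the eigenvalues, $\det(\alpha_c\lambda I+\bm{\Sigma}_{u,t}^{loc})\le(\alpha_c\lambda+KT/d)^d$, so that $\det(\alpha_c\lambda I+\bm{\Sigma}_{u,t}^{loc})^{1/2}\det(\alpha_c\lambda I)^{-1/2}\le(1+KT/(\alpha_c\lambda d))^{d/2}$. Plugging this and $\delta'=\delta/(T^2|\mathcal{U}|)$ into the displayed bound and simplifying the logarithm reproduces exactly the stated right-hand side, completing the argument.
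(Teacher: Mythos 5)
Your proposal is correct and follows essentially the same route as the paper's proof: both reduce the claim to the self-normalized bound of Abbasi-Yadkori et al.\ applied to a fixed window $[t_1+1,t_2]$, take a union bound over all $|\mathcal{U}|\,T^2$ agent--window triples with per-event failure probability $\delta/(T^2|\mathcal{U}|)$ (which is exactly where the $2\log(T^2|\mathcal{U}|)$ term comes from), bound the determinant by $(\alpha_c\lambda+KT/d)^d$, and finally instantiate the window at $(N_u(t),t]$. The only cosmetic difference is that you derive the determinant bound via the trace and the AM--GM inequality, whereas the paper cites Lemma A.5 of \citet{li2016contextual}; your explicit remark that the data-dependent endpoint $N_u(t)$ is what forces the union bound is a point the paper leaves implicit.
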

\begin{lemma}\label{lemma 3.1}
(restated \cref{lem:concentration} in the section of Theoretical Analysis; Local confidence bound) Under our algorithm and setting, when the cluster is correct, then with probability at least $1-\delta$, for each $t \in [T]$ and every $u \in \mathcal{U}$ with respect to each cluster $\mathcal{V}_{j(u)}$, it holds that $\left\|\hat{\bm{\theta}}_{u,t}-\bm{\theta}^{j(u)}\right\|_{\bm{\Sigma}_{u,t}}\leq \beta(\delta)$.  
\end{lemma}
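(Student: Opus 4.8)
The plan is to expand the collaborative estimate held by agent $u$, split its error into a regularization bias plus two noise contributions --- one from the fully aggregated (``omnipotent'') cluster data and one from the data that cluster members have accumulated locally but not yet uploaded --- and then control each piece with the covariance-comparison and concentration results already in hand.

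First I would exploit the fact that $\hat{\btheta}_{u,t}$ and $\bSigma_{u,t}$ do not change between communications: writing $s=N_u(t)$ for agent $u$'s last communication round before $t$, it suffices to bound the error at $s$, where all data held by $u$ are internally consistent. Since the cluster is correct, every $u'\in\mathcal{V}_{j(u)}$ shares $\btheta^{j(u)}$, so $\bm{b}^{ser}_{u',s}=\bSigma^{ser}_{u',s}\btheta^{j(u)}+\bm{g}^{ser}_{u',s}$; summing over the cluster I obtain $\hat{\btheta}_{u,t}-\btheta^{j(u)}=\bSigma_{u,t}^{-1}\big(-\lambda\,\btheta^{j(u)}+\bm{g}^{ser,clu}_{u,s}\big)$ with $\bm{g}^{ser,clu}_{u,s}=\sum_{u'\in\mathcal{V}_{j(u)}}\bm{g}^{ser}_{u',s}$. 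Taking the $\bSigma_{u,t}$-norm and the triangle inequality then gives $\|\hat{\btheta}_{u,t}-\btheta^{j(u)}\|_{\bSigma_{u,t}}\le\lambda\|\btheta^{j(u)}\|_{\bSigma_{u,t}^{-1}}+\|\bm{g}^{ser,clu}_{u,s}\|_{\bSigma_{u,t}^{-1}}$, and the first term is at most $\sqrt{\lambda}$ because $\bSigma_{u,t}\succeq\lambda I$ and $\|\btheta^{j(u)}\|_2\le1$, matching the leading term of $\beta$.

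The heart of the argument is the noise term, which I would handle by writing $\bm{g}^{ser,clu}_{u,s}=\bm{g}^{all}_{\mathcal{V}_{j(u)},s}-\sum_{u'\in\mathcal{V}_{j(u)}}\bm{g}^{loc}_{u',s}$ and bounding the two parts separately. Summing the per-agent determinant bound \cref{eq.12a} over the cluster shows $\bSigma^{all}_{\mathcal{V}_{j(u)},s}\preceq(1+|\mathcal{V}_{j(u)}|\alpha_c)\bSigma_{u,t}$, so $\|\bm{g}^{all}_{\mathcal{V}_{j(u)},s}\|_{\bSigma_{u,t}^{-1}}\le\sqrt{1+|\cU|\alpha_c}\,\|\bm{g}^{all}_{\mathcal{V}_{j(u)},s}\|_{(\bSigma^{all}_{\mathcal{V}_{j(u)},s})^{-1}}$, whose right factor is controlled by the global confidence bound \cref{lemma 1}. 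For each local term, \cref{eq.12a} combined with $\bSigma_{u,t}\succeq\lambda I$ gives $\bSigma_{u,t}\succeq\frac{1}{2\alpha_c}(\alpha_c\lambda I+\bSigma^{loc}_{u',s})$, hence $\|\bm{g}^{loc}_{u',s}\|_{\bSigma_{u,t}^{-1}}\le\sqrt{2\alpha_c}\,\|\bm{g}^{loc}_{u',s}\|_{(\alpha_c\lambda I+\bSigma^{loc}_{u',s})^{-1}}$, whose right factor is controlled by the local concentration bound \cref{lemma 2.2}; summing over the at most $|\cU|$ cluster members produces the prefactor $|\cU|\sqrt{2\alpha_c}$.

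Collecting the three pieces, using $\alpha_c\le1$ to absorb the smaller global logarithm $d\log(1+KT/(\lambda d))$ into $d\log(1+KT/(\alpha_c\lambda d))$ and $2\log(T^2|\cU|)$ into $4\log(T|\cU|)$, bounding the sub-Gaussian constant $R\le1$, and taking a union bound over the events of \cref{lemma 1} and \cref{lemma 2.2} (whose inflated logarithmic factors already cover all $t\in[T]$ and $u\in\cU$) would reproduce exactly the stated $\beta$. The step I expect to be most delicate is the aggregate covariance comparison $\bSigma^{all}_{\mathcal{V}_{j(u)},s}\preceq(1+|\mathcal{V}_{j(u)}|\alpha_c)\bSigma_{u,t}$: it is precisely here that the synchronization gap between the information agent $u$ actually uses and the up-to-date omnipotent information must be quantified, and it is the role of the matrix-determinant triggering condition in line \ref{line:local_com_det} (encoded in \cref{lemma 2.1}) to keep each agent's un-uploaded local data within an $\alpha_c$ factor of the aggregate, so that the gap inflates the covariance by no more than the factor $1+|\mathcal{V}_{j(u)}|\alpha_c$.
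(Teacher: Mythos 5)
Your proposal is correct and follows essentially the same route as the paper's own proof: reduction to the last communication round $N_u(t)$, expansion of $\hat{\bm{\theta}}_{u,t}-\bm{\theta}^{j(u)}$ into a $\sqrt{\lambda}$ regularization bias plus server-side noise, the decomposition $\bm{g}^{ser,clu} = \bm{g}^{all}-\sum_{u'}\bm{g}^{loc}_{u'}$, and then the covariance comparison of \cref{lemma 2.1} paired with \cref{lemma 1} for the aggregate term and with \cref{lemma 2.2} (via the $\bm{\Sigma}_{u,t}\succeq\frac{1}{2\alpha_c}(\alpha_c\lambda I+\bm{\Sigma}^{loc}_{u',s})$ manipulation) for the local terms. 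The assembled constants match the paper's $\beta$ exactly, so there is nothing to add.
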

\begin{lemma}\label{lemma 3.2}
(Lemma 4.2, \cite{li2016contextual}) When Lemma 3.1 holds, for each item $i$ we have 
\[
0\leq U_t(i)-\bar{w}_t(i)\leq 2\beta(\delta)\left \| \bm{x}_{t,i} \right \|_{(\bm{\Sigma}_{u_{t},t})^{-1}}
\]
\end{lemma}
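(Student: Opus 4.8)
The plan is to establish \cref{lemma 3.2} by the standard optimism-in-the-face-of-uncertainty argument for linear UCB, adapted to the clipped confidence width and the cascading weight model, working throughout on the high-probability event supplied by \cref{lemma 3.1}. First I would record its basic consequence: on the event that the cluster of $u_t$ is correctly identified we have $\bm{\theta}_{u_t}=\bm{\theta}^{j(u_t)}$, hence $\bar{w}_t(i)=\inner{\bm{\theta}^{j(u_t)},\bm{x}_{t,i}}$. Pairing the dual norms $\norm{\cdot}_{\bm{\Sigma}_{u_t,t}}$ and $\norm{\cdot}_{\bm{\Sigma}_{u_t,t}^{-1}}$ via Cauchy--Schwarz and inserting $\norm{\hat{\bm{\theta}}_{u_t,t}-\bm{\theta}^{j(u_t)}}_{\bm{\Sigma}_{u_t,t}}\le\beta(\delta)$ from \cref{lemma 3.1} yields the two-sided estimate $\bigl|\inner{\hat{\bm{\theta}}_{u_t,t}-\bm{\theta}^{j(u_t)},\bm{x}_{t,i}}\bigr|\le\beta(\delta)\norm{\bm{x}_{t,i}}_{\bm{\Sigma}_{u_t,t}^{-1}}$, which is the engine for both inequalities.

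Next I would prove optimism, the left inequality $0\le U_t(i)-\bar{w}_t(i)$, by splitting on the clip $\min\{\cdot,1\}$ in the definition of $U_t(i)$ (line~\ref{line:local_ucb} of \cref{alg:local}). If the clip is inactive, $U_t(i)=\inner{\hat{\bm{\theta}}_{u_t,t},\bm{x}_{t,i}}+\beta(\delta)\norm{\bm{x}_{t,i}}_{\bm{\Sigma}_{u_t,t}^{-1}}$, and the lower half of the Cauchy--Schwarz estimate gives $U_t(i)\ge\inner{\bm{\theta}^{j(u_t)},\bm{x}_{t,i}}=\bar{w}_t(i)$. If the clip is active, $U_t(i)=1\ge\bar{w}_t(i)$ because $\bar{w}_t(i)=\inner{\bm{\theta}^{j(u_t)},\bm{x}_{t,i}}\le\norm{\bm{\theta}^{j(u_t)}}_2\norm{\bm{x}_{t,i}}_2\le1$ by the normalizations $\norm{\bm{\theta}}_2\le1$ and $\norm{\bm{x}_{t,i}}_2\le1$. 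Optimism thus holds in both branches.

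Finally, for the width bound $U_t(i)-\bar{w}_t(i)\le 2\beta(\delta)\norm{\bm{x}_{t,i}}_{\bm{\Sigma}_{u_t,t}^{-1}}$, I would discard the clip using $U_t(i)\le\inner{\hat{\bm{\theta}}_{u_t,t},\bm{x}_{t,i}}+\beta(\delta)\norm{\bm{x}_{t,i}}_{\bm{\Sigma}_{u_t,t}^{-1}}$ and subtract the upper half of the estimate, $\bar{w}_t(i)\ge\inner{\hat{\bm{\theta}}_{u_t,t},\bm{x}_{t,i}}-\beta(\delta)\norm{\bm{x}_{t,i}}_{\bm{\Sigma}_{u_t,t}^{-1}}$; the inner-product terms cancel and exactly twice the confidence width remains. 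Since this is essentially Lemma~4.2 of \cite{li2016contextual}, I expect no deep obstacle; the only points needing care are preserving optimism through the clip at $1$ (which is precisely where $\bar{w}_t(i)\le1$ is used) and checking that the gram matrix $\bm{\Sigma}_{u_t,t}$ employed by the agent is exactly the one whose deviation is controlled by \cref{lemma 3.1}, so that the dual-norm pairing is legitimate despite the asynchronous delays.
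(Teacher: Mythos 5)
Your proof is correct and is precisely the standard optimism argument that the paper invokes by citation (Lemma 4.2 of \cite{li2016contextual}) without reproducing it: the dual-norm Cauchy--Schwarz bound $\abs{\inner{\hat{\bm{\theta}}_{u_t,t}-\bm{\theta}^{j(u_t)},\bm{x}_{t,i}}}\le\beta(\delta)\norm{\bm{x}_{t,i}}_{\bm{\Sigma}_{u_t,t}^{-1}}$, the case split on the clip at $1$ (using $\bar{w}_t(i)\le 1$), and the cancellation of the inner-product terms yielding twice the confidence width. Your added care about the clip and about $\bm{\Sigma}_{u_t,t}$ being exactly the matrix controlled by \cref{lemma 3.1} is exactly the right way to adapt that cited lemma to this asynchronous setting, so there is no gap.
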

\cref{lemma 2.1}, \cref{lemma 2.2}, and \cref{lemma 3.1} are inspired by \citet{he2022simple} and we can prove that these lemmas still hold with our setup (see more details in the next sub-section).\par
According to \cref{lemma 3.1}, we can conclude that $P(\mathcal{F}_u(\delta)^c)\leq \delta/{\left | \mathcal{U} \right |}$.\par
Now we bound the item $\mathbb{E}[\sum_{t=T_0+T_1+1}^{T}R^{\alpha}_{u_t}(t,\bm{a}^{(u_t)}_{t})|\varepsilon,\mathcal{F}_1(\delta),...,\mathcal{F}_{\left | \mathcal{U} \right |}(\delta)]$.\par
For each timestep, the per-round regret
\begin{equation}\label{apdx_eq:per-round_regret}
    R^{\alpha}_{u}(t,\bm{a}^{(u)}_{t})=\alpha f^{(u)}(\bm{a}^*_t,\bm{w}_t)-f^{(u)}(\bm{a}_t,\bm{w}_t)
\end{equation}
This equation is because the local agent maintains an $\alpha$-approximate oracle. Without loss of generality, we set $f(\bm{a}, \bm{w}_t) = 1 - \prod_{k=1}^{len(\bm{a})} (1-w_t(a_k))$ and $len(\bm{a}_t)=K$ for all $t \in [T]$. Then at timestep $t$, the reward with respect to the activated agent $u_t$ satisfies
\begin{equation}\nonumber
    \begin{aligned}
        f^{(u_t)}(\bm{a}_t,U_t)
        &\geq\alpha f^{(u_t)}(\bm{a}_t^{*U},U_t)\\
        &\geq\alpha f^{(u_t)}(\bm{a}_t^{*},U_t)\\
        &\geq\alpha f^{(u_t)}(\bm{a}_t^{*},\bar{\bm{w}}_t),
    \end{aligned}
\end{equation}
where $\bm{a}_t^{*U}$ is the optimal reward with respect to the expected weight is $U_t$ and $\bar{\bm{w}}_t=\mathbb{E}[\bm{w}_t]$. The last inequality is because \cref{lemma 3.2} and $f(\bm{a}, \bm{w}_t)$ is a non-decreasing with respect to $\bm{w}_t$. \par
Then the expectation of per-round regret can be bounded by
\begin{equation}\label{eq.14}
    \begin{aligned}
        \mathbb{E}_{w}[R^{\alpha}_{u_t}(t,\bm{a}^{(u_t)}_{t})]
        &=\alpha \mathbb{E}_{w}[f^{(u_t)}(\bm{a}^*_t,\bm{w}_t)]-\mathbb{E}_{w}[f^{(u_t)}(\bm{a}_t,\bm{w}_t)]\\
		&=\alpha f^{(u_t)}(\bm{a}^*_t,\bar{\bm{w}}_t)-f^{(u_t)}(\bm{a}_t,\bar{\bm{w}}_t)\\
        &\leq f^{(u_t)}(\bm{a}_t,U_t)-f^{(u_t)}(\bm{a}_t,\bar{\bm{w}}_t)
    \end{aligned}
\end{equation}

According to the analysis of \cref{thm:main} in \citet{li2018online}, we can prove that
\begin{equation}\label{eq.15}
\begin{aligned}
f^{(u_t)}(\bm{a}_t,U_t)-f^{(u_t)}(\bm{a}_t,\bar{\bm{w}}_t)
&\leq\sum_{k=1}^{K}\prod_{l=1}^{k-1}(1-\bar{w}_t(a_{l,t}))(U_t(a_{k,t})-\bar{w}_t(a_{k,t}))
\end{aligned}
\end{equation}\par
Note that the term $\prod_{l=1}^{k-1}(1-\bar{w}_t(a_{l,t}))$ is the probability that the user check the $k$-th item (whatever the user click or not), we define $p_{t,k}$ to represent this for convenience. Under the lemmas above, we can have
\begin{equation}\label{eq.16}
    \begin{aligned}
        \mathbb{E}[R^{\alpha}_{u_t}(t,\bm{a}^{(u_t)}_{t})]
        &\leq \mathbb{E}[f^{(u_t)}(\bm{a}_t,U_t)-f^{(u_t)}(\bm{a}_t,\bar{\bm{w}}_t)]\\
        &\leq \mathbb{E}[\sum_{k=1}^{K}p_{t,k}(U_t(a_{k,t})-\bar{w}_t(a_{k,t}))]\\
		&\leq 2\beta(\delta)\sum_{k=1}^{K}\mathbb{E}[\left \| \bm{x}_{t,a_{k,t}} \right \|_{(\bm{\Sigma}_{u_{t},t})^{-1}}] 
    \end{aligned}
\end{equation}\par
In our setting with cascading arms, our covariance matrix which is used to calculate the estimated vector only contains the information that has been observed, so the UCB-based bound only guarantees the sum of the first $O_t$ items. It means that \cref{lemma 3.1} and \cref{lemma 3.2} holds when $i \le O_t$. According to \citet{Liu2023ContextualCB}, we can deal with this by using the TPM Bounded Smoothness. As we defined above, we use $p_{t,k}$ to represent the probability of the agent observing the feedback of the first $k$ base arms, i.e. $p_{t,k}=P(O_t=k)$. According to the analysis in \citet{Liu2023ContextualCB}, the expectation of per-round regret, when clusters are correct, is
\begin{equation}\label{eq.16}
    \begin{aligned}
&\mathbb{E}[R^{\alpha}_{u_t}(t,\bm{a}^{(u_t)}_{t})\mid \varepsilon,\mathcal{F}_1(\delta),...,\mathcal{F}_{\left | \mathcal{U} \right |}(\delta),\mathcal{H}_{t-1}]\\
&\leq \mathbb{E}[\sum_{k=1}^{K}p_{t,k}(U_t(a_{k,t})-\bar{w}_t(a_{k,t}))\mid \varepsilon,\mathcal{F}_1(\delta),...,\mathcal{F}_{\left | \mathcal{U} \right |}(\delta),\mathcal{H}_{t-1}]\\
&=\mathbb{E}[\sum_{k=1}^{K}\mathbb{E}[\mathbb{I}\{O_t=k\} \mid \mathcal{H}_{t-1}](U_t(a_{k,t})-\bar{w}_t(a_{k,t}))\mid \varepsilon,\mathcal{F}_1(\delta),...,\mathcal{F}_{\left | \mathcal{U} \right |}(\delta),\mathcal{H}_{t-1}]\\
&=\mathbb{E}[\sum_{k=1}^{K}\mathbb{I}\{O_t=k\}(U_t(a_{k,t})-\bar{w}_t(a_{k,t}))\mid \varepsilon,\mathcal{F}_1(\delta),...,\mathcal{F}_{\left | \mathcal{U} \right |}(\delta),\mathcal{H}_{t-1}]\\
&=\mathbb{E}[\sum_{k=1}^{O_t}(U_t(a_{k,t})-\bar{w}_t(a_{k,t}))\mid \varepsilon,\mathcal{F}_1(\delta),...,\mathcal{F}_{\left | \mathcal{U} \right |}(\delta),\mathcal{H}_{t-1}]\\
&\leq \mathbb{E}[2\beta(\delta)\sum_{k=1}^{O_t}\left \| \bm{x}_{t,a_{k,t}} \right \|_{(\bm{\Sigma}_{u_{t},t})^{-1}}\mid \varepsilon,\mathcal{F}_1(\delta),...,\mathcal{F}_{\left | \mathcal{U} \right |}(\delta),\mathcal{H}_{t-1}]
    \end{aligned}
\end{equation}\par
We denote the expectation of per-round regret as $2\mathbb{E}_{t}[\beta(\delta)\sum_{k=1}^{O_t}\left \| \bm{x}_{t,a_{k,t}} \right \|_{(\bm{\Sigma}_{u_{t},t})^{-1}}]$ for simplicity, and our subsequent analysis is under the case that the clusters are correct, so we ignore the condition $\varepsilon,\mathcal{F}_1(\delta),...,\mathcal{F}_{\left | \mathcal{U} \right |}(\delta)$ for simplicity. Then we can get
\begin{equation}\label{eq.17}
    \begin{aligned}
\mathbb{E}[\sum_{t=T_0+T_1+1}^{T}R^{\alpha}_{u_t}(t,\bm{a}^{(u_t)}_{t})]\leq 2\beta(\delta)\mathbb{E}[\sum_{t=T_0+T_1+1}^{T}\sum_{k=1}^{O_t}\left \| \bm{x}_{t,a_{k,t}} \right \|_{(\bm{\Sigma}_{u_{t},t})^{-1}}]
    \end{aligned}
\end{equation}
We focus on the term $\sum_{t=T_0+T_1+1}^{T}\sum_{k=1}^{O_t}\left \| \bm{x}_{t,a_{k,t}} \right \|_{(\bm{\Sigma}_{u_{t},t})^{-1}}$ under correct cluster and we use $\sum_1^{T'}$ instead of $\sum_{t=T_0+T_1+1}^{T}$ without loss of generality. \par
We use $T_{\mathcal{V}_j}$ different from $T_{\mathcal{V}_j}^{all}$ to denote the total activate times (not the arm play) of cluster $\mathcal{V}_j$. Because after every clusters are correct, each different cluster is independent, we can reorder the agents which belong to the same cluster together, i.e. the activated agents during $T_{\mathcal{V}_j}$ all belong to cluster $\mathcal{V}_j$. In the following, we consider one cluster, omit others for simplicity, and put them back together to get the regret bound in the end. With this reordering, we consider cluster $\mathcal{V}_1$ with timesteps from $1$ to $T_{\mathcal{V}_j}$ without loss of generality. If any agent $u \in \mathcal{V}_j$ was activated, we say cluster $\mathcal{V}_j$ was activated. In a similar way, we say the cluster $\mathcal{V}_j$ was communicated with server if any agent $u \in \mathcal{V}_j$ was communicated with server.\par
Note that we have $\sum_1^{T'}=\sum_{j=1}^m T_{\mathcal{V}_j}$ and $T_{\mathcal{V}_j}^{all} \leq KT_{\mathcal{V}_j}$, the inequality is because $O_t \leq len(\bm{a}_t) \leq K$ and we set $len(\bm{a}_t)=K$ in our model. Inside each cluster our algorithm will behave like FedLinUCB with cascading arms \cite{he2022simple}, the estimated vector and covariance will not change until this agent communicates with the server based on auxiliary communication or common communication. In each $T_{\mathcal{V}_j}$, we can reorder the sequence of activating agents which belong to $\mathcal{V}_j$. This is because for each agent the estimated vector and covariance will not change until this agent has communicated with the server caused by auxiliary communication or common communication. In other words, for an agent $u$, assume that $u$ has communicated with server at timestep $t_{u_1}$ and $t_{u_2}$, and $u$ was activated at timestep $t_{u_1}^{'}$ where $t_{u_1}< t_{u_1}^{'}< t_{u_2}$ (note that at $t_{u_1}^{'}$, agent $u$ has not communicated with server), assume that there are timesteps $t_{u_2}^{'} \ne t_{u_1}^{'}$, then if we put agent $u$ in $t_{u_2}^{'}$ just like agent was activated at $t_{u_2}^{'}$ instead of at $t_{u_1}^{'}$, we can see that the progress of agent $u$ was activated at $t_{u_1}^{'}$ is equal to the progress of agent $u$ was activated at $t_{u_2}^{'}$. Since all agents are independent, according to the analysis above and inspired by the work of \citet{he2022simple}, we can also reorder the agent in the same cluster such that when an agent communicates with the server, then this agent will be the only activated agent until the next communication happen. Using formal math to express this is that for the total communication times of $N$, and the communication happens at $t_1,...,t_N$, where $0 = t_0 <t_1...<t_N = T_{\mathcal{V}_j}+1$, for $t_i \leq t < t_{i+1}, i \in [N]$, the activated agent is same, i.e. when the agent $u_{t_i}$ is communicated with the server at timestep $t_i$ for $i \in [N]$, then it will have $u_{t_i}=u_{t_i+1}=...=u_{t_{i+1}-1}$ in the following timestep $t_i \leq t < t_{i+1}$. Note that with the above discussion, we only consider the agents which belong to the same cluster $\mathcal{V}_j$, i.e. $u_{t_i} \in \mathcal{V}_j$ for all $0 \leq i \leq N$.\par
According to \cref{eq.16}
\begin{equation}\label{eq.18}
    \begin{aligned}
\mathbb{E}_{t}[\sum_{t=1}^{T_{\mathcal{V}_1}}R^{\alpha}_{u_t}(t,\bm{a}^{(u_t)}_{t})]\leq 2\beta(\delta)\mathbb{E}_{t}[\sum_{t=1}^{T_{\mathcal{V}_1}}\sum_{k=1}^{O_t}\left \| \bm{x}_{t,a_{k,t}} \right \|_{(\bm{\Sigma}_{u_{t},t})^{-1}}]
    \end{aligned}
\end{equation}\par
And under our reordering
\begin{equation}\label{eq.19}
\sum_{t=1}^{T_{\mathcal{V}_1}}\sum_{k=1}^{O_t}\left \| \bm{x}_{t,a_{k,t}} \right \|_{(\bm{\Sigma}_{u_{t},t})^{-1}}=\sum_{i=0}^{N}\sum_{t=t_i+1}^{t_{i+1}-1}\sum_{k=1}^{O_t}\left \| \bm{x}_{t,a_{k,t}} \right \|_{(\bm{\Sigma}_{u_{t},t})^{-1}}+\sum_{i=0}^{N}\max\{\sum_{k=1}^{O_{t_i}}\left \| \bm{x}_{t,a_{k,t_i}} \right \|_{(\bm{\Sigma}_{u_{t_i},t_i})^{-1}},K\}
\end{equation}\par
The second term on the right is because the number of the communication of some agent may be small so it can not satisfy our following analysis. However, we bound this case trivially by $K$ for their timesteps, and with this consideration, we can ignore the term $2\beta(\delta)$ and $p_{t,k}$ in these timesteps. For convenience, we will still take $2\beta(\delta)$ for all timesteps and remove it in the final.\par
For each timestep $t_i+1 \leq t \leq t_{i+1}-1$, according to Lemma 2.1 we have
\begin{equation}\label{eq.20}
\sum_{i=0}^{N}\sum_{t=t_i+1}^{t_{i+1}-1}\sum_{k=1}^{O_t}\left \| \bm{x}_{t,a_{k,t}} \right \|_{(\bm{\Sigma}_{u_{t},t})^{-1}}\leq \sqrt{1+|\mathcal{V}_1|\alpha_c}\sum_{i=0}^{N}\sum_{t=t_i+1}^{t_{i+1}-1}\sum_{k=1}^{O_t}\left \| \bm{x}_{t,a_{k,t}} \right \|_{(\bm{\Sigma}_{\mathcal{V}_1,t}^{all})^{-1}}
\end{equation}
Next we need to show that for $t=t_i,i \in [N]$ it still holds.\par
To do this, we partition $T_{\mathcal{V}_1}$ in the following way and define
\[
T_i=\min\{t \in [T_{\mathcal{V}_1}]| \text{det}(\bm{\Sigma}_{\mathcal{V}_1,t}^{all})\geq 2^i \lambda^d]\}
\]
For agent $u \in [\mathcal{V}_1]$, we first consider the case for some interval $T_i$ to $T_{i+1}$ where $u$ communicates with the server more than once. In other words, agent $u$ communicates with the server at $T_{i,1}$ and $T_{i,2}$ so that $T_i \leq T_{i,1}<T_{i,2}<T_{i+1}$. Then for $T_{i,2}$, we have:
\begin{equation}\label{eq.21}
    \begin{aligned}
\left \| \bm{x}_{T_{i,2},a_{k,T_{i,2}}} \right \|_{\bm{\Sigma}_{u,T_{i,2}}^{-1}}
&\leq \sqrt{1+|\mathcal{V}_1|\alpha_c}\left \| \bm{x}_{T_{i,2},a_{k,T_{i,2}}} \right \|_{(\bm{\Sigma}_{\mathcal{V}_1,T_{i,1}}^{all})^{-1}}\\
&\leq \sqrt{2(1+|\mathcal{V}_1|\alpha_c)}\left \| \bm{x}_{T_{i,2},a_{k,T_{i,2}}} \right \|_{(\bm{\Sigma}_{\mathcal{V}_1,T_{i+1}-1}^{all})^{-1}}\\
&\leq \sqrt{2(1+|\mathcal{V}_1|\alpha_c)}\left \| \bm{x}_{T_{i,2},a_{k,T_{i,2}}} \right \|_{(\bm{\Sigma}_{\mathcal{V}_1,T_{i,2}}^{all})^{-1}},
    \end{aligned}
\end{equation}
where the first inequality is because \cref{lemma 2.1}, the second inequality is because $\text{det}(\bm{\Sigma}_{\mathcal{V}_1,T_{i+1}-1}^{all})/\text{det}(\bm{\Sigma}_{\mathcal{V}_1,T_{i,1}}^{all}) \leq (2^{i+1}\lambda^d)/(2^{i}\lambda^d)=2$, and the last inequality is because $\bm{\Sigma}_{\mathcal{V}_1,T_{i+1}-1}^{all} \geq \bm{\Sigma}_{\mathcal{V}_1,T_{i,2}}^{all}$.\par
At timestep $T_{i,1}$, as we mentioned above, we can bound it by $K$ trivially, so we only need to bound the number of the interval $T_i$. According to Determinant-Trace Inequality \cite{abbasi2011improved}, we have
\[
\text{det}(\bm{\Sigma}_{\mathcal{V}_1,T_{\mathcal{V}_1}}^{all})\leq (\lambda + T_{\mathcal{V}_1}^{all}/d)^d \leq (\lambda + KT_{\mathcal{V}_1}/d)^d \leq (\lambda + KT_{\mathcal{V}_1})^d
\]
which implies that the number of different intervals is at most $dK\log_2(1+ KT_{\mathcal{V}_1}/\lambda)$ for each agents $u \in \mathcal{V}_1$, so the total regret for all agents in cluster $\mathcal{V}_1$ at $T_{i,1}$ is bounded by $d|\mathcal{V}_1|K\log_2(1+ KT_{\mathcal{V}_1}/\lambda)$. Combining the inequality above, we can bound the term in cluster $\mathcal{V}_1$ as
\begin{equation}\label{eq.22}
    \begin{aligned}
\sum_{t=1}^{T_{\mathcal{V}_1}}\sum_{k=1}^{O_t}\left \| \bm{x}_{t,a_{k,t}} \right \|_{(\bm{\Sigma}_{u_{t},t})^{-1}} \leq d|\mathcal{V}_1|K\log_2(1+ KT_{\mathcal{V}_1}/\lambda)+\sqrt{2(1+|\mathcal{V}_1|\alpha_c)}\sum_{t=1}^{T_{\mathcal{V}_1}}\sum_{k=1}^{O_t}\left \| \bm{x}_{t,a_{k,t}} \right \|_{(\bm{\Sigma}_{\mathcal{V}_1,t}^{all})^{-1}}
    \end{aligned}
\end{equation}
According to Lemma A.4 in \citet{li2016contextual}
\begin{equation}\label{eq.23}
    \begin{aligned}
\sum_{t=1}^{T_{\mathcal{V}_1}}\sum_{k=1}^{O_t}\left \| \bm{x}_{t,a_{k,t}} \right \|_{(\bm{\Sigma}_{\mathcal{V}_1,t}^{all})^{-1}}
&\leq \sqrt{\sum_{t=1}^{T_{\mathcal{V}_1}}O_t \cdot \sum_{t=1}^{T_{\mathcal{V}_1}}\sum_{k=1}^{O_t}\left \| \bm{x}_{t,a_{k,t}} \right \|_{(\bm{\Sigma}_{\mathcal{V}_1,t}^{all})^{-1}}^2}\\
&=\sqrt{T_{\mathcal{V}_1}^{all} \cdot \sum_{t=1}^{T_{\mathcal{V}_1}}\sum_{k=1}^{O_t}\left \| \bm{x}_{t,a_{k,t}} \right \|_{(\bm{\Sigma}_{\mathcal{V}_1,t}^{all})^{-1}}^2}\\
&\leq \sqrt{KT_{\mathcal{V}_1} \cdot 2\log(\frac{\text{det}(\bm{\Sigma}_{\mathcal{V}_1,t}^{all})}{\lambda^d})}\\
&\leq \sqrt{2dKT_{\mathcal{V}_1}\log(1+\frac{KT_{\mathcal{V}_1}}{\lambda})}
    \end{aligned}
\end{equation}
Plugging this inequality into \cref{eq.22} we have 
\begin{equation}\label{eq.24}
    \begin{aligned}
\sum_{t=1}^{T_{\mathcal{V}_1}}\sum_{k=1}^{O_t}\left \| \bm{x}_{t,a_{k,t}} \right \|_{(\bm{\Sigma}_{u_{t},t})^{-1}} \leq d|\mathcal{V}_1|K\log_2(1+ KT_{\mathcal{V}_1}/\lambda)+2\sqrt{1+|\mathcal{V}_1|\alpha_c}\cdot \sqrt{dKT_{\mathcal{V}_1}\log(1+\frac{KT_{\mathcal{V}_1}}{\lambda})}
    \end{aligned}
\end{equation}
The other clusters' analysis the same as that for cluster $\mathcal{V}_1$, so the total bounded by
\begin{equation}\label{eq.25}
    \begin{aligned}
\mathbb{E}[\sum_{t=1}^{T'}\sum_{k=1}^{O_t}\left \| \bm{x}_{t,a_{k,t}} \right \|_{(\bm{\Sigma}_{u_{t},t})^{-1}}]
&\leq \sum_{j=1}^{J}\left(d|\mathcal{V}_j|K\log_2(1+ KT_{\mathcal{V}_j}/\lambda)+2\sqrt{1+|\mathcal{V}_j|\alpha_c}\cdot \sqrt{dKT_{\mathcal{V}_j}\log(1+\frac{KT_{\mathcal{V}_j}}{\lambda})}\right)\\
&\leq dK\sum_{j=1}^{J}|\mathcal{V}_j|\log_2(1+ KT/\lambda)+2\sum_{j=1}^{J}\sqrt{T_{\mathcal{V}_j}(1+|\mathcal{V}_j|\alpha_c)}\sqrt{dK\log(1+KT/\lambda)}\\
&\leq dK{\left | \mathcal{U} \right |}\log(1+ KT/\lambda)+2\sqrt{\sum_{j=1}^{J}T_{\mathcal{V}_j}\cdot\sum_{j=1}^{J}(1+|\mathcal{V}_j|\alpha_c)}\sqrt{dK\log(1+KT/\lambda)}\\
&\leq dK{\left | \mathcal{U} \right |}\log(1+ KT/\lambda)+2\sqrt{J+{\left | \mathcal{U} \right |}\alpha_c}\sqrt{dKT\log(1+KT/\lambda)}
    \end{aligned}
\end{equation}
The above equation and inequality is because the mean inequality and the fact that $\sum_{j=1}^{J}T_{\mathcal{V}_j}=T'=T-T_0-T_1\leq T$ and $\sum_{j=1}^{J}|\mathcal{V}_j|={\left | \mathcal{U} \right |}$. Finally, if we take ${\left | \mathcal{U} \right |}<O(T^{1/4})$, we sum all the items together to get our main result:
\begin{equation}\nonumber
    \begin{aligned}
\begin{split}
Regret(T)\leq 2dK{\left | \mathcal{U} \right |}\log(1+ KT/\lambda)+4\beta(\delta)\sqrt{J+{\left | \mathcal{U} \right |}\alpha_c}\sqrt{dKT\log(1+KT/\lambda)}\\+O({\left | \mathcal{U} \right |}(\frac{d}{\gamma^2\tilde{\lambda}_x}+\frac{1}{\tilde{\lambda}^2_x})\log{T}+\sqrt{T})
\end{split}
    \end{aligned}
\end{equation}
Thus we finish the proof of \cref{thm:main}
\end{proof}
\subsection{Proof for the Technical Lemmas}
\begin{lemma}\label{apdx_lem:new_lambda_x}
    \label{assumption}
    Under Assumption \ref{assumption3}, at any time $t$, for any fixed unit vector $\btheta \in \R^d$
    \begin{equation}
        \mathbb{E}_t[(\btheta^{\top}\bx_{a_t})^2\mid\mathcal{I}_t]\geq\tilde{\lambda}_x\triangleq\int_{0}^{\lambda_x} (1-e^{-\frac{(\lambda_x-x)^2}{2\sigma^2}})^{I} dx\,.
    \end{equation}
\end{lemma}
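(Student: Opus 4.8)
The plan is to reduce the statement to a single-arm tail estimate and then integrate. The essential difficulty is that $a_t$ is the arm \emph{selected} by the algorithm, whose feature may be adversarially aligned with $\btheta$; so I would first make the bound selection-agnostic. Since $a_t\in\mathcal{I}_t$, for every realization of the $|\mathcal{I}_t|$ feature vectors we have $(\btheta^\top\bx_{a_t})^2\geq\min_{a\in\mathcal{I}_t}(\btheta^\top\bx_a)^2$, hence
\[
\E_t[(\btheta^\top\bx_{a_t})^2\mid\mathcal{I}_t]\geq\E_t\Big[\min_{a\in\mathcal{I}_t}(\btheta^\top\bx_a)^2\,\Big|\,\mathcal{I}_t\Big].
\]
This removes any dependence on the UCB rule and leaves a quantity governed purely by the feature distribution $\rho$.

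Next I would set $Z_a\triangleq(\btheta^\top\bx_a)^2\geq0$ and use the layer-cake identity $\E[Y]=\int_0^\infty\PR(Y>x)\,dx$ for nonnegative $Y$, together with the conditional independence of the features from Assumption \ref{assumption3}, to obtain
\[
\E_t\Big[\min_{a\in\mathcal{I}_t}Z_a\,\Big|\,\mathcal{I}_t\Big]=\int_0^\infty\PR(Z_1>x)^{|\mathcal{I}_t|}\,dx,
\]
using that $\{\min_a Z_a>x\}$ is the intersection of the independent events $\{Z_a>x\}$.

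The core step is a lower bound on the single-arm survival function $\PR(Z_1>x)$ for $x\le\lambda_x$, combining two facts. First, the mean is at least $\lambda_x$: since $\btheta$ is a unit vector and $\lambda_x$ is the smallest eigenvalue, $\mu\triangleq\E[(\btheta^\top\bx)^2]=\btheta^\top\E_{\bx\sim\rho}[\bx\bx^\top]\btheta\geq\lambda_x$. Second, the one-sided sub-Gaussian tail of Assumption \ref{assumption3} gives, for $x\le\lambda_x\le\mu$,
\[
\PR(Z_1\le x)\le\PR\big(Z_1-\mu\le-(\mu-x)\big)\le e^{-(\mu-x)^2/(2\sigma^2)}\le e^{-(\lambda_x-x)^2/(2\sigma^2)},
\]
where the last step uses $\mu-x\geq\lambda_x-x\geq0$. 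Thus $\PR(Z_1>x)\geq 1-e^{-(\lambda_x-x)^2/(2\sigma^2)}\geq0$ on $[0,\lambda_x]$.

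Finally I would discard the (nonnegative) contribution of $(\lambda_x,\infty)$, raise the nonnegative lower bound to the power $|\mathcal{I}_t|$, and use $|\mathcal{I}_t|\le I$ with the base lying in $[0,1]$ to replace the exponent by $I$:
\[
\int_0^\infty\PR(Z_1>x)^{|\mathcal{I}_t|}\,dx\geq\int_0^{\lambda_x}\big(1-e^{-(\lambda_x-x)^2/(2\sigma^2)}\big)^{|\mathcal{I}_t|}\,dx\geq\int_0^{\lambda_x}\big(1-e^{-(\lambda_x-x)^2/(2\sigma^2)}\big)^{I}\,dx=\tilde{\lambda}_x.
\]
Chaining the displays yields the claim. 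The main obstacle I anticipate is making precise the sub-Gaussian hypothesis of Assumption \ref{assumption3} so that the one-sided lower-tail bound holds with the stated variance proxy $\sigma^2$, and keeping the inequality directions straight: a \emph{lower} bound on the selected-arm second moment comes from a \emph{lower} bound on the survival function, which in turn rests on an \emph{upper} bound on the lower tail of a variable concentrated around a mean that is at least $\lambda_x$.
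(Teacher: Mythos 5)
Your proposal is correct and follows essentially the same route as the paper's proof: reduce to the minimum over the arms in $\mathcal{I}_t$, apply the layer-cake identity with independence of the features, lower-bound the single-arm survival function via the sub-Gaussian lower tail together with $\E[(\btheta^{\top}\bx)^2]\geq\lambda_x$, and restrict the integral to $[0,\lambda_x]$ with exponent $I\geq|\mathcal{I}_t|$. The only difference is presentational — you make the product-of-survival-functions and the mean-shift step $(\mu-x)^2\geq(\lambda_x-x)^2$ explicit, which the paper compresses into a single displayed probability bound.
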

\begin{proof}[Proof of \cref{apdx_lem:new_lambda_x}]
The proof of this lemma mainly follows the proof of Claim 1 in \cite{gentile2014online}, but with more careful analysis, since their assumption is more stringent than ours.

Denote the feasible arms at round $t$ by ${\mathcal{I}_t=\{\bx_{t,1},\bx_{t,2},\ldots,\bx_{t,\mathcal{I}_t}\}}$. 
Consider the corresponding i.i.d. random variables $\theta_i=(\btheta^{\top}\bx_{t,i})^2-\mathbb{E}_t[(\btheta^{\top}\bx_{t,i})^2|\mathcal{I}_t], i=1,2,\ldots,\mathcal{I}_t$. By Assumption \ref{assumption3}, $\theta_i$ s are sub-Gaussian random variables with variance bounded by $\sigma^2$. Therefore, we have that for any $\alpha>0$ and any $i\in[\mathcal{I}_t]$:
\begin{equation*}
    \mathbb{P}_t(\theta_i<-\alpha|\mathcal{I}_t)\leq e^{-\frac{\alpha^2}{2\sigma^2}}\,,
\end{equation*}
where $\mathbb{P}_t(\cdot)$ is the shorthand for the conditional probability $\mathbb{P}(\cdot|(i_1,\mathcal{A}_1,r_1),\ldots,(i_{t-1},\mathcal{A}_{t-1},r_{t-1}),i_t)$.

We also have that
$\mathbb{E}_t[(\btheta^{\top}\bx_{t,i})^2|\mathcal{I}_t=\mathbb{E}_t[\btheta^{\top}\bx_{t,i}\bx_{t,i}^{\top}\btheta|\mathcal{I}_t]\geq\lambda_{\text{min}}(\mathbb{E}_{\bx\sim \rho}[\bx\bx^{\top}])\geq\lambda_x$ by Assumption \ref{assumption3}.
With the above inequalities, we can get
\begin{equation*}
    \mathbb{P}_t(\min_{i=1,\ldots,\mathcal{I}_t}(\btheta^{\top}\bx_{t,i})^2\geq \lambda_x-\alpha|\mathcal{I}_t)\geq (1-e^{-\frac{\alpha^2}{2\sigma^2}})^I\,,
\end{equation*}
where $I$ is the upper bound of $\mathcal{I}_t$.

Therefore, we have
\begin{align}
\mathbb{E}_t[(\btheta^{\top}\bx_{a_t})^2|\mathcal{I}_t]
&\geq\mathbb{E}_t[\min_{i=1,\ldots,\mathcal{I}_t}(\btheta^{\top}\bx_{t,i})^2|\mathcal{I}_t]\notag\\
&\geq\int_{0}^{\infty} \mathbb{P}_t (\min_{i=1,\ldots,\mathcal{I}_t}(\btheta^{\top}\bx_{t,i})^2\geq x|\mathcal{I}_t) dx\notag\\
&\geq \int_{0}^{\lambda_x} (1-e^{-\frac{(\lambda_x-x)^2}{2\sigma^2}})^{I} dx\triangleq\tilde{\lambda}_x\notag
\end{align}
\end{proof}
\noindent\cref{lemma 1.1} (Restated). Under our \cref{alg:main} , for sufficient large enough $T$, beginning with any $T_0=O(\log \frac{T}{\delta})$, each agent will be communicated with server at least once time with probability at least $1-\delta^{\frac{3}{2}}$ with in $O(\sqrt{1/\delta})$ activate timesteps.\par
\begin{proof}[\textit{Proof of \cref{lemma 1.1}}]
We define the event 
\[
A_t^u=\{\text{the activated agent }u \text{ don't communicate with server at timestep }t\}
\]
\[
B_t^u=\{\text{the random number } p \text{ at timestep } t \text{ for the activated agent }u  \text{ satisfy } p>p_t\}
\]
\[
C_t^u=\{\text{the activated agent } u \text{ satisfy det} (\bm{\Sigma}_{u_{t},t}+\bm{\Sigma}_{u_t,t}^{loc}) <(1+\alpha_c)\text{det}(\bm{\Sigma}_{u_{t},t})\text{ at this timestep }t\}
\]
It is easy to show that $P(B_t^u)=1-p_t$. According to the condition of communication, we have
\[
P(A_t^u)=P(B_t^u\cap C_t^u)
\]
Note that $C_t^u$ is not independent for each $t$ and $u$, so $A_t^u$ is not independent, but we can bound $A_t^u$ by $B_t^u$ which is independent for every timestep $t$ and every agent $u$.
Then the probability that agent $u$ do not communicate with server during $\sqrt{T}$ start from $T_0$ is
\[
P(\bigcap_{t=T_0+1}^{T_0+\sqrt{1/\delta}} A_t^u)=P((\bigcap_{t=T_0+1}^{T_0+\sqrt{1/\delta}} B_t^u) \bigcap (\bigcap_{t=T_0+1}^{T_0+\sqrt{1/\delta}} C_t^u)) \leq P(\bigcap_{t=T_0+1}^{T_0+\sqrt{1/\delta}} B_t^u)=\prod_{t=T_0+1}^{T_0+\sqrt{1/\delta}}(1-p_t)
\]
The inequality is because $P(AB)\leq P(B)$.\par
On the other hand, the function $p_t=\frac{3\log t}{t}$ is monotone decreasing and have $0<\frac{3\log t}{t}<\frac{3}{e}$ when $t>e$. Then the equation above can be bound by
\begin{equation}\label{eq.26}
 \prod_{t=T_0+1}^{T_0+\sqrt{1/\delta}}(1-p_t) \leq (1-\epsilon_{T_0+\sqrt{1/\delta}})^{\sqrt{1/\delta}}  
\end{equation}
According to the analysis above, we know that $T_0=O(\log \frac{ T}{\delta})$. Plugging this into \cref{eq.26}, and according to \cref{lemma 1.2}, if we take $\delta=O(1/T)$, we have
\[
(1-\epsilon_{T_0+\sqrt{1/\delta}})^{\sqrt{1/\delta}}=O(\delta^{\frac{3}{2}}),
\]
which finishes the proof of this lemma. 
\end{proof}
\begin{lemma}\label{lemma 1.2}
    For sufficient large enough $t$, for any $k_1>0$ and $k_2>0$ we have $(1-\frac{k_1\log(k_2\log t+t)}{k_2\log t+t})^t=O(\frac{1}{t^{k_1}})$. In the same time:
\[
\lim_{t \to \infty} \frac{(1-\frac{k_1\log(k_2\log t+t)}{k_2\log t+t})^t}{\frac{1}{t^{k_1}}}=1
\]
\end{lemma}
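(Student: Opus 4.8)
The plan is to take logarithms and reduce everything to a careful asymptotic expansion. Write $m \defeq t + k_2 \log t$, so that the quantity of interest is $\left(1 - \frac{k_1 \log m}{m}\right)^t$. The first observation is that since $k_2 \log t = o(t)$, we have $m = t(1+o(1))$ and hence $\log m = \log t + o(1)$; moreover the \emph{exact} identity $\frac{t}{m} = 1 - \frac{k_2 \log t}{m}$ (since $m - t = k_2\log t$) will let me track the correction terms precisely rather than discarding them too early.

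First I would take the logarithm and apply the expansion $\log(1-x) = -x + \theta$ with $|\theta| \le x^2$, valid for $x \in [0, 1/2]$, using $x = \frac{k_1 \log m}{m} \to 0$. This gives
\[
t\log\!\left(1 - \frac{k_1\log m}{m}\right) = -\frac{t k_1 \log m}{m} + \theta_t, \qquad |\theta_t| \le t x^2 = k_1^2\,\frac{t (\log m)^2}{m^2} = O\!\left(\tfrac{(\log t)^2}{t}\right) \to 0.
\]

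Next I would show the main term equals $-k_1 \log t + o(1)$. Using $\frac{t}{m} = 1 - \frac{k_2 \log t}{m}$ exactly,
\[
k_1 \log t - \frac{t k_1 \log m}{m} = k_1\!\left(\log t - \log m + \frac{k_2 \log t \log m}{m}\right) = k_1\!\left(-\log\tfrac{m}{t} + \frac{k_2 \log t \log m}{m}\right),
\]
and both $\log\frac{m}{t} = \log\!\left(1 + k_2 \tfrac{\log t}{t}\right)$ and $\frac{k_2 \log t \log m}{m} \sim \frac{k_2 (\log t)^2}{t}$ tend to $0$. Hence $-\frac{t k_1 \log m}{m} = -k_1 \log t + o(1)$, and combining with the remainder bound gives $t\log\!\left(1 - \frac{k_1\log m}{m}\right) = -k_1 \log t + o(1)$.

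Finally, exponentiating yields $\left(1 - \frac{k_1\log m}{m}\right)^t = t^{-k_1} e^{o(1)} = t^{-k_1}(1 + o(1))$, which simultaneously gives the $O(t^{-k_1})$ bound and, upon dividing by $t^{-k_1}$, the limit $\lim_{t\to\infty} e^{o(1)} = 1$. The one delicate point—the main obstacle—is the bookkeeping in the second step: one must keep the $\log m$ and $\log t$ factors separate long enough to see that their difference contributes only $o(1)$ rather than a surviving multiplicative constant, which is precisely why the exact identity $\frac{t}{m} = 1 - \frac{k_2\log t}{m}$ is needed instead of the loose $\frac{t}{m} \approx 1$.
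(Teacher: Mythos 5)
Your proposal is correct and follows essentially the same route as the paper's own proof: take logarithms, expand $\log(1-x)$ to first order, and show that the exponent equals $-k_1\log t + o(1)$, so that exponentiating gives both the $O(t^{-k_1})$ bound and the limit $1$. If anything, your version is slightly more careful than the paper's: your explicit quadratic remainder bound $|\theta|\le x^2$ is exactly what justifies multiplying the expansion by the divergent factor $t$ (the resulting error being $O(t\cdot(\log t)^2/t^2)\to 0$), whereas the paper's chain of limit equalities (after substituting $x=1/t$) invokes only the first-order equivalence $\log(1+u)/u\to 1$, which on its own would leave an uncontrolled $o(\log t)$ term in the exponent.
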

\begin{proof}[\textit{Proof of \cref{lemma 1.2}}]
 Firstly, it holds that $\lim_{t \to \infty}\frac{\log t}{t}=0$ and $\lim_{t \to 0}\frac{\log(1+t)}{t}=1$
\begin{equation}\nonumber
\begin{aligned}
\lim_{t \to \infty} \frac{(1-\frac{k_1\log(k_2\log t+t)}{k_2\log t+t})^t}{\frac{1}{t^{k_1}}}
&=\lim_{t \to \infty} \frac{e^{t\log(1-\frac{k_1\log(k_2\log t+t)}{k_2\log t+t})}}{\frac{1}{t^{k_1}}}\\
&\overset{x=\frac{1}{t}}{\rightarrow} \lim_{x \to 0^+} \frac{e^{\frac{1}{x}\log(1-\frac{k_1\log(1/x-k_2\log x)}{1/x-k_2\log x})}}{x^{k_1}}\\
&=\lim_{x \to 0^+} \frac{e^{\frac{1}{x}\cdot(-\frac{k_1\log(1/x-k_2\log x)}{1/x-k_2\log x})}}{x^{k_1}}\\
&=\lim_{x \to 0^+} \frac{e^{\frac{-k_1\log(1/x-k_2\log x)}{1-k_2x\log x}}}{x^{k_1}}\\
&=\lim_{x \to 0^+} \frac{e^{\frac{k_1}{1-k_2 x\log x}\log \frac{x}{1-k_2 x\log x}}}{x^{k_1}}\\
&=\lim_{x \to 0^+} \frac{(\frac{x}{1-k_2 x\log x})^{\frac{k_1}{1-k_2 x\log x}}}{x^{k_1}}\\
&=\lim_{x \to 0^+} x^{\frac{k_1}{1-k_2 x\log x}-k_1}\\
&=\lim_{x \to 0^+} e^{\frac{k_1 k_2 x\log^2 x}{1-k_2 x\log x}}\\
&=\lim_{x \to 0^+} e^{\frac{0}{1-0}}\\
&=1
\end{aligned}
\end{equation}
The last equation is because $\lim_{x \to 0}x\log^2 x=0$ and $\lim_{x \to 0}x\log x=0$. Thus we finish the proof. 
\end{proof}
\noindent\cref{lemma 2.1}. (Covariance comparison, restated) Under the assumption above, it hold that for all $t\in[T]$ and each agent $u\in\mathcal{U}$
\begin{equation}\nonumber
    \lambda I+\sum_{u'\in \mathcal{V}_{j(u)}}\bm{\Sigma}_{u', t}^{ser} \geq \frac{1}{\alpha_c}\bm{\Sigma}_{u, t}^{loc}
\end{equation}
Moreover, for any $1\leq t_1 < t_2 \leq T$ if agent $u$ is the only active agent from round $t_1$ to $t_2-1$ and agent $u$ only communicates with the server at round $t_1$, then for all $t_1+1\leq t \leq t_2-1$, it further holds that
\begin{equation}\nonumber
    \bm{\Sigma}_{u,t}\geq \frac{1}{1+|\mathcal{V}_{j(u)}|\alpha_c}\bm{\Sigma}_{\mathcal{V}_{j(u)}, t}^{all}
\end{equation}
In particularly, if $u$ is not the only activated agent from timestep $t_1+1$ to $t_2-1$, we have
\begin{equation}\nonumber
    \bm{\Sigma}_{u,t}\geq \frac{1}{1+|\mathcal{V}_{j(u)}|\alpha_c}\bm{\Sigma}_{\mathcal{V}_{j(u)}, t_1}^{all}
\end{equation}\par
\begin{proof}[\textit{Proof of \cref{lemma 2.1}}]
 Different from \citet{he2022simple} where a single arm is selected and no auxiliary communication is used, we prove the covariance comparison with cascading arms and auxiliary communication.
Let $t_1 \leq t$ denote the last timestep such that agent $u$ is active at $t_1$. If at $t_1$ the agent communicates with the server triggered by auxiliary communication or common communication, it will set $\bm{\Sigma}_{u, t}^{loc}$ as $0$. Note that agent $u$ is not activated from $t_1$ to $t$, then
\[
\lambda I+\sum_{u'\in \mathcal{V}_{j(u)}}\bm{\Sigma}_{u', t}^{ser} \geq 0=\frac{1}{\alpha_c}\bm{\Sigma}_{u, t}^{loc}
\]
If at $t_1$ there is no communication, then according to proof in \citet{he2022simple} it will have
\[
\lambda I+\sum_{u'\in \mathcal{V}_{j(u)}}\bm{\Sigma}_{u', t_1}^{ser} \geq \bm{\Sigma}_{u, t_1} \geq \frac{1}{\alpha_c}\bm{\Sigma}_{u, t_1}^{loc}
\]
Because all the $\bm{\Sigma}$ is nondecreasing with respect to timestep $t$, then at $t \geq t_1$
\[
\lambda I+\sum_{u'\in \mathcal{V}_{j(u)}}\bm{\Sigma}_{u', t}^{ser} \geq \lambda I+\sum_{u'\in \mathcal{V}_{j(u)}}\bm{\Sigma}_{u', t_1}^{ser} \geq \frac{1}{\alpha_c}\bm{\Sigma}_{u, t_1}^{loc} = \frac{1}{\alpha_c}\bm{\Sigma}_{u, t}^{loc}
\]
Thus we prove the first claim. \par
Then for any $1\leq t_1 < t_2 \leq T$ if agent $u$ is the only active agent from round $t_1$ to $t_2-1$ and agent $u$ only communicates with the server at round $t_1$, then due to the above inequality
\[
\lambda I+\sum_{u'\in \mathcal{V}_{j(u)}}\bm{\Sigma}_{u', t}^{ser} \geq \frac{1}{\alpha_c}\bm{\Sigma}_{u, t}^{loc}
\]
We sum all all the agents in cluster $\mathcal{V}_{j(u)}$ 
\[
\lambda I+\sum_{u'\in \mathcal{V}_{j(u)}}\bm{\Sigma}_{u', t}^{ser} \geq \frac{1}{|\mathcal{V}_{j(u)}|\alpha_c}\sum_{u'\in \mathcal{V}_{j(u)}}\bm{\Sigma}_{u', t}^{loc}
\] 
For all $t_1+1\leq t \leq t_2-1$ we have 
\[
\bm{\Sigma}_{u,t}=\bm{\Sigma}_{u,t_1}=\lambda I+\sum_{u'\in \mathcal{V}_{j(u)}}\bm{\Sigma}_{u', t_1}^{ser}=\lambda I+\sum_{u'\in \mathcal{V}_{j(u)}}\bm{\Sigma}_{u', t}^{ser}
\]
The first equation is because agent $u$ only communicate at $t_1$ and the last is because agent $u$ is the only activated one. Then
\begin{equation}\nonumber
\begin{aligned}
\bm{\Sigma}_{u,t} 
&\geq \frac{1}{1+|\mathcal{V}_{j(u)}|\alpha_c}\left (\lambda I+\sum_{u'\in \mathcal{V}_{j(u)}}\bm{\Sigma}_{u', t}^{ser}+ \sum_{u'\in \mathcal{V}_{j(u)}}\bm{\Sigma}_{u', t}^{loc} \right )\\
&= \frac{1}{1+|\mathcal{V}_{j(u)}|\alpha_c}\left ( \lambda I + \sum_{u'\in \mathcal{V}_{j(u)}}\bm{\Sigma}_{u', t}^{real} \right )\\
&=\frac{1}{1+|\mathcal{V}_{j(u)}|\alpha_c}\bm{\Sigma}_{\mathcal{V}_{j(u)}, t}^{all}
\end{aligned}
\end{equation}
With the analysis above, the last inequality of the lemma is obvious. Thus we prove this Lemma. 
\end{proof}
\noindent\cref{lemma 2.2}.
(Local concentration, restated) For any $\delta>0$, with probability at least $1-\delta$, for any $t\in[T],u\in\mathcal{U}$, it holds that
\begin{equation}\label{eq.27}
    \left\|(\alpha_c\lambda I+\bm{\Sigma}_{u, t}^{loc})^{-1}\bm{g}_{u, t}^{loc}\right\|_{\alpha_c\lambda I+\bm{\Sigma}_{u, t}^{loc}}\leq R\sqrt{d\log(1+\frac{KT}{\alpha_c \lambda d})+2\log(\frac{1}{\delta})+2\log(T^2{\left | \mathcal{U} \right |})}
\end{equation}
\begin{proof}[\textit{Proof of \cref{lemma 2.2}}]
According to Lemma 6.4 in \citet{he2022simple}, for any $1\leq t_1 < t_2 \leq T$ and each agent $u\in\mathcal{U}$, we define
\[
\bm{\Sigma}_{u,t_1,t_2}=\alpha_c\lambda I+\sum_{i=t_1+1}^{t_2}\mathbb{I}\{u_i=u\} \sum_{k=1}^{O_i^{(u)}} \bm{x}_{i, a_{k, i}} \bm{x}_{i, a_{k, i}}^T \quad \bm{g}_{u,t_1,t_2}=\sum_{i=t_1+1}^{t_2} \mathbb{I}\{u_i=u\}\sum_{k=1}^{O_i^{(u)}} \eta_{i,a_{k,i}}^{(u)}\bm{x}_{i, a_{k, i}} 
\]
By Theorem 1 in \citet{10.5555/2986459.2986717}, if we fix this agent $u$ and the timestep $t_1,t_2$, then  with probability at least $1-\delta/(T^2{\left | \mathcal{U} \right |})$, we have
\begin{equation}\label{eq.28}
\left\|(\bm{\Sigma}_{u,t_1,t_2})^{-1}\bm{g}_{u,t_1,t_2}\right\|_{\bm{\Sigma}_{u,t_1,t_2}}=\left\|\bm{g}_{u,t_1,t_2}\right\|_{(\bm{\Sigma}_{u,t_1,t_2})^{-1}}\leq R\sqrt{2\log(\frac{det(\bm{\Sigma}_{u,t_1,t_2})^{1/2}det(\alpha_c\lambda I)^{-1/2}}{\delta})+2\log(T^2{\left | \mathcal{U} \right |})}
\end{equation}
According to Lemma A.5 in \citet{li2016contextual}
\[
det(\bm{\Sigma}_{u,t_1,t_2}) \leq (\alpha_c \lambda+K(t_2-t_1)/d)^d \leq (\alpha_c \lambda+KT/d)^d
\]
Plugging this into \cref{eq.28}, we have
\[
\left\|\bm{\Sigma}_{u,t_1,t_2}^{-1}\bm{g}_{u,t_1,t_2}\right\|_{\bm{\Sigma}_{u,t_1,t_2}}\leq R\sqrt{d\log(1+\frac{KT}{\alpha_c \lambda d})+2\log(\frac{1}{\delta})+2\log(T^2{\left | \mathcal{U} \right |})}
\]
Let
\[\beta'(\delta)=R\sqrt{d\log(1+\frac{KT}{\alpha_c \lambda d})+2\log(\frac{1}{\delta})+2\log(T^2{\left | \mathcal{U} \right |})}\]
We define the event
\[\mathcal{A}=\{\forall u\in \mathcal{U},\ \forall t_1<t_2,\ \left\|\bm{\Sigma}_{u,t_1,t_2}^{-1}\bm{g}_{u,t_1,t_2}\right\|_{\bm{\Sigma}_{u,t_1,t_2}}\leq \beta'(\delta)\}\]
Then the opposite event of $\mathcal{A}$ is
\[\mathcal{A}^c=\{\exists u\in \mathcal{U},\ \exists t_1<t_2,\ s.t.\left\|\bm{\Sigma}_{u,t_1,t_2}^{-1}\bm{g}_{u,t_1,t_2}\right\|_{\bm{\Sigma}_{u,t_1,t_2}}>\beta'(\delta)\}\]
With the union bound, the probability that event $\mathcal{A}$ happens can be bounded by
\begin{equation}\nonumber
\begin{aligned}
    \mathbf{P}(\mathcal{A})
    &=1-\mathbf{P}(\mathcal{A}^c)\\
    &\geq 1-\sum_{u=1}^{\left | \mathcal{U} \right |}\sum_{1\leq t_1<t_2\leq T}\mathbf{P}(\{\text{fix this agent}\ u\ \text{and the timestep}\ t_1,t_2,\ \left\|\bm{\Sigma}_{u,t_1,t_2}^{-1}\bm{g}_{u,t_1,t_2}\right\|_{\bm{\Sigma}_{u,t_1,t_2}}>\beta'(\delta)\}\})\\
    &\geq 1-{\left | \mathcal{U} \right |}\cdot T^2\cdot\frac{\delta}{T^2{\left | \mathcal{U} \right |}}\\
    &=1-\delta
\end{aligned}
\end{equation}
Finally, we let $t_1=N_u(t),t_2=t$, thus finishing the proof of this lemma.
\end{proof}
\cref{lemma 3.1}. (Local confidence bound, restated) Under our algorithm and setting, when the cluster is correct, then with probability at least $1-\delta$, for each $t \in [T]$ and every $u \in \mathcal{U}$ with respect to each cluster $\mathcal{V}_{j(u)}$, it holds that $\left\|\hat{\bm{\theta}}_{u,t}-\bm{\theta}^{j(u)}\right\|_{\bm{\Sigma}_{u,t}}\leq \beta(\delta)$. \par
\begin{proof}[\textit{Proof of \cref{lemma 3.1}}]
 Inspired by the proof of the Lemma B.1 in \citet{he2022simple} and observe that $\bm{\Sigma}_{u,t}$ and $\hat{\bm{\theta}}_{u,t}$ will not change if no communication happens, we only need to consider the case where the agent only communicates with the server at timesteps before $t_1$, i.e. $t \geq t_1+1$ and $N_u(t)=t_1$. If at $t=t_2$ communication happens, according to our algorithm, the agent will use the last communication information at $t_1$ to select action instead of this new one at $t_2$. For analysis, we focus on the end of each timestep $t$, i.e. $t+1$, so we take $t \geq t_1+1$. Remark that our analysis holds for both auxiliary communication and common communication. Then at the end of timestep $t$, the covariance matrix, and the feedback data are given by
\begin{equation}\label{eq.29}
\bm{\Sigma}_{u,t}=\bm{\Sigma}_{u,t_1}=\lambda I+\sum_{u'\in \mathcal{V}_{j(u)}}\bm{\Sigma}_{u', t_1}^{ser},\ \bm{b}_{u,t}=\bm{b}_{u,t_1}=\sum_{u'\in \mathcal{V}_{j(u)}}\bm{b}_{u', t_1}^{ser}
\end{equation}
For the estimated vector, we have
\begin{equation}\nonumber
\begin{aligned}
\hat{\bm{\theta}}_{u,t}
&=(\bm{\Sigma}_{u,t})^{-1}\bm{b}_{u,t}\\
&=\left( \lambda I+\sum_{u'\in \mathcal{V}_{j(u)}}\bm{\Sigma}_{u', t_1}^{ser}\right)^{-1}\left (\sum_{u'\in \mathcal{V}_{j(u)}}\bm{b}_{u', t_1}^{ser} \right )\\
&=\left( \lambda I+\sum_{u'\in \mathcal{V}_{j(u)}}\bm{\Sigma}_{u', t_1}^{ser}\right)^{-1}\left (\sum_{u'\in \mathcal{V}_{j(u)}}\bm{g}_{u', t_1}^{ser}+\sum_{u'\in \mathcal{V}_{j(u)}}\bm{\Sigma}_{u', t_1}^{ser}\bm{\theta}^{j(u)}\right )\\
&=\bm{\theta}^{j(u)}-\lambda\left( \lambda I+\sum_{u'\in \mathcal{V}_{j(u)}}\bm{\Sigma}_{u', t_1}^{ser}\right)^{-1}\bm{\theta}^{j(u)}+\left( \lambda I+\sum_{u'\in \mathcal{V}_{j(u)}}\bm{\Sigma}_{u', t_1}^{ser}\right)^{-1}\sum_{u'\in \mathcal{V}_{j(u)}}\bm{g}_{u', t_1}^{ser}\\
&=\bm{\theta}^{j(u)}-\lambda(\bm{\Sigma}_{u,t})^{-1}\bm{\theta}^{j(u)}+\sum_{u'\in \mathcal{V}_{j(u)}}(\bm{\Sigma}_{u,t})^{-1}\bm{g}_{u', t_1}^{ser}
\end{aligned}
\end{equation}
Then we have
\begin{equation}\nonumber
\begin{aligned}
\left\|\hat{\bm{\theta}}_{u,t}-\bm{\theta}^{j(u)}\right\|_{\bm{\Sigma}_{u,t}}
&\leq \lambda \left\|\bm{\theta}^{j(u)}\right\|_{(\bm{\Sigma}_{u,t})^{-1}}+ \left\|(\bm{\Sigma}_{u,t})^{-1}\sum_{u'\in \mathcal{V}_{j(u)}}\bm{g}_{u', t_1}^{ser}\right\|_{\bm{\Sigma}_{u,t}}\\
&\leq \sqrt{\lambda} \left\|\bm{\theta}^{j(u)}\right\|_2+ \left\|(\bm{\Sigma}_{u,t})^{-1}\sum_{u'\in \mathcal{V}_{j(u)}}\bm{g}_{u', t_1}^{ser}\right\|_{\bm{\Sigma}_{u,t}}\\
&\leq \sqrt{\lambda}S+\left\|(\bm{\Sigma}_{u,t})^{-1}\sum_{u'\in \mathcal{V}_{j(u)}}\bm{g}_{u', t_1}^{ser}\right\|_{\bm{\Sigma}_{u,t}}\\
&= \sqrt{\lambda}S+\left\|(\bm{\Sigma}_{u,t})^{-1}\sum_{u'\in \mathcal{V}_{j(u)}}\left(\bm{g}_{u', t_1}^{real}-\bm{g}_{u', t_1}^{loc} \right)\right\|_{\bm{\Sigma}_{u,t}}\\
&=\sqrt{\lambda}S+\left\|(\bm{\Sigma}_{u,t})^{-1}\bm{g}_{\mathcal{V}_{j(u)}, t_1}^{all}-(\bm{\Sigma}_{u,t})^{-1}\sum_{u'\in \mathcal{V}_{j(u)}}\bm{g}_{u', t_1}^{loc}\right\|_{\bm{\Sigma}_{u,t}}\\
&\leq \sqrt{\lambda}S+\underbrace{\left\|\bm{g}_{\mathcal{V}_{j(u)}, t_1}^{all}\right\|_{(\bm{\Sigma}_{u,t})^{-1}}}_A+\sum_{u'\in \mathcal{V}_{j(u)}}\underbrace{\left\|\bm{g}_{u', t_1}^{loc}\right\|_{(\bm{\Sigma}_{u,t})^{-1}}}_B\\
\end{aligned}
\end{equation}
The second inequality is because $\bm{\Sigma}_{u,t} \geq \lambda I$ and the third is because $\left\|\bm{\theta}^{j(u)}\right\|_2 \leq S$ ($S=1$ in our paper) for every cluster, then we bound the term (A) and (B) respectively. For term (A) with probability at least $1-\delta$:
\begin{equation}\label{eq.30}
\begin{aligned}
\left\|\bm{g}_{\mathcal{V}_{j(u)}, t_1}^{all}\right\|_{\bm{\Sigma}_{u,t}^{-1}} 
&\leq \sqrt{1+|\mathcal{V}_{j(u)}|\alpha_c}\left\|\bm{g}_{\mathcal{V}_{j(u)}, t_1}^{all}\right\|_{(\bm{\Sigma}_{\mathcal{V}_{j(u)},t_1}^{all})^{-1}}\\ 
&\leq \sqrt{1+|\mathcal{V}_{j(u)}|\alpha_c}R\sqrt{d\log(1+\frac{KT}{\lambda d})+2\log(\frac{1}{\delta})}\\
&\leq \sqrt{1+{\left | \mathcal{U} \right |}\alpha_c}R\sqrt{d\log(1+\frac{KT}{\lambda d})+2\log(\frac{1}{\delta})}
\end{aligned}
\end{equation}
The first inequality is from \cref{lemma 2.1} and the second is from \cref{lemma 1}.\par
For term (B), in our case, $N_u(t)=t_1$, so
\begin{equation}\label{eq.31}
\bm{\Sigma}_{u,t}=\bm{\Sigma}_{u,t_1}=\lambda I+\sum_{u'\in \mathcal{V}_{j(u)}}\bm{\Sigma}_{u', t_1}^{ser} \geq \frac{1}{\alpha_c}\bm{\Sigma}_{u, t_1}^{loc}
\end{equation}
This implies that
\begin{equation}\label{eq.32}
2\alpha_c\lambda I+\alpha_c\sum_{u'\in \mathcal{V}_{j(u)}}\bm{\Sigma}_{u', t_1}^{ser} \geq \alpha_c\lambda I+\bm{\Sigma}_{u, t_1}^{loc}
\end{equation}
\begin{equation}\label{eq.33}
\lambda I+\frac{1}{2}\sum_{u'\in \mathcal{V}_{j(u)}}\bm{\Sigma}_{u', t_1}^{ser} \geq \frac{1}{2\alpha_c}(\alpha_c\lambda I+\bm{\Sigma}_{u, t_1}^{loc})
\end{equation}
This further implies
\begin{equation}\label{eq.34}
\bm{\Sigma}_{u,t} \geq \frac{1}{2\alpha_c}(\alpha_c\lambda I+\bm{\Sigma}_{u, t_1}^{loc})
\end{equation}
With the above inequality and \cref{lemma 2.2}, with probability at least $1-\delta$, we can bound (B) as follows:
\begin{equation}\label{eq.35}
\begin{aligned}
\left\|\bm{g}_{u', t_1}^{loc}\right\|_{(\bm{\Sigma}_{u,t})^{-1}}\leq \sqrt{2\alpha_c}\left\|\bm{g}_{u', t_1}^{loc}\right\|_{(\alpha_c\lambda I+\bm{\Sigma}_{u, t_1}^{loc})^{-1}}\leq \sqrt{2\alpha_c}R\sqrt{d\log(1+\frac{KT}{\alpha_c \lambda d})+2\log(\frac{1}{\delta})+2\log(T^2{\left | \mathcal{U} \right |})}
\end{aligned}
\end{equation}
Combining the above inequalities, we have
\begin{equation}\label{eq.36}
\begin{aligned}
\left\|\hat{\bm{\theta}}_{u,t}-\bm{\theta}^{j(u)}\right\|_{\bm{\Sigma}_{u,t}}
&\leq \sqrt{\lambda}S+R(\sqrt{1+{\left | \mathcal{U} \right |}\alpha_c}+{\left | \mathcal{U} \right |}\sqrt{2\alpha_c})\sqrt{d\log(1+\frac{KT}{\min(1,\alpha_c) \lambda d})+2\log(\frac{1}{\delta})+4\log(T{\left | \mathcal{U} \right |})}
\end{aligned}
\end{equation}
And this inequality holds with probability at least $1-\delta/{\left | \mathcal{U} \right |}$, which finishes our proof.
\end{proof}
\subsection{Communication Cost Analysis}
\begin{proof}[\textit{Proof of Theorem 2}]
 Our communication cost consists of two parts: common communication and auxiliary communication. Similar to \citet{he2022simple}, the common communication can be bounded by
\begin{equation}\label{eq.37}
\begin{aligned}
2\log 2 \cdot d({\left | \mathcal{U} \right |}+1/\alpha_c)\log(1+KT/(\lambda d))
\end{aligned}
\end{equation}
For auxiliary communication, it can be bound in expectation by
\begin{equation}\label{eq.38}
\begin{aligned}
\sum_{t=1}^{T}p_t 
&\leq k\log T\sum_{t=1}^{T}\frac{1}{t} \\
&\leq k\log T(1+\sum_{t=2}^{T}\log(1+\frac{1}{t-1}))\\
&= k\log T(1+\sum_{t=1}^{T-1}\log(\frac{t+1}{t}))\\
&= k\log T(1+\log(T-1))\\
&\leq k(\log T+\log^2 T)
\end{aligned}
\end{equation}
Putting the above two inequalities together concludes Theorem 2.
\end{proof}
\section{Details for ForceComm Communication Protocol}
In this section, we propose another algorithm: ForceComm, which forces the agent to communicate with the server in its $1, 2, 4, 8, ..., 2^n$-th arrival to achieve sufficient communication rounds for user clustering. We outline the algorithm in \cref{alg:forcemain} and discuss its regret and communication cost in the following. We can see that ForceComm only changes \cref{alg:main} and the LocalAgent procedure (\cref{alg:local}).\par
\begin{algorithm}[htb] 
    \caption{ForceComm}\label{alg:forcemain} 
    \label{alg:Framwork} 
    \begin{algorithmic}[1] 
    \REQUIRE ~~\\ 
	Input: $\lambda > K$, $\alpha_c,\alpha_d > 0$\\
    Initialize server: 
    complete graph $G_0=(\mathcal{U}, E_0)$ over agents. $T_{u,0}^{ser}=0$, $\bm{b}_{u,0}^{ser}=\bm{b}_{u,0}^{clu}=\bm{0}_{d\times 1}$, $\hat{\bm{\theta}}_{u,0}^{ser}=\hat{\bm{\theta}}_{u,0}^{clu}=\bm{0}_{d\times 1}$, $\bm{\Sigma}_{u,0}^{ser}=\bm{\Sigma}_{u,0}^{clu}=\bm{0}_{d \times d}$, $count_{u,t}=0$\\
    Initialize local agent: $T_{u,0}^{loc}=0$, $\bm{b}_{u,0}=\bm{b}_{u,0}^{loc}=\bm{0}_{d\times 1}$, $\hat{\bm{\theta}}_{u,0}=\bm{0}_{d\times 1}$, $\bm{\Sigma}_{u,0}=\bm{\Sigma}_{u,0}^{loc}=\bm{0}_{d \times d}$\\
    \ENSURE ~~\\ 
    \FOR{round $t=1,...,T$}
    \STATE Agent $u_{t}$ is activated from an uniform distribution\label{line:forcemain_arrive}
	\STATE ComInd $\gets$\texttt{ForceComm-LocalAgent}($u_t, p_t, t, \alpha_c$)\label{line:forcemain_call_sub}
    \IF{ComInd$==$1}
    \STATE Agent $u_t$ sends $\bm{\Sigma}_{u_t,t}^{loc}$\label{line:forcemain_send}, $T_{u_t,t}^{loc}$ and $\bm{b}_{u_t,t}^{loc}$ to server 
	\STATE Run \texttt{Server}($u_t, t, \alpha_d$)
	\STATE Server sends $\bm{\Sigma}^{clu}_{u_t,t+1}$ and $\hat{\bm{\theta}}^{clu}_{u_t,t+1}$ back to agent $u_t$\label{line:forceserver_send}
 \STATE Agent $u_t$ update:  $\bm{\Sigma}^{loc}_{u_t,t+1}=\bm{0}_{d \times d}$, $\bm{b}^{loc}_{u_t,t+1}=\bm{0}_{d \times 1}$, $T_{u_t,t}^{loc}=0$, 
 $\bm{\Sigma}_{u_t,t+1}=\bm{\Sigma}^{clu}_{u_t,t+1}$, $\hat{\bm{\theta}}_{u_t,t+1}=\hat{\bm{\theta}}^{clu}_{u_t,t+1}$
    \ENDIF
    \ENDFOR 
    \end{algorithmic}
    \end{algorithm}
\begin{algorithm}[htb] 
    \caption{\texttt{ForceComm-LocalAgent}($u_t, p_t, t, \alpha_c$)} 
    \label{alg:forcelocal} 
    \begin{algorithmic}[1]
	\STATE  $count_{u_t,t}=count_{u_t,t-1}+1$
	\STATE  Receives context $\bm{x}_{t,i}, \forall i\in \mathcal{I}$\label{line:forcelocal_context}
	\STATE 
    $U_{t}(i) \gets\min\left\{\hat{\bm{\theta}}_{{u_t},t}^{T}\bm{x}_{t,i}+\beta(\delta)\left \| \bm{x}_{t,i} \right \|_{\bm{\Sigma}_{u_{t},t}^{-1}}, 1\right\}, \forall i\in \mathcal{I}$\label{line:forcelocal_ucb}
    \STATE $\bm{a}_{t} \gets oracle(U_{t}(1),\ldots,U_{t}(I))$\label{line:forcelocal_rank}
    \STATE Play $\bm{a}_{t}$ and observe $O_{t}, w_{t}(a_{k,t}),k \le O_{t}$ and receive reward $f(\bm{a}_{t},\bm{w}_{t})$
    \STATE $T_{u_t,t}^{loc}=T_{u_t,t-1}^{loc}+O_{t}$ \label{line:local_update_start}
    \STATE $\bm{\Sigma}_{u_{t},t}^{loc} = \bm{\Sigma}_{u_{t},t-1}^{loc}+\sum_{k=1}^{O_{t}}\bm{x}_{t,a_{k,t}}\bm{x}_{t,a_{k,t}}^{T}$, $\bm{b}_{u_{t},t}^{loc} = \bm{b}_{u_{t},t-1}^{loc}+\sum_{k=1}^{O_{t}}w_{t}(a_{k,t})\bm{x}_{t,a_{k,t}}$\label{line:forcelocal_update_end}
    \FOR{$u \ne u_t$}
    \STATE $T_{u,t}^{loc}=T_{u,t-1}^{loc}$, $count_{u,t}=count_{u,t-1}$
    \STATE $\bm{\Sigma}^{loc}_{u,t}=\bm{\Sigma}^{loc}_{u,t-1}$, $\bm{b}^{loc}_{u,t}=\bm{b}^{loc}_{u,t-1}$
    \STATE $\bm{\Sigma}_{u,t+1}=\bm{\Sigma}_{u,t}$, $\bm{b}_{u,t+1}=\bm{b}_{u,t}$, $\hat{\bm{\theta}}_{u,t+1}=\hat{\bm{\theta}}_{u,t}$
    \ENDFOR
    \IF{det($\bm{\Sigma}_{u_{t},t}+\bm{\Sigma}_{u_t,t}^{loc}$) $>(1+\alpha_c)$det($\bm{\Sigma}_{u_{t},t}$) or there exist $n \in \mathbb{N}_+$ such that $count_{u_t,t}=2^n$} \label{line:forcelocal_com_det}
    \STATE {\bf Return:} 1
    \ELSE
    \STATE $\bm{\Sigma}_{u_t,t+1}=\bm{\Sigma}_{u_t,t}$, $\bm{b}_{u_t,t+1}=\bm{b}_{u_t,t}$, $\hat{\bm{\theta}}_{u_t,t+1}=\hat{\bm{\theta}}_{u_t,t}$
    \STATE $G_{t+1}=G_t$
    \STATE {\bf Return:} 0
    \ENDIF
    \end{algorithmic}
    \end{algorithm}
The analysis of ForceComm is different from that of FedC$^3$UCB-H only in the quantity of $T_1$. For each agent $u$ we define that:
\[
D_u=\{u \text{ communicated with server at least once during } T_0\sim T_0+T_1\}
\]
\[
D_{all}=\{\text{all agent communicated with server at least once during }T_0\sim T_0+T_1\}
\]
Firstly, we can conclude that for any $T>0$, it holds that $2^{\left \lfloor log_2 T \right \rfloor}\leq T < 2^{\left \lfloor log_2 T \right \rfloor+1}\leq 2T$, which means that for any $T>0$, there always exist a positive integer $k$ such that $2^k$ contained in $T\sim 2T$.\par
According to the communication of ForceComm, we have:
\begin{equation}\nonumber
\begin{aligned}
P(D_u)
&=P(\text{there exist }k, \text{ such that }count_{u,T_0+T_1}\geq 2^k > count_{u,T_0})\\
&=P(count_{u,T_0+T_1} \geq 2^{\left \lfloor log_2 count_{u,T_0} \right \rfloor +1}>count_{u,T_0}\geq 2^{\left \lfloor log_2 count_{u,T_0} \right \rfloor})\\
&=P(count_{u,T_0+T_1}-count_{u,T_0}\geq 2^{\left \lfloor log_2 count_{u,T_0} \right \rfloor})\\
&=P(\text{the activate time of agent }u\geq 2^{\left \lfloor log_2 count_{u,T_0} \right \rfloor} \text{ during }T_0\sim T_0+T_1)\\
&\geq P(\text{the activate time of agent }u\geq count_{u,T_0} \text{ during }T_0\sim T_0+T_1)
\end{aligned}
\end{equation}
The last inequality is because $count_{u,T_0}\geq 2^{\left \lfloor log_2 count_{u,T_0} \right \rfloor}$.\par
Since $\sum_{u}count_{u,T_0}=T_0$. Because $count_{u,T_0}<T_0$, we have
\[
P(D_u^c)\leq P(\text{the activate time of agent }u \text{ less than } T_0 \text{ during }T_0\sim T_0+T_1)
\]
So we have
\begin{equation}\nonumber
\begin{aligned}
P(D_{all}^c)\leq \sum_{u}P(D_u^c)\leq \sum_{u}P(\text{the activate time of agent }u\text{ less than } T_0 \text{ during }T_0\sim T_0+T_1)
\end{aligned}
\end{equation}
According to Lemma 8. in \cite{li2018online}, if we take $T_1\geq 16{\left | \mathcal{U} \right |}\log{\frac{4T}{\delta}}+4{\left | \mathcal{U} \right |}T_0$ then we have
\[
P(D_{all})\geq P(\text{the activated number of every agents greater than }T_0 \text{ during }T_0\sim T_0+T_1)>1-\frac{\delta}{4}
\]
According to $T_0$, we have
\[
T_1=O({\left | \mathcal{U} \right |}^2\log{T/\delta})
\]
Replacing this with the definition of $T_1$ of FedC$^3$UCB-H and let $\delta={\left | \mathcal{U} \right |}^2/T$, we achieve the cumulative regret of ForceComm:
\begin{equation}\nonumber
\begin{split}
2dK\left | \mathcal{U} \right |\log(1+ KT/\lambda)+4\beta(\delta)\sqrt{J+\left | \mathcal{U} \right |\alpha_c}\sqrt{dKT\log(1+KT/\lambda)}\\+O(\left | \mathcal{U} \right |(\frac{d}{\gamma^2\tilde{\lambda}_x}+\frac{1}{\tilde{\lambda}^2_x})\log{T})
\end{split}
\end{equation}
Similarly, the communication cost can be bound by:
\[
2\log 2 \cdot d({\left | \mathcal{U} \right |}+1/\alpha_c)\log(1+KT/(\lambda d))+\left | \mathcal{U} \right |\log_2{T}
\]
\end{document}